\documentclass{article}
\PassOptionsToPackage{table}{xcolor}
\let\OriginalAddContentsLine\addcontentsline
\usepackage{iclr2027_conference,times}
\let\addcontentsline\OriginalAddContentsLine
\usepackage[utf8]{inputenc}
\usepackage[T1]{fontenc}
\usepackage{amsmath,amssymb,amsthm,mathtools}
\usepackage{booktabs,tabularx,array,enumitem}
\usepackage{flafter}
\usepackage{tikz}
\usetikzlibrary{arrows.meta,positioning,calc}
\usepackage{microtype}
\usepackage{url,hyperref}
\usepackage[capitalize,noabbrev]{cleveref}
\definecolor{ink}{HTML}{183247}
\definecolor{blueaccent}{HTML}{2878A1}
\definecolor{bluewash}{HTML}{EEF5FA}
\definecolor{greenaccent}{HTML}{278572}
\definecolor{greenwash}{HTML}{EDF7F2}
\definecolor{amberaccent}{HTML}{B27A24}
\definecolor{amberwash}{HTML}{FFF6E8}
\definecolor{purpleaccent}{HTML}{7960A0}
\definecolor{purplewash}{HTML}{F2EEF8}
\hypersetup{colorlinks=true,linkcolor=blueaccent,citecolor=blueaccent,
  urlcolor=blueaccent,pdftitle={Learning Masked Diffusion Samplers from Energy}}
\newif\ifappendixcontents
\appendixcontentsfalse
\let\AppendixAddContentsLine\addcontentsline
\renewcommand{\addcontentsline}[3]{%
  \ifappendixcontents\AppendixAddContentsLine{#1}{#2}{#3}\fi}
\newtheorem{theorem}{Theorem}[section]
\newtheorem{proposition}[theorem]{Proposition}
\newtheorem{lemma}[theorem]{Lemma}

\theoremstyle{definition}

\newcommand{\X}{\mathcal X}
\newcommand{\E}{\mathbb E}

\newcommand{\one}{\mathbf 1}
\newcommand{\dd}{\mathrm d}

\crefname{lemma}{lemma}{lemmas}
\Crefname{lemma}{Lemma}{Lemmas}

\DeclareMathOperator{\ESS}{ESS}

\DeclareMathOperator{\osc}{osc}
\newcolumntype{Y}{>{\raggedright\arraybackslash}X}
\setlist[itemize]{leftmargin=1.5em,itemsep=2pt,topsep=3pt}
\setlist[enumerate]{leftmargin=1.7em,itemsep=2pt,topsep=3pt}
\title{Fixed-point neural samplers on discrete spaces}
\author{Jiajun He\thanks{Equal contribution.  Work done during JH, DB and MC's internship at Microsoft Research New England, mentored by YD and CDE.  $^\dagger$Equal supervision.}\\University of Cambridge \\\texttt{jh2383@cam.ac.uk}\And Denis Blessing$^*$ \\Karlsruhe Institute of Technology\\\texttt{jl8142@kit.edu}\And  Mouyang Chen \\MIT \\\texttt{vipandyc@mit.edu} \And   Yuanqi Du$^\dagger$ \\Microsoft Research New England\\\texttt{yuanqidu@microsoft.com} \And  Carles Domingo-Enrich$^\dagger$ \\Microsoft Research New England\\\texttt{carlesd@microsoft.com}}

\usepackage{placeins}  %
\usepackage{pgfplots}
\pgfplotsset{compat=1.18}
\usepgfplotslibrary{groupplots}

\usepackage{comment}

\newcommand{\IsingOrdDSIZ}{\best{$0.75_{\pm0.45}$}}

\newcommand{\IsingOrdMDNSZ}{$693_{\pm0}$}
\newcommand{\IsingOrdPDNSZ}{$693_{\pm0}$}
\newcommand{\IsingOrdASBSZ}{--}
\newcommand{\IsingOrdASBSMZ}{$825_{\pm413}$}
\newcommand{\IsingOrdLEAPSZ}{$10680_{\pm2968}$}
\newcommand{\IsingOrdDSIESS}{$.979_{\pm.011}$}

\newcommand{\IsingOrdMDNSESS}{$.983_{\pm.007}$}
\newcommand{\IsingOrdPDNSESS}{\best{$.988_{\pm.003}$}}
\newcommand{\IsingOrdASBSESS}{--}
\newcommand{\IsingOrdASBSMESS}{$.003_{\pm.003}$}
\newcommand{\IsingOrdLEAPSESS}{$.000_{\pm.000}$}
\newcommand{\IsingOrdDSIMTV}{\best{$.014_{\pm.007}$}}

\newcommand{\IsingOrdMDNSMTV}{$.500_{\pm.000}$}
\newcommand{\IsingOrdPDNSMTV}{$.500_{\pm.000}$}
\newcommand{\IsingOrdASBSMTV}{$.447_{\pm.075}$}
\newcommand{\IsingOrdASBSMMTV}{$.500_{\pm.000}$}
\newcommand{\IsingOrdLEAPSMTV}{$.023_{\pm.007}$}
\newcommand{\IsingCritDSIZ}{\best{$1.63_{\pm0.27}$}}

\newcommand{\IsingCritMDNSZ}{$1.93_{\pm1.00}$}
\newcommand{\IsingCritPDNSZ}{\best{$1.47_{\pm0.61}$}}
\newcommand{\IsingCritASBSZ}{--}
\newcommand{\IsingCritASBSMZ}{$821_{\pm116}$}
\newcommand{\IsingCritLEAPSZ}{$2536_{\pm146}$}
\newcommand{\IsingCritDSIESS}{\best{$.980_{\pm.001}$}}

\newcommand{\IsingCritMDNSESS}{$.925_{\pm.006}$}
\newcommand{\IsingCritPDNSESS}{$.917_{\pm.008}$}
\newcommand{\IsingCritASBSESS}{--}
\newcommand{\IsingCritASBSMESS}{$.002_{\pm.001}$}
\newcommand{\IsingCritLEAPSESS}{$.006_{\pm.003}$}
\newcommand{\IsingCritDSIMTV}{\best{$.015_{\pm.004}$}}

\newcommand{\IsingCritMDNSMTV}{\best{$.016_{\pm.006}$}}
\newcommand{\IsingCritPDNSMTV}{\best{$.016_{\pm.008}$}}
\newcommand{\IsingCritASBSMTV}{$.042_{\pm.028}$}
\newcommand{\IsingCritASBSMMTV}{$.360_{\pm.198}$}
\newcommand{\IsingCritLEAPSMTV}{$.027_{\pm.014}$}
\newcommand{\IsingDisDSIZ}{\best{$1.07_{\pm0.35}$}}

\newcommand{\IsingDisMDNSZ}{$1.40_{\pm0.79}$}
\newcommand{\IsingDisPDNSZ}{$2.86_{\pm1.86}$}
\newcommand{\IsingDisASBSZ}{--}
\newcommand{\IsingDisASBSMZ}{$601_{\pm159}$}
\newcommand{\IsingDisLEAPSZ}{$660_{\pm1}$}
\newcommand{\IsingDisDSIESS}{\best{$.995_{\pm.000}$}}

\newcommand{\IsingDisMDNSESS}{$.961_{\pm.002}$}
\newcommand{\IsingDisPDNSESS}{$.956_{\pm.001}$}
\newcommand{\IsingDisASBSESS}{--}
\newcommand{\IsingDisASBSMESS}{$.002_{\pm.002}$}
\newcommand{\IsingDisLEAPSESS}{$.985_{\pm.001}$}
\newcommand{\IsingDisDSIMTV}{\best{$.006_{\pm.004}$}}

\newcommand{\IsingDisMDNSMTV}{$.012_{\pm.004}$}
\newcommand{\IsingDisPDNSMTV}{$.014_{\pm.006}$}
\newcommand{\IsingDisASBSMTV}{\best{$.007_{\pm.004}$}}
\newcommand{\IsingDisASBSMMTV}{\best{$.010_{\pm.004}$}}
\newcommand{\IsingDisLEAPSMTV}{\best{$.007_{\pm.004}$}}
\newcommand{\PottsOrdDSIZ}{\best{$9.44_{\pm1.34}$}}

\newcommand{\PottsOrdMDNSZ}{$1089_{\pm2}$}
\newcommand{\PottsOrdPDNSZ}{$628_{\pm325}$}
\newcommand{\PottsOrdASBSZ}{--}
\newcommand{\PottsOrdASBSMZ}{$814_{\pm334}$}
\newcommand{\PottsOrdLEAPSZ}{$7693_{\pm832}$}
\newcommand{\PottsOrdDSIESS}{\best{$.974_{\pm.007}$}}

\newcommand{\PottsOrdMDNSESS}{$.944_{\pm.041}$}
\newcommand{\PottsOrdPDNSESS}{$.964_{\pm.008}$}
\newcommand{\PottsOrdASBSESS}{--}
\newcommand{\PottsOrdASBSMESS}{$.004_{\pm.002}$}
\newcommand{\PottsOrdLEAPSESS}{$.000_{\pm.000}$}
\newcommand{\PottsOrdDSIMTV}{\best{$.014_{\pm.011}$}}

\newcommand{\PottsOrdMDNSMTV}{$.667_{\pm.000}$}
\newcommand{\PottsOrdPDNSMTV}{$.444_{\pm.157}$}
\newcommand{\PottsOrdASBSMTV}{$.441_{\pm.223}$}
\newcommand{\PottsOrdASBSMMTV}{$.584_{\pm.118}$}
\newcommand{\PottsOrdLEAPSMTV}{$.032_{\pm.011}$}
\newcommand{\PottsCritDSIZ}{$13_{\pm2}$}

\newcommand{\PottsCritMDNSZ}{\best{$11_{\pm2}$}}
\newcommand{\PottsCritPDNSZ}{\best{$8.31_{\pm4.32}$}}
\newcommand{\PottsCritASBSZ}{--}
\newcommand{\PottsCritASBSMZ}{$187_{\pm87}$}
\newcommand{\PottsCritLEAPSZ}{$1411_{\pm366}$}
\newcommand{\PottsCritDSIESS}{\best{$.971_{\pm.000}$}}

\newcommand{\PottsCritMDNSESS}{$.874_{\pm.007}$}
\newcommand{\PottsCritPDNSESS}{$.879_{\pm.003}$}
\newcommand{\PottsCritASBSESS}{--}
\newcommand{\PottsCritASBSMESS}{$.006_{\pm.002}$}
\newcommand{\PottsCritLEAPSESS}{$.004_{\pm.003}$}
\newcommand{\PottsCritDSIMTV}{\best{$.006_{\pm.003}$}}

\newcommand{\PottsCritMDNSMTV}{$.017_{\pm.009}$}
\newcommand{\PottsCritPDNSMTV}{$.013_{\pm.009}$}
\newcommand{\PottsCritASBSMTV}{$.129_{\pm.026}$}
\newcommand{\PottsCritASBSMMTV}{$.290_{\pm.163}$}
\newcommand{\PottsCritLEAPSMTV}{$.019_{\pm.011}$}
\newcommand{\PottsDisDSIZ}{\best{$1.03_{\pm0.28}$}}

\newcommand{\PottsDisMDNSZ}{$1.40_{\pm0.92}$}
\newcommand{\PottsDisPDNSZ}{$1.33_{\pm0.86}$}
\newcommand{\PottsDisASBSZ}{--}
\newcommand{\PottsDisASBSMZ}{$91_{\pm40}$}
\newcommand{\PottsDisLEAPSZ}{$163_{\pm0}$}
\newcommand{\PottsDisDSIESS}{$.997_{\pm.000}$}

\newcommand{\PottsDisMDNSESS}{$.957_{\pm.004}$}
\newcommand{\PottsDisPDNSESS}{$.957_{\pm.005}$}
\newcommand{\PottsDisASBSESS}{--}
\newcommand{\PottsDisASBSMESS}{$.016_{\pm.002}$}
\newcommand{\PottsDisLEAPSESS}{\best{$.999_{\pm.000}$}}
\newcommand{\PottsDisDSIMTV}{\best{$.005_{\pm.002}$}}

\newcommand{\PottsDisMDNSMTV}{$.008_{\pm.003}$}
\newcommand{\PottsDisPDNSMTV}{$.008_{\pm.004}$}
\newcommand{\PottsDisASBSMTV}{$.005_{\pm.003}$}
\newcommand{\PottsDisASBSMMTV}{\best{$.004_{\pm.004}$}}
\newcommand{\PottsDisLEAPSMTV}{$.007_{\pm.003}$}

\newcommand{\NSeeds}{3}
\newcommand{\NDraws}{8192}
\newcommand{\CmpIsingDSITrain}{$6.33_{\pm0.11}$}

\newcommand{\CmpIsingDSITotal}{$1.5{\cdot}10^{6}$}

\newcommand{\CmpIsingMDNSTrain}{$6.05_{\pm0.09}$}

\newcommand{\CmpIsingMDNSTotal}{$7.7{\cdot}10^{5}$}
\newcommand{\CmpIsingPDNSTrain}{$6.42_{\pm0.35}$}

\newcommand{\CmpIsingPDNSTotal}{$7.7{\cdot}10^{5}$}
\newcommand{\CmpIsingASBSTrain}{$11.57_{\pm0.09}$}

\newcommand{\CmpIsingASBSTotal}{$1.1{\cdot}10^{8}$}
\newcommand{\CmpIsingASBSMTrain}{$10.72_{\pm0.13}$}

\newcommand{\CmpIsingASBSMTotal}{$1.6{\cdot}10^{8}$}
\newcommand{\CmpIsingLEAPSTrain}{$5.85_{\pm0.12}$}

\newcommand{\CmpIsingLEAPSTotal}{$5.2{\cdot}10^{8}$}
\newcommand{\CmpPottsDSITrain}{$1.89_{\pm0.07}$}

\newcommand{\CmpPottsDSITotal}{$1.5{\cdot}10^{6}$}

\newcommand{\CmpPottsMDNSTrain}{$1.79_{\pm0.10}$}

\newcommand{\CmpPottsMDNSTotal}{$7.7{\cdot}10^{5}$}
\newcommand{\CmpPottsPDNSTrain}{$2.22_{\pm0.30}$}

\newcommand{\CmpPottsPDNSTotal}{$7.7{\cdot}10^{5}$}
\newcommand{\CmpPottsASBSTrain}{$5.60_{\pm0.28}$}

\newcommand{\CmpPottsASBSTotal}{$1.1{\cdot}10^{8}$}
\newcommand{\CmpPottsASBSMTrain}{$5.19_{\pm0.19}$}

\newcommand{\CmpPottsASBSMTotal}{$1.6{\cdot}10^{8}$}
\newcommand{\CmpPottsLEAPSTrain}{$2.86_{\pm0.17}$}

\newcommand{\CmpPottsLEAPSTotal}{$4.2{\cdot}10^{8}$}
\newcommand{\CmpIsingDSIBB}{$4.8{\cdot}10^{8}$}
\newcommand{\CmpIsingMDNSBB}{$7.7{\cdot}10^{5}$}
\newcommand{\CmpIsingPDNSBB}{$7.7{\cdot}10^{5}$}
\newcommand{\CmpIsingASBSBB}{$6.4{\cdot}10^{10}$}
\newcommand{\CmpIsingASBSMBB}{$9.6{\cdot}10^{10}$}
\newcommand{\CmpIsingLEAPSBB}{$2.5{\cdot}10^{11}$}
\newcommand{\CmpPottsDSIBB}{$3.9{\cdot}10^{8}$}
\newcommand{\CmpPottsMDNSBB}{$7.7{\cdot}10^{5}$}
\newcommand{\CmpPottsPDNSBB}{$7.7{\cdot}10^{5}$}
\newcommand{\CmpPottsASBSBB}{$5.3{\cdot}10^{10}$}
\newcommand{\CmpPottsASBSMBB}{$7.9{\cdot}10^{10}$}
\newcommand{\CmpPottsLEAPSBB}{$1.6{\cdot}10^{11}$}

\newcommand{\FPIsingESS}{$.729_{\pm.007}$}
\newcommand{\FPIsingESSmin}{0.20}
\newcommand{\FPIsingRounds}{4120--4200}
\newcommand{\FPIsingNSeeds}{3}
\newcommand{\FPIsingBetaLo}{0.39}
\newcommand{\FPIsingBetaHi}{0.65}
\newcommand{\FPIsingHmax}{0.1}
\newcommand{\FPIsingSteps}{512}

\newcommand{\FPIsingMTVRW}{$.0012_{\pm.0002}$}
\newcommand{\FPIsingZ}{$6.8_{\pm0.3}$}

\newcommand{\FPWorstYmax}{0.30}

\newcommand{\FPPottsESS}{$.789_{\pm.002}$}
\newcommand{\FPPottsESSmin}{0.56}
\newcommand{\FPPottsRounds}{18713--19063}
\newcommand{\FPPottsNSeeds}{2}
\newcommand{\FPPottsBetaLo}{0.8}
\newcommand{\FPPottsBetaHi}{1.3}
\newcommand{\FPPottsHmax}{0.01}
\newcommand{\FPPottsSteps}{288}

\newcommand{\FPPottsMTVRW}{$.0084_{\pm.0005}$}
\newcommand{\FPPottsZ}{$9.4_{\pm0.1}$}

\newcommand{\FPNDraws}{5000}

\newcommand{\AblBaseOrdZ}{$0.7_{\pm0.4}$}
\newcommand{\AblBaseOrdESS}{$.979_{\pm.011}$}
\newcommand{\AblBaseOrdMTV}{$.014_{\pm.007}$}

\newcommand{\AblBaseCritZ}{$1.6_{\pm0.3}$}
\newcommand{\AblBaseCritESS}{$.980_{\pm.001}$}
\newcommand{\AblBaseCritMTV}{$.015_{\pm.004}$}

\newcommand{\AblBaseDisZ}{$1.1_{\pm0.4}$}
\newcommand{\AblBaseDisESS}{$.995_{\pm.000}$}
\newcommand{\AblBaseDisMTV}{$.006_{\pm.004}$}

\newcommand{\AblNoDampOrdZ}{$462.0_{\pm326.4}$}
\newcommand{\AblNoDampOrdESS}{$.994_{\pm.005}$}
\newcommand{\AblNoDampOrdMTV}{$.336_{\pm.232}$}

\newcommand{\AblNoDampCritZ}{$1.1_{\pm0.4}$}
\newcommand{\AblNoDampCritESS}{$.966_{\pm.007}$}
\newcommand{\AblNoDampCritMTV}{$.017_{\pm.012}$}

\newcommand{\AblNoDampDisZ}{$1.2_{\pm0.5}$}
\newcommand{\AblNoDampDisESS}{$.992_{\pm.001}$}
\newcommand{\AblNoDampDisMTV}{$.010_{\pm.007}$}

\newcommand{\AblNoGibbsTOrdZ}{$26479.6_{\pm456.9}$}
\newcommand{\AblNoGibbsTOrdESS}{$.001_{\pm.000}$}
\newcommand{\AblNoGibbsTOrdMTV}{$.064_{\pm.022}$}

\newcommand{\AblNoGibbsTCritZ}{$1.5_{\pm0.9}$}
\newcommand{\AblNoGibbsTCritESS}{$.895_{\pm.009}$}
\newcommand{\AblNoGibbsTCritMTV}{$.011_{\pm.008}$}

\newcommand{\AblNoGibbsTDisZ}{$2.1_{\pm1.9}$}
\newcommand{\AblNoGibbsTDisESS}{$.965_{\pm.001}$}
\newcommand{\AblNoGibbsTDisMTV}{$.008_{\pm.004}$}

\newcommand{\AblNoGibbsPOrdZ}{$693.1_{\pm0.8}$}
\newcommand{\AblNoGibbsPOrdESS}{$.993_{\pm.000}$}
\newcommand{\AblNoGibbsPOrdMTV}{$.500_{\pm.000}$}

\newcommand{\AblNoGibbsPCritZ}{$1.0_{\pm0.2}$}
\newcommand{\AblNoGibbsPCritESS}{$.943_{\pm.003}$}
\newcommand{\AblNoGibbsPCritMTV}{$.019_{\pm.004}$}

\newcommand{\AblNoGibbsPDisZ}{$1.2_{\pm0.8}$}
\newcommand{\AblNoGibbsPDisESS}{$.974_{\pm.002}$}
\newcommand{\AblNoGibbsPDisMTV}{$.009_{\pm.005}$}

\newcommand{\AblNoIWOrdZ}{$15.8_{\pm9.9}$}
\newcommand{\AblNoIWOrdESS}{$.475_{\pm.219}$}
\newcommand{\AblNoIWOrdMTV}{$.206_{\pm.127}$}

\newcommand{\AblNoIWCritZ}{$4.0_{\pm1.7}$}
\newcommand{\AblNoIWCritESS}{$.715_{\pm.012}$}
\newcommand{\AblNoIWCritMTV}{$.098_{\pm.050}$}

\newcommand{\AblNoIWDisZ}{$1.1_{\pm0.8}$}
\newcommand{\AblNoIWDisESS}{$.993_{\pm.000}$}
\newcommand{\AblNoIWDisMTV}{$.009_{\pm.004}$}

\newcommand{\AblPDNSOrdZ}{$693.1_{\pm0.4}$}
\newcommand{\AblPDNSOrdESS}{$.988_{\pm.003}$}
\newcommand{\AblPDNSOrdMTV}{$.500_{\pm.000}$}

\newcommand{\AblPDNSCritZ}{$1.5_{\pm0.6}$}
\newcommand{\AblPDNSCritESS}{$.917_{\pm.008}$}
\newcommand{\AblPDNSCritMTV}{$.016_{\pm.008}$}

\newcommand{\AblPDNSDisZ}{$2.9_{\pm1.9}$}
\newcommand{\AblPDNSDisESS}{$.956_{\pm.001}$}
\newcommand{\AblPDNSDisMTV}{$.014_{\pm.006}$}

\newcommand{\AblMDNSOrdZ}{$692.7_{\pm0.4}$}
\newcommand{\AblMDNSOrdESS}{$.983_{\pm.007}$}
\newcommand{\AblMDNSOrdMTV}{$.500_{\pm.000}$}

\newcommand{\AblMDNSCritZ}{$1.9_{\pm1.0}$}
\newcommand{\AblMDNSCritESS}{$.925_{\pm.006}$}
\newcommand{\AblMDNSCritMTV}{$.016_{\pm.006}$}

\newcommand{\AblMDNSDisZ}{$1.4_{\pm0.8}$}
\newcommand{\AblMDNSDisESS}{$.961_{\pm.002}$}
\newcommand{\AblMDNSDisMTV}{$.012_{\pm.004}$}

\newcommand{\AblNoDampSingleMode}{2}

\newcommand{\AblNSeeds}{3}

\newcommand{\BrCrossNTimes}{129}

\newcommand{\BrCrossDlnZExact}{297.9}
\newcommand{\BrCrossErr}{$0.014_{\pm0.019}$}
\newcommand{\BrCrossErrRel}{$0.5_{\pm0.6}$}

\newcommand{\BrCrossESS}{$.105_{\pm.023}$}

\newcommand{\BrCrossMTVRW}{$.007_{\pm.003}$}

\newcommand{\BrCrossBetaZero}{0.20}
\newcommand{\BrCrossBetaOne}{0.60}

\newcommand{\BrOrdDlnZExact}{115.2}
\newcommand{\BrOrdErr}{$0.115_{\pm0.073}$}
\newcommand{\BrOrdErrRel}{$10.0_{\pm6.3}$}

\newcommand{\BrOrdESS}{$.036_{\pm.019}$}

\newcommand{\BrOrdMTVRW}{$.030_{\pm.024}$}

\newcommand{\BrOrdBetaZero}{0.50}
\newcommand{\BrOrdBetaOne}{0.60}

\newcommand{\BrNSeeds}{3}

\newcommand{\BrL}{25}

\newcommand{\AgPdNT}{29}
\newcommand{\AgPdNdmu}{81}
\newcommand{\AgPdNpoints}{2349}

\newcommand{\AgPdDmuMin}{-0.40}
\newcommand{\AgPdDmuMax}{0.40}

\newcommand{\AgPdNties}{15}
\newcommand{\AgPdTClose}{550}
\newcommand{\AgPdCErrUnit}{10^{-3}}
\newcommand{\AgPdDSICErr}{\best{$0.48 \pm 0.00$}}
\newcommand{\AgPdFrOGSCErr}{$5.25 \pm 5.80$}
\newcommand{\AgPdZErrUnit}{10^{-4}}
\newcommand{\AgPdDSIZErr}{\best{$0.62 \pm 0.01$}}
\newcommand{\AgPdFrOGSZErr}{$27.0 \pm 26.5$}
\newcommand{\AgPdDSIESS}{\best{$0.973 \pm 0.001$}}
\newcommand{\AgPdFrOGSESS}{$0.210 \pm 0.067$}
\newcommand{\FmixTA}{200}
\newcommand{\FmixTB}{700}

\newcommand{\FmixTiesRefA}{4}
\newcommand{\FmixTiesModA}{4}
\newcommand{\FmixTiesRefB}{2}
\newcommand{\FmixTiesModB}{2}

\newcommand{\CuAuNT}{29}
\newcommand{\CuAuNdmu}{36}
\newcommand{\CuAuNpoints}{1044}

\newcommand{\CuAuDmuMin}{-0.20}
\newcommand{\CuAuDmuMax}{0.15}

\newcommand{\CuAuNties}{74}
\newcommand{\CuAuTClose}{750}
\newcommand{\CuAuCErrUnit}{10^{-2}}
\newcommand{\CuAuDSICErr}{\best{$1.10 \pm 0.01$}}
\newcommand{\CuAuFrOGSCErr}{$2.35 \pm 0.61$}
\newcommand{\CuAuZErrUnit}{10^{-4}}
\newcommand{\CuAuDSIZErr}{\best{$0.82 \pm 0.04$}}
\newcommand{\CuAuFrOGSZErr}{$112 \pm 14$}
\newcommand{\CuAuDSIESS}{\best{$0.874 \pm 0.006$}}
\newcommand{\CuAuFrOGSESS}{$0.098 \pm 0.055$}
\newif\ifAgPdDSIPlaceholder
\AgPdDSIPlaceholderfalse
\newif\ifAgPdFrOGSPlaceholder
\AgPdFrOGSPlaceholderfalse
\newif\ifCuAuDSIPlaceholder
\CuAuDSIPlaceholderfalse
\newif\ifCuAuFrOGSPlaceholder
\CuAuFrOGSPlaceholderfalse
\newcommand{\placeholdernote}{}

\newcommand{\AlloyNDraws}{5000}
\newcommand{\AlloyNSeeds}{3}

\makeatletter
\newcommand{\figcaption}{\def\@captype{figure}\caption}
\newcommand{\tabcaption}{\def\@captype{table}\caption}
\makeatother

\newcommand{\method}{\texorpdfstring{\mbox{DIG-IN}}{DIG-IN}}
\newcommand{\DMu}{\Delta\mu}
\newcommand{\kB}{k_{\mathrm{B}}}
\DeclareMathOperator{\TVD}{TVD}
\newcommand{\mTVD}{\TVD_{\mathrm{mode}}}
\DeclareMathOperator{\logsumexp}{logsumexp}

\definecolor{cBest}{HTML}{D6E4F2}
\newcommand{\best}[1]{\cellcolor{cBest}#1}
\definecolor{cDSI}{HTML}{4477AA}
\definecolor{cMDNS}{HTML}{EE9933}
\definecolor{cFrOGS}{HTML}{EE9933}
\definecolor{cRef}{gray}{0.55}

\pgfplotsset{
  colormap={ising}{[1pt] rgb255(0pt)=(68,119,170) rgb255(1pt)=(204,102,119)},
  colormap={potts}{[1pt] rgb255(0pt)=(68,119,170) rgb255(1pt)=(204,102,119)
                         rgb255(2pt)=(34,136,85)},
  cfgpanel/.style={
    scale only axis, width=2.5cm, height=2.5cm,
    enlargelimits=false, axis on top, axis line style={black!45},
    xtick=\empty, ytick=\empty,
    title style={font=\small, yshift=-0.8ex}},
  pmpanel/.style={
    scale only axis, width=5.2cm, height=3.0cm,
    tick label style={font=\scriptsize}, label style={font=\small},
    legend style={font=\scriptsize, draw=none, fill=none, at={(0.5,1.02)},
                  anchor=south, legend columns=3, column sep=0.5em},
    tick align=outside, axis line style={black!60},
    xmin=-1, xmax=1, ymin=0},
}

\pgfplotsset{
  fpmap/.style={
    scale only axis, width=2.45cm, height=2.45cm,
    enlargelimits=false, axis on top, axis line style={black!60},
    tick label style={font=\scriptsize}, label style={font=\small},
    title style={font=\small, yshift=-0.8ex}, tick align=outside,
    scaled x ticks=false, colormap/viridis},
  fpdata/.style={matrix plot*, point meta=explicit, mesh/cols=9},
  fpising/.style={xtick={-0.1,0,0.1}, xticklabels={$-0.1$,0,$0.1$},
    ytick={0.4,0.5,0.6}},
  fppotts/.style={xtick={-0.01,0,0.01}, xticklabels={$-0.01$,0,$0.01$},
    ytick={0.8,1.0,1.2},
    yticklabel style={/pgf/number format/.cd, fixed, fixed zerofill, precision=1}},
}
\pgfplotsset{
  worstpanel/.style={
    scale only axis, width=4.6cm, height=1.45cm,
    tick label style={font=\scriptsize}, label style={font=\small},
    title style={font=\small, yshift=-0.6ex},
    tick align=outside, axis line style={black!60},
    ymin=0, ymax=\FPWorstYmax, ytick={0,0.1,0.2},
    scaled x ticks=false, enlarge x limits=false},
  worstneg/.style={xmin=-1.0004, xmax=-0.9524, xtick={-1,-0.98,-0.96}},
  worstpos/.style={xmin=0.9524, xmax=1.0004, xtick={0.96,0.98,1}},
  refhist/.style={const plot, fill=black!12, draw=black!35, area legend},
  rawhist/.style={const plot, cDSI, line width=0.9pt},
  rwhist/.style={const plot, black, densely dashed, line width=0.8pt},
}

\definecolor{cNoDamp}{HTML}{EE9933}
\definecolor{cNoGibbsT}{HTML}{CC6677}
\definecolor{cNoGibbsP}{HTML}{228833}
\definecolor{cNoIW}{HTML}{AA4499}
\pgfplotsset{
  ablpanel/.style={
    scale only axis, width=3.0cm, height=1.95cm,
    tick label style={font=\scriptsize}, label style={font=\small},
    title style={font=\small, yshift=-0.8ex},
    tick align=outside, axis line style={black!60},
    xmin=0, xmax=150000, xtick={0,50000,100000,150000},
    xticklabels={0,50k,100k,150k}, scaled x ticks=false,
    unbounded coords=jump},
  ablline/.style={line width=0.9pt, mark=none},
}
\tikzset{betac/.style={white, densely dashed, line width=0.6pt}}

\definecolor{viridisLow}{rgb}{0.267004,0.004874,0.329415}
\pgfplotsset{
  margbase/.style={
    scale only axis, width=1.05cm, height=2.3cm,
    ymin=-1.0125, ymax=1.0125, xmin=0, enlarge x limits={upper, value=0.06},
    xtick=\empty, tick label style={font=\scriptsize}, label style={font=\small},
    title style={font=\small, yshift=-0.7ex},
    tick align=outside, axis line style={black!60}},
  margL/.style={margbase, x dir=reverse, ytick={-1,0,1}},
  margR/.style={margbase, ytick=\empty},
  refstair/.style={fill=black!14, draw=black!40, line width=0.4pt},
  modstair/.style={cDSI, line width=0.8pt},
  rwstair/.style={black, densely dashed, line width=0.7pt},
  flowT/.style={
    scale only axis, width=2.6cm, height=2.3cm,
    view={0}{90}, xmin=0, xmax=1, ymin=-1.0125, ymax=1.0125,
    xtick={0,0.5,1}, ytick=\empty, colormap/viridis,
    point meta min=0, point meta max=1, axis on top,
    axis background/.style={fill=viridisLow},
    tick label style={font=\scriptsize}, label style={font=\small},
    title style={font=\small, yshift=-0.7ex}, tick align=outside,
    axis line style={black!60}},
  zoomR/.style={
    scale only axis, width=0.8cm, height=1.0cm,
    xmin=0, enlarge x limits={upper, value=0.06}, xtick=\empty,
    yticklabel pos=right, tick label style={font=\tiny},
    yticklabel style={/pgf/number format/.cd, fixed, precision=2},
    title style={font=\small, yshift=-0.7ex},
    tick align=outside, axis line style={black!60}},
  zoomPos/.style={zoomR, ymin=0.925, ymax=1.0125, ytick={0.96,1}},
  zoomNeg/.style={zoomR, ymin=-1.0125, ymax=-0.925, ytick={-1,-0.96}},
  flowdata/.style={surf, shader=interp, point meta=explicit,
                   mesh/cols=\BrCrossNTimes},
  brcfgpanel/.style={
    scale only axis, width=1.9cm, height=1.9cm,
    enlargelimits=false, axis on top, axis line style={black!45},
    xtick=\empty, ytick=\empty,
    title style={font=\small, yshift=-0.8ex},
    colormap={ising}{[1pt] rgb255(0pt)=(68,119,170) rgb255(1pt)=(204,102,119)},
    point meta min=0, point meta max=1},
  brcfgdata/.style={matrix plot*, point meta=explicit, mesh/cols=\BrL},
}

\pgfplotsset{
  alloypanel/.style={
    scale only axis, width=3.2cm, height=2.75cm,
    tick label style={font=\scriptsize}, label style={font=\small},
    title style={font=\small, yshift=-0.8ex},
    tick align=outside, axis line style={black!60},
    ytick={200,400,600,800}},
  pd/.style={alloypanel, height=2.1cm, xmin=0, xmax=1, ymin=175, ymax=925,
    xtick={0,0.25,0.5,0.75,1}, xticklabels={0,,0.5,,1},
    unbounded coords=jump},
  refties/.style={black!13, line width=2.4pt, line cap=butt},
  refbounds/.style={black!55, line width=0.8pt, densely dashed, line join=round},
  ties/.style={#1!20, line width=0.9pt},
  bounds/.style={#1, line width=1.0pt, line join=round},
  fmixref/.style={black!65, line width=0.8pt},
  fmixback/.style={black!18, line width=0.8pt},
  fmixmodel/.style={cDSI, only marks, mark=*, mark size=0.5pt},
  tangentref/.style={black, line width=0.7pt, densely dashed, mark=*,
                     mark size=1.0pt, mark options={solid}},
  tangentmodel/.style={cDSI, line width=0.7pt, densely dashed, mark=*,
                       mark size=1.0pt, mark options={solid}},
  essmap/.style={alloypanel, width=3.05cm, enlargelimits=false, axis on top,
    colormap/viridis, point meta min=0, point meta max=1},
  essdata/.style={matrix plot*, point meta=explicit, mesh/cols=#1},
}

\definecolor{cAg}{RGB}{192,192,192}
\definecolor{cPd}{RGB}{0,105,133}
\definecolor{cCu}{RGB}{200,128,51}
\definecolor{cAu}{RGB}{255,209,35}
\tikzset{fccedge/.style={black!45, line width=0.35pt}}

\newcommand{\panelmark}[2]{%
  \csname if#1#2Placeholder\endcsname
    \node[anchor=north east, font=\tiny\bfseries, text=red!75!black,
          fill=white, fill opacity=0.85, text opacity=1, inner sep=1.5pt]
      at (rel axis cs:0.98,0.98) {PLACEHOLDER};
  \fi}

\usetikzlibrary{decorations.pathreplacing,shapes.misc}

\definecolor{phS}{HTML}{4F6F8F} %
\definecolor{phT}{HTML}{8A5F73} %

\definecolor{cA}{HTML}{6F8FAF}
\definecolor{cB}{HTML}{B07979}
\definecolor{cC}{HTML}{78977A}
\definecolor{cM}{HTML}{D9D9D9}

\usepackage{cancel}
\usepackage{float}
\floatstyle{ruled}
\newfloat{algorithm}{tbp}{loa}
\floatname{algorithm}{Algorithm}

\iclrfinalcopy
\begin{document}
\maketitle
\ificlrfinal
  \lhead{Preprint}
\else
  \lhead{Under review as a conference paper at ICLR 2027}
\fi
\maketitle
\ificlrfinal
  \lhead{Preprint.}
\else
  \lhead{Under review as a conference paper at ICLR 2027}
\fi

\begin{abstract}
Sampling from discrete, unnormalized distributions without access to data is a
challenging problem. Neural samplers offer a promising approach by training
generative models from density evaluations directly. Despite recent progress,
existing discrete neural samplers are prone to mode collapse, come without
convergence guarantees when trained via fixed-point iterations, and are often tied
to a specific reference process such as masked or uniform diffusion. In this work,
we introduce Discrete Gibbs Iterative Neural Sampler, a fixed-point neural sampler that addresses these limitations,
enabling efficient, scalable learning, substantially reducing mode collapse in
practice. Our framework builds on masked diffusion and also extends to transport
between pairs of distributions. We demonstrate that the resulting method scales
effectively to high-dimensional systems, supports amortized sampling across
different conditions, and enables accurate estimation of alloy phase diagrams.
\end{abstract}

\section{Introduction}
Sampling from discrete distributions is a fundamental problem in physics, chemistry, and machine learning, yet remains computationally intractable in many settings of practical interest. Consider configurations $x= (x^1,\ldots,x^D)\in\mathcal{X}=\mathcal{A}^D$, where $D$ is the number of sites and each $x^d$ takes values in a finite alphabet $\mathcal{A}$. Given an unnormalized density $\Tilde{p}:\mathcal{X}\to\mathbb{R}$, the target distribution is
\begin{equation}
  \pi(x)=\frac{\Tilde{p}(x)}{Z},
  \qquad
  Z=\sum_{x\in\mathcal{X}}\Tilde{p}(x).
  \label{eq:target}
\end{equation}
This formulation encompasses spin configurations, protein sequences, and distributions over solutions to combinatorial problems. Although the unnormalized density of an individual configuration can be evaluated, the exponentially large state space generally precludes enumeration. Markov chain Monte Carlo (MCMC) \citep{brooks2011handbook} avoids evaluating $Z$, but can mix prohibitively slowly when high probability regions are separated by high energy barriers or narrow transition pathways.

The continuous counterpart of this problem has seen substantial progress through \emph{neural samplers}, generative models trained on a target distribution specified by its energy function without precurated data. 
Boltzmann generators pioneered this idea for molecular equilibrium sampling with normalizing flows but suffered from scaling issues to systems of practical interest~\citep{noe2019boltzmann}. More recent methods~\citep{blessing2026bridge,havens2026flow} have developed scalable training procedures for diffusion models based on fixed-point iterations by exploiting gradient information of the target density combined with efficient reuse of evaluated configurations. 
These advances have enabled neural samplers to tackle realistic materials systems, with substantial improvements in efficiency compared to classical Monte Carlo methods~\citep{cheng2026atlas}.

\begin{figure}[t]
\centering

\begin{minipage}[c]{0.64\linewidth}
    \centering
    \includegraphics[width=\linewidth]{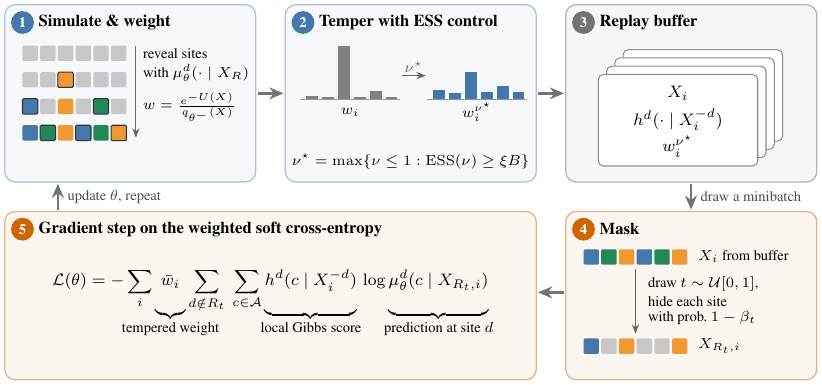}\vspace{-6pt}
    \caption{\textbf{DIG-IN iteration.} One training iteration consists of data collection, importance weight smoothing using ESS control, and parameter updates using a weighted cross-entropy loss.}
    \label{fig:alg_illustration}
\end{minipage}\hfill
\begin{minipage}[c]{0.35\linewidth}
\centering
\resizebox{\linewidth}{!}{%
\begin{minipage}{1.28\linewidth}
\begin{algorithm}[H]
\caption{DIG-IN}
\label{alg:masked}
\small

\textbf{Input:} target $\tilde p$, initial model, training loss;
buffer size $B$, refill size $b$, minibatch size $m$,
inner steps $K$, ESS fraction $\xi$.
\par\smallskip

\textbf{Repeat until training is complete:}
\begin{enumerate}[label=\arabic*:,leftmargin=1.7em,
  labelsep=0.4em,itemsep=2pt,topsep=3pt,parsep=0pt]
  \item Generate $b$ samples from current sampler, and append them to buffer.
  \item Store their sample and importance weights $w$; keep the newest $B$ buffer samples.
  \item \textbf{for} $k=1,\ldots,K$ \textbf{do}
  \item \hspace*{0.7em} Draw $m$ buffer entries uniformly.
  \item \hspace*{0.7em} Mask sites randomly for each sample.
  \item \hspace*{0.7em} Calculate  Gibbs labels for unmasked sites. 
  \item  \hspace*{0.7em} Choose $\nu^\star$ by ESS control ($\xi$).
  \item \hspace*{0.7em} Update model  with $\bar w$-weighted  cross-entropy loss with Gibbs label.
  \item \textbf{end for}
\end{enumerate}
\textbf{Return:} the trained sampler.

\end{algorithm}
\end{minipage}%
}
\end{minipage}

\end{figure}

In this work, we introduce Discrete Gibbs Iterative Neural Sampler (DIG-IN), a fixed-point neural sampler that is inspired by the principles underlying the successes of continuous neural samplers. 
At the core of our approach are \emph{Gibbs labels}, i.e., normalized distributions over single-site perturbations of a discrete configuration that provide a discrete analogue of gradient. 
In summary, DIG-IN \textit{i)} incorporates these Gibbs labels which provide a strong learning signal and help to avoid mode collapse (\cref{sec:gibbs_supervision}), \textit{ii)} achieves a balance between variance and bias through tempered importance weighting, and enjoys convergence guarantees for certain setting (\cref{sec:convergence}), and \textit{iii)} is extendable to allow for a transport between any two pairs of distributions, whereas existing discrete constructions are often tied to specific reference distributions or corruption processes (\cref{sec:general_interpolants}). 

Experiments on Ising and Potts models show that DIG-IN is the only method that avoids mode collapse in the ordered phase while giving accurate $\ln Z$ estimates. It scales
to amortized sampling over thermodynamic conditions on lattices with up to $D = 2500$ sites, and it accurately recovers the phase diagrams of two alloy systems (\cref{sec:experiments}).

\section{Background: CTMCs and Masked Diffusion Models}
\label{sec:background}
This section introduces the basics of CTMCs and masked diffusion.
These models can be formulated and trained in several ways; here, we focus on a simple background for our method.

Throughout, we write $N=|\mathcal{A}|$ for the alphabet size,
$x^{-d}$ for all coordinates except site $d$, and $x_R$ for the
values at a set of sites $R$. The notation $x^{d\leftarrow c}$
means replacing $x^d$ by $c$.

\paragraph{Continuous-time Markov chains (CTMCs).}
A CTMC evolves on a discrete state space by remaining in its current
state for a random time and then jumping to another. For a process
$(X_t)_{t\in[0,1]}$, let
$P_{t,t+h}(x,y)=\mathbb{P}(X_{t+h}=y\mid X_t=x)$ denote its
transition probabilities over an interval of length $h>0$.
The \emph{generator}, or rate matrix, describes their instantaneous
change, $Q_t=\lim_{h\downarrow0}(P_{t,t+h}-I)/h$,
where $I$ is the identity matrix.
This definition gives the small-time expansion
\begin{equation}
 \mathbb{P}(X_{t+h}=y\mid X_t=x)
 =\mathbf{1}_{\{y=x\}}+hQ_t(x,y)+o(h).
\end{equation}
For $y\neq x$, the entry $Q_t(x,y)\geq0$ is the rate of jumping
from $x$ to $y$. The diagonal entry
$Q_t(x,x)=-\sum_{y\neq x}Q_t(x,y)$ is the negative total rate of
leaving $x$, ensuring that each row sums to zero.

The process run backward in time is also a CTMC. 
Write $p_t(x)=\mathbb{P}(X_t=x)$ for the marginal distribution.
Its reverse generator
$\bar Q_t$, indexed by the same $t$ but moving backward in time, satisfies
\begin{equation}
 \bar Q_t(x,y)=Q_t(y,x) {p_t(y)}/{p_t(x)},
 \qquad y\neq x,\quad p_t(x)>0.
 \label{eq:nelson_relation}
\end{equation}
The marginal density ratio is known as \emph{concrete scores}
\citep{lou2024discrete}
This is the discrete analogue of Nelson's time-reversal relation
\citep{campbell2022continuous,sun2023score}. 
For generation, it is often sufficient to simulate the process in a single time direction. In contrast, path-space importance reweighting and  free-energy estimators generally require evaluating both the forward and reverse path probabilities, and therefore access to both generators~\citep{he2025feat,he2026free}.

As is common in discrete generative modeling, we consider processes
that change one site at each jump. For a state $x$ with $x^d=a$,
we denote the rate of replacing $a$ by $c\neq a$ by
\begin{equation}
 r_t^d(a,c\mid x^{-d})
 :=Q_t\bigl(x,x^{d\leftarrow c}\bigr).
\end{equation}
The reverse single-site rates $\bar r_t^d$ are defined analogously
using $\bar Q_t$. Together, the single-site rates determine when the
process jumps, which site changes, and its new symbol.

\paragraph{Masked diffusion.} 
A commonly used construction for CTMC in generative modeling is masked diffusion, which adds a new mask symbol $\mathtt{M}\notin\mathcal{A}$ to the alphabet. Masked diffusion then learns to reverse a process that progressively masks symbols until all tokens are fully masked.
We first introduce a decreasing schedule $\alpha_t$, with $\alpha_0=1$
and $\alpha_1=0$. Given a clean sample $X_0\sim\pi$, each coordinate
of $X_t$ retains its original symbol with probability $\alpha_t$ and
is masked otherwise, independently across sites. 
Once masked, a site
stays masked. 
Writing $\dot\alpha_t=\mathrm{d}\alpha_t/\mathrm{d}t$, the instantaneous masking rate is 
$r_t^d(c,\mathtt{M}\mid x^{-d})=-\dot\alpha_t/\alpha_t$ for $c\in\mathcal{A}$
and $t<1$. 

By \Cref{eq:nelson_relation}, reversing these masking jumps requires
the concrete scores. For masking, it suffices to learn a categorical
denoiser. Let $R=\{d:x^d\neq\mathtt{M}\}$ denote the revealed sites
of a partially masked state $x$. For $d\notin R$ and
$c\in\mathcal{A}$, the desired predictor is
$\mu_*^d(c\mid x_R)=\mathbb{P}_{X\sim\pi}(X^d=c\mid X_R=x_R)$.
This  recovers the time-reversal of a masking process as  the  concrete score is represented implicitly:
\begin{align}
    \frac{p_t(x^{d\leftarrow c})}{p_t(x)}
=\frac{\alpha_t}{1-\alpha_t}\mu_*^d(c\mid x_R)
\end{align}
As we can see, even the reverse process requires a time-dependent generator, a time-independent denoiser suffices.
Consequently, the generation can also be done by revealing one site at a time in a  random
order, yielding any-order autoregressive generation
\citep{hoogeboom2022autoregressive}.
 
Given target samples, the predictor can be learned by letting them reconstruct masked tokens
\citep{sahoo2024simple,shi2024simplified}: draw $X\sim\pi$ and
$t\sim\mathrm{Unif}(0,1)$, then mask each site independently with
probability $1-\alpha_t$ and write $R_t$ for the set of sites that remain
revealed. Then we train the predictor by the cross-entropy loss
\begin{equation}
\begin{aligned}
    \mathcal{L}_{\mathrm{CE}}(\theta)
    &=
    -\mathbb{E}_{X,t,R_t}
   [
        \sum_{d\notin R_t}
        \log \mu_\theta^d(X^d\mid X_{R_t})
   ] =
    -\mathbb{E}_{X,t,R_t}
   [
        \sum_{d\notin R_t}
        \sum_{c\in\mathcal A}
        \mathbf{1}_{\{X^d=c\}}
        \log \mu_\theta^d(c\mid X_{R_t})
 ],
\end{aligned}
\label{eq:masked_ce}
\end{equation}
where $\mathbf{1}_{\{X^d=c\}}$ is the one-hot label for the
observed symbol at site $d$.

\vspace{-3pt}
\section{Discrete Gibbs Iterative Neural Sampler} \label{sec:method}
\vspace{-3pt}
Here, we introduce Discrete Gibbs Iterative Neural Sampler (DIG-IN). We first derive
Gibbs labels from single-site energy differences and show that they provide a
lower-variance training signal for the masked denoiser (\Cref{sec:gibbs_supervision}).
Since target samples are unavailable, we then turn this into a fixed-point iteration
on self-generated samples, corrected by ESS-tempered importance weights, and state
convergence analysis (\Cref{sec:convergence}). Finally, we extend the
construction beyond masked diffusion to general discrete interpolants between two
distributions (\Cref{sec:general_interpolants}).

\subsection{From energy differences to Gibbs soft labels}
\label{sec:gibbs_supervision}
In continuous spaces, the score function $\nabla_x \log \pi(x)$ describes how a configuration $x$ changes to increase its density.
This information is crucial, as the unnormalized density only tells if a sample is good or bad, while the gradient tells how to change the sample,  which provides stronger supervision signal for the neural sampler.
For discrete configurations, the analogue is to compare the energies obtained by replacing the symbol at one site while holding the others fixed.
These comparisons determine the target's conditional distribution
over possible replacements, which we will use as additional learning signal for training a neural sampler as outlined below.

The gradient is an infinitesimal perturbation in continuous space.
The discrete counterpart is a single-site replacement. 
For a complete configuration $x$, replacing the symbol at site $d$ by $c$ gives the energy difference, and plays the role of a gradient \citep{meng2022concrete}:
\begin{equation}
    \nabla_{d,c}U(x)
    :=
    -\log {\pi(x^{d\leftarrow c})}+\log {\pi(x)}.
    \label{eq:discrete_energy_difference}
\end{equation}  
Normalizing the corresponding Boltzmann weights gives
\begin{equation}
    h^d(c\mid x^{-d})
    =
    \frac{\exp[-\nabla_{d,c}U(x)]}
         {\sum_{a\in\mathcal A}\exp[-\nabla_{d,a}U(x)]}
    =
    \mathbb P_{X\sim\pi}(X^d=c\mid X^{-d}=x^{-d}).
    \label{eq:heatbath}
\end{equation}
We call this distribution a \emph{Gibbs soft label}, since it
is exactly the conditional distribution used by Gibbs sampling.
Instead of asking the predictor to reconstruct only the sampled
symbol, this label specifies the relative probabilities of
all its alternatives under the target.

We use these labels to replace the one-hot targets in \Cref{eq:masked_ce}. 
Assume for now that we have samples from our target distribution $\pi$. We can then train our network with the cross-entropy loss:
\begin{equation}
    \mathcal L_{\mathrm{Gibbs}}(\theta)
    =
    -\mathbb E_{X\sim\pi,t,R_t}
    [
        \sum_{d\notin R_t}
        \sum_{c\in\mathcal A}
        h^d(c\mid X^{-d})
        \log\mu_\theta^d(c\mid X_{R_t})
   ].
    \label{eq:gibbs_ce}
\end{equation}

This has the same optimum as  \Cref{eq:masked_ce} yet with lower per-site  variance as stated below.

\begin{proposition}[Consistency and per-site variance reduction]
\label{prop:gibbs_consistency}
Let $X\sim\pi$. Fix a reveal set $R$, a masked site $d\notin R$,
and predictor parameters $\theta$. Define
\begin{align}
\ell_{\mathrm{CE}}^d
&=
-\log\mu_\theta^d(X^d\mid X_R),
\quad
\ell_{\mathrm{Gibbs}}^d
=
-\sum_{c\in\mathcal A}
h^d(c\mid X^{-d})
\log\mu_\theta^d(c\mid X_R).
\end{align}
If $\ell_{\mathrm{CE}}^d$ has finite second moment, then
\begin{align}
\mathbb E[\ell_{\mathrm{Gibbs}}^d\mid X_R]
&=
\mathbb E[\ell_{\mathrm{CE}}^d\mid X_R],\quad
\operatorname{Var}(\ell_{\mathrm{Gibbs}}^d\mid X_R)
\le
\operatorname{Var}(\ell_{\mathrm{CE}}^d\mid X_R).
\end{align}
\end{proposition}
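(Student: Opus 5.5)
The plan is to show that $\ell_{\mathrm{Gibbs}}^d$ is a conditional expectation of $\ell_{\mathrm{CE}}^d$, i.e.\ a Rao--Blackwellization, and then apply the tower property and the law of total variance. The key observation is that, since $d\notin R$, we have $R\subseteq\{1,\dots,D\}\setminus\{d\}$. The quantity $\mu_\theta^d(c\mid X_R)$ is therefore a deterministic function of $X^{-d}$ for each fixed $c$. Consequently
\begin{equation*}
\mathbb E\bigl[\ell_{\mathrm{CE}}^d\mid X^{-d}\bigr]
=-\sum_{c\in\mathcal A}\mathbb P_{X\sim\pi}(X^d=c\mid X^{-d})\log\mu_\theta^d(c\mid X_R)
=\ell_{\mathrm{Gibbs}}^d,
\end{equation*}
where the last equality uses the identification of the Gibbs soft label in \Cref{eq:heatbath} with the exact conditional $\mathbb P_{X\sim\pi}(X^d=c\mid X^{-d}=x^{-d})$.

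With this identity in hand, the steps are as follows. First, because $\sigma(X_R)\subseteq\sigma(X^{-d})$, the tower property gives
\begin{equation*}
\mathbb E[\ell_{\mathrm{Gibbs}}^d\mid X_R]=\mathbb E\bigl[\mathbb E[\ell_{\mathrm{CE}}^d\mid X^{-d}]\mid X_R\bigr]=\mathbb E[\ell_{\mathrm{CE}}^d\mid X_R],
\end{equation*}
which is the consistency claim. Second, the conditional law of total variance, conditioning on $X_R$ and decomposing through $X^{-d}$, gives
\begin{equation*}
\operatorname{Var}(\ell_{\mathrm{CE}}^d\mid X_R)=\operatorname{Var}(\ell_{\mathrm{Gibbs}}^d\mid X_R)+\mathbb E\bigl[\operatorname{Var}(\ell_{\mathrm{CE}}^d\mid X^{-d})\mid X_R\bigr].
\end{equation*}
The last term is nonnegative, so the variance inequality follows. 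Equivalently, one can apply conditional Jensen to $(\cdot)^2$.

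The only delicate point is integrability. If $\mu_\theta^d(c\mid X_R)=0$ for some $c$ that has positive conditional probability, then $\ell_{\mathrm{CE}}^d$ can be infinite. The finite-second-moment hypothesis excludes this on a set of positive probability, so the conditional expectations and variances above are well defined. I also need the finite second moment of $\ell_{\mathrm{Gibbs}}^d$, and conditional Jensen provides it. Positivity of $\pi(x)$ is needed to define $h^d$ through \Cref{eq:heatbath}. On configurations where $\pi(x)=0$, which occur with probability zero, the label can be defined arbitrarily without affecting the result. Beyond these points the argument is routine. The main step to state carefully is the measurability of $\mu_\theta^d(c\mid X_R)$ with respect to $X^{-d}$, which is exactly where the assumption $d\notin R$ is used.
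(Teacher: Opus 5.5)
Your proposal is correct and follows the paper's argument: the paper also observes that $\ell_{\mathrm{Gibbs}}^d=\mathbb E[\ell_{\mathrm{CE}}^d\mid X^{-d}]$ and then concludes by the tower property and Rao--Blackwellization (the law of total variance). Your remarks on why $d\notin R$ makes $\mu_\theta^d(c\mid X_R)$ measurable with respect to $X^{-d}$, and on integrability, spell out what the paper's one-line justification leaves implicit.
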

This results follow from tower property and Rao-Blackwellization, since $\ell_{\mathrm{Gibbs}}^d=\mathbb E[\ell_{\mathrm{CE}}^d\mid X^{-d}]$.

\subsection{From fixed-point consistency to fixed-point convergence}
\label{sec:convergence}
 
The previous construction leaves one important issue unresolved.
Proposition~\ref{prop:gibbs_consistency} assumes that the completed configurations are distributed according to $\pi$, whereas obtaining such samples is precisely the problem we aim to solve.
A similar obstacle arises in continuous-space neural samplers~\citep{havens2025adjoint,liu2025adjoint,shaul2025flow,blessing2026bridge}.
These methods address it through a self-improving bootstrapping procedure: samples are first drawn from the current sampler, then used to construct supervision as if they were drawn from the target, and the sampler is subsequently refitted using this supervision.
Repeating this procedure refines the sampler closer and closer to the target.
Crucially, when the sampler already matches the target distribution, the resulting update leaves it unchanged; hence, the target distribution is a fixed point of the iteration.
However, fixed-point consistency alone does not imply fixed-point convergence.
Starting from an imperfect sampler, the distribution of training contexts can be biased, and repeatedly fitting exact local
labels under this biased distribution may not recover the target.
In this subsection, we aim to address it.

A straightforward idea is to fix this mismatch using directly importance sampling:
let $q_{\theta^-}$ denote the frozen sampler that generates the current
replay buffer. 
For a generated configuration $x$, define the
unnormalized importance weight
$w(x)
    =
    {e^{-U(x)}} / {q_{\theta^-}(x)}.$
Here, $q_{\theta^-}(x)$ denotes the model likelihood assigned by the
generation procedure to the sampled configuration. For masked diffusion,
this probability is available from the categorical probabilities used
during generation; see \Cref{app:is_f} for further details. 
Applying the importance weighting to the Gibbs-label objective recovers the population loss that would have been obtained from target
samples: 
\begin{equation}
    \mathcal L_{\mathrm{IW}}(\theta;\theta^-)
    \propto
    - {
    \mathbb E_{X\sim q_{\theta^-}}
    \left[
        w(X)
        \mathbb E_{t,R_t}
        \left[
            \sum_{d\notin R_t}
            \sum_{c\in\mathcal A}
            h^d(c\mid X^{-d})
            \log\mu_\theta^d(c\mid X_{R_t})
        \right]
    \right]
    }.
    \label{eq:iw_gibbs_loss}
\end{equation}
By \Cref{prop:gibbs_consistency}, this objective has the target
masked conditional $\mu_*$ as its optimum.

In practice, however, when the current sampler is far from the target, the importance weights can become highly concentrated, leading to prohibitively high variance.
This is a well-known challenge in importance-weighted learning, and prior work has addressed it through trust-region constraints \citep{schulman2015trust,blessing2026trust}, proximal updates \citep{guo2026proximal}, or smoothing \citep{cheng2026atlas}.
Following these principles, we adopt \emph{tempered importance weighting}, which introduces a controlled bias at each iteration in exchange for substantially reduced variance and more stable learning.
Specifically, for a tempering exponent $\nu\in[0,1]$, we replace the
importance weight $w(X)$ in \cref{eq:iw_gibbs_loss} by $
    \bar w(X)=w(X)^\nu$ and denote the corresponding loss as $\mathcal L_{\mathrm{FP}}(\theta;\theta^-, \nu)$.
We choose $\nu$ adaptively according to the effective
sample size of the $B$ buffered configurations $x_1,\ldots,x_B$, i.e.,
$
\operatorname{ESS}(\nu, \theta^-)
=
{\left(\sum_{i=1}^B    \bar  w (x_i) \right)}^2/
{\sum_{i=1}^B    \bar  w (x_i)^2} .
$
Given a desired fraction $\xi\in(0,1)$, we use the largest exponent
that maintains a sufficiently large effective sample size,
\begin{equation}
    \nu^\star(\theta^-)
    =
    \max\left\{
        \nu\in[0,1]:
        \operatorname{ESS}(\nu, \theta^-)\geq \xi B
    \right\}.
    \label{eq:adaptive_tempering}
\end{equation}
Hence, when the current sampler already has good overlap with the target, $\nu^\star$ approaches one and the update uses nearly exact importance correction.
When the overlap is poor, the correction is automatically softened to prevent a small number of configurations from dominating
the optimization. Hence, starting from some $\theta_0$, the fixed point iteration is given by
\begin{equation}
     \nu_i
    =
    \nu^\star(\theta_i), \quad \quad 
    \theta_{i+1} = \arg \min_{\theta} \ \mathcal{L}_{\mathrm{FP}}(\theta;\theta_i, \nu_i)
\end{equation}
Under certain settings and idealized conditions stated below, we provide convergence guarantees for the population version of this tempered fixed-point iteration; proofs are in \Cref{app:convergence}.

Intuitively, tempering creates a tradeoff within each outer iteration. Full
importance correction directly reproduces the target training
distribution but may concentrate almost all weight on a few samples.
Softer correction produces a more stable, but biased, training
distribution. The key question is whether repeatedly applying these
biased updates still moves the sampler toward the target.

To formalize this question, let $\Phi_\nu(g)$ denote one idealized outer
iteration with tempering exponent $\nu$: generate complete
configurations using the current predictor $g$, form the corresponding
tempered Gibbs-label objective, and return its exact population
minimizer. Thus the population iteration is
\begin{align}
    g_{i+1}=\Phi_\nu(g_i).
\end{align}

\begin{theorem}[Convergence of tempered fixed-point training]
\label{thm:fixed_point_convergence}
Suppose the state space is finite, the target and categorical
predictors are strictly positive, and sampling and population fitting
are exact. Assume also that the reveal policy is independent of the
generated values and that training covers every predictor head queried
during generation.
Then, tempering changes how quickly and under what conditions the
iteration approaches the target, but not its destination:
\begin{enumerate}[label=(\roman*)]
\item The target predictor $\mu_*$ is a fixed point of $\Phi_\nu$ for
every $\nu\in[0,1]$. At $\nu=1$, one population update recovers
$\mu_*$ from any positive initialization.

\item With one fixed reveal order, the iteration converges globally to
$\mu_*$ for every fixed $\nu\in[0,1]$. It also converges when the
exponent varies across iterations but remains bounded away from zero,
which includes the ESS-adaptive rule in the idealized population
setting.

\item With a general value-independent distribution over reveal
orders, global convergence is guaranteed under the sufficient
condition
\[
2D(1-\nu)<1.
\]
\end{enumerate}
\end{theorem}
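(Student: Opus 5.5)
The proof rests on a closed form for the population update $\Phi_\nu$. For a predictor $g$ with sampler $q_g$, the tempered objective is a cross-entropy against Gibbs labels under the reweighted law
\[
p_\nu(x)\propto q_g(x)^{1-\nu}\pi(x)^\nu .
\]
Each head $(d,x_R)$ is fitted separately. The exact minimizer of a cross-entropy against soft labels is the conditional mean of the labels, so
\[
\Phi_\nu(g)^d(c\mid x_R)=\mathbb E_{p_\nu}\bigl[h^d(c\mid X^{-d})\mid X_R=x_R\bigr].
\]
This is well defined because everything is strictly positive, and the value-independence and coverage assumptions ensure every queried head is fitted with $x_R$ distributed as under $p_\nu$.

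\textbf{Part (i).} If $g=\mu_*$ then $q_g=\pi$, so $p_\nu=\pi$ for every $\nu$. If $\nu=1$ then $p_\nu=\pi$ for any $g$. In both cases $h^d(c\mid X^{-d})=\pi(X^d=c\mid X^{-d})$, and the tower property (exactly as in \Cref{prop:gibbs_consistency}) gives $\mathbb E_\pi[h^d\mid X_R]=\mu_*^d(\cdot\mid x_R)$.

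\textbf{Part (ii): fixed order $1,\dots,D$.} Here the heads are $g^d(\cdot\mid x_{<d})$, and I will argue by backward induction on $d$.

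For $d=D$, the label $h^D$ depends only on $x_{<D}$ and equals $\mu_*^D$. Hence head $D$ is exact after one update, whatever $\nu$ is.

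Now suppose all heads $>d$ are exact. Then $q_g(x_{>d}\mid x_{\le d})=\pi(x_{>d}\mid x_{\le d})$. Since tempering multiplies conditionals, the conditional of $p_\nu$ on $x_{>d}$ given $x_{\le d}$ is also $\pi(x_{>d}\mid x_{\le d})$. The marginal of $x_d$ given $x_{<d}$ is
\[
\rho\propto g^d(\cdot\mid x_{<d})^{1-\nu}\,\mu_*^d(\cdot\mid x_{<d})^\nu .
\]
The update therefore becomes $g^d\mapsto \rho K$, where
\[
K(a,c)=\sum_{x_{>d}}\pi(x_{>d}\mid x_{<d},a)\,\pi(c\mid x_{<d},x_{>d})
\]
is a strictly positive data-augmentation (Gibbs) kernel with stationary law $\mu_*^d(\cdot\mid x_{<d})$.

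I would measure the error in Hilbert's projective metric $d_H$ on the simplex. Geometric interpolation toward $\mu_*^d$ scales $d_H$ by $(1-\nu)$. By Birkhoff's theorem a strictly positive kernel is a strict contraction with coefficient $\tau(K)<1$. So head $d$ contracts with factor at most $(1-\nu_i)\tau(K)<1$; this holds for every $\nu_i\in[0,1]$, and in particular for $\nu_i$ varying but bounded away from zero (hence for the ESS rule). Once head $d$ has converged, the remaining issue is that heads $<d$ see only approximately exact later heads. To avoid needing continuity, I will use the fact that heads $>d$ become exactly correct after finitely many steps, then fall back on continuity of $\Phi$ for the asymptotic chain.

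\textbf{Part (iii): general reveal orders.} Let
\[
\delta(g)=\sup_{d,x_R,c}\bigl|\log g^d(c\mid x_R)-\log\mu_*^d(c\mid x_R)\bigr| .
\]
Along any order the sampler is a product of $D$ heads, so $|\log q_g-\log\pi|\le D\delta$, even after mixing over orders. Then $\log(p_\nu/\pi)$ equals $(1-\nu)\log(q_g/\pi)$ up to a constant, so its oscillation is at most $2(1-\nu)D\delta$. Consequently the conditional laws $p_\nu(\cdot\mid X_R)$ and $\pi(\cdot\mid X_R)$ have likelihood ratio in $[e^{-2(1-\nu)D\delta},e^{2(1-\nu)D\delta}]$. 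Since $h\ge0$, the conditional means differ multiplicatively by the same factor, which gives
\[
\delta(\Phi_\nu g)\le 2D(1-\nu)\,\delta(g).
\]
The condition $2D(1-\nu)<1$ is then a strict contraction, and global convergence follows from $\delta(g_0)<\infty$, which holds by positivity.

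\textbf{Main obstacle.} I expect the hardest step to be (ii), specifically showing that the composition of tempering with the Gibbs kernel contracts even at $\nu=0$, and handling the induction rigorously. For (iii) the work is just careful bookkeeping of the normalization constants.
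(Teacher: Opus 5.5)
Parts (i) and (iii) are correct and essentially follow the paper. Your conditional likelihood-ratio step is the paper's non-expansiveness of Gibbs-label fitting, and it gives the same constant $2D(1-\nu)$. One small mismatch: for random orders the paper's $\Phi_\nu$ tempers the order-augmented weights, $r_{\nu,g}\propto\pi^\nu\sum_\sigma\rho(\sigma)q_g(\cdot\mid\sigma)^{1-\nu}$, not the intractable marginal $q_g$. Your bound covers this case equally well, since each $q_g(x\mid\sigma)/\pi(x)$ already lies in $[e^{-D\delta},e^{D\delta}]$.

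The gap is the induction in (ii). Your hypothesis that all heads $>d$ are exact is reached only for $d=D-1$. Head $D$ is exact after one update, but for $\nu<1$ head $D-1$ then evolves as $g\mapsto\rho K$ and contracts geometrically without ever becoming exact: generically $\rho K\neq\mu_*^{D-1}$ when $\rho\neq\mu_*^{D-1}$. So the ``fact'' that heads $>d$ become exact after finitely many steps is false for every $d\le D-2$. For those heads, the structure you rely on holds at no finite iteration: the tempered suffix law equal to $\pi(x_{>d}\mid x_{\le d})$, the site-$d$ marginal equal to $g^{1-\nu}\mu_*^{\nu}$, and a fixed kernel $K$ with stationary law $\mu_*^d$. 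With inexact later heads, the site-$d$ marginal is $\tilde\rho_k\propto g_k^{1-\nu}\mu_*^{\nu}Z_k$ with $Z_k(a)=\sum_y q_k(y\mid u,a)^{1-\nu}\pi(y\mid u,a)^{\nu}$. The kernel $\tilde K_k$ is then built from the tempered suffix law $\propto q_k^{1-\nu}\pi^{\nu}$. A bare appeal to continuity of $\Phi$ does not control this. You would need a $k$-uniform contraction coefficient and uniform convergence of the perturbed update, and you establish neither.

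The repair fits your framework. Every $\tilde K_k$ is an average of Gibbs labels, so its entries lie in $[\epsilon,1]$ with $\epsilon=\min h>0$. This gives a $k$-uniform Birkhoff coefficient $\bar\tau\le\tanh(\tfrac12\log(1/\epsilon))<1$. Write $e^d_k=\max_u d_H\bigl(g^d_k(\cdot\mid u),\mu_*^d(\cdot\mid u)\bigr)$ and use $\mu_*^d K_d^*=\mu_*^d$ together with the triangle inequality for $d_H$:
\[
e^d_{k+1}\le\bar\tau\bigl[(1-\nu_k)e^d_k+\max_u\osc_a\log Z_k(a)\bigr]+\max_u d_H\bigl(\mu_*^d\tilde K_k,\mu_*^d K_d^*\bigr).
\]
The last two terms tend to zero, uniformly in $\nu_k$, once the later heads converge, so $e^d_k\to0$. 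Backward induction then runs on convergence rather than on exactness. This mirrors the paper's $\nu=0$ proof, $A^d_{k+1}\le\kappa A^d_k+B^d_k$ with a Doeblin coefficient. For $\nu\in(0,1)$ the paper avoids sequential induction entirely through the simultaneous triangular bound $e_{k+1}\le(1-\nu_k)He_k$.

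Once repaired, your route is genuinely tidier than the paper's. The factor $(1-\nu_k)\bar\tau$ handles $\nu=0$ and $\nu>0$ in one argument and works for any sequence $\nu_k\in[0,1]$. So the ESS-adaptive case needs no lower bound on the selected exponents. The paper has to prove such a bound, because its contraction $(1-\nu_{\min})^k$ must beat the $k^{D-1}$ growth of $H^k$.
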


The fixed-point iteration, including pseudocode, is illustrated in \Cref{fig:alg_illustration}.

\subsection{Beyond masked diffusion}
\label{sec:general_interpolants}
 
We next extend the construction beyond masked diffusion to general discrete interpolants.
Unlike the masked case where we have specified the masking process and learn the unmasking process with a single conditional predictor, for a general interpolant, however, we seek two transition dynamics, a
forward and a backward process, that are exact time reversals of one another.

This mirrors the role of stochastic interpolants \citep{albergo2023stochastic} in continuous spaces, where a prescribed probability path can be associated with mutually
time-reversed forward and backward dynamics.
We proceed in two steps.
First, we establish a general discrete interpolant construction that
jointly defines forward and backward generators satisfying the
time-reversal relation.
We then show how the quantities defining these generators can be learned
directly from endpoint energy functions, extending the Gibbs-supervision principle developed above beyond masked diffusion.
We include detailed derivation in \Cref{app:general} and present experimental verification in \Cref{app:bridge}.

\paragraph{Constructing the Discrete interpolants.}  In summary, we first define an interpolant path between samples from the two endpoint distributions.
A simple choice  is to interpolate the endpoint samples $X_0\sim p_0$ and $X_1\sim p_1$ through the coordinate-factorized interpolant
\begin{equation}
    K_t(x\mid x_0,x_1)
    =\prod_d\kappa_t^d(x^d\mid x_0^d,x_1^d),
    \qquad
    p_t(x)=\mathbb E[K_t(x\mid X_0,X_1)].\label{eq:coord_interpolant}
\end{equation}
Then, we seek to learn a pair of forward and backward CTMCs that are (1) time-reversal and (2) shares the same marginal density $p_t$ along the time steps.
This leads to the following requirements, derived from Nelson's relation and the Kolmogorov equations.
Here, following the convention in discrete diffusion, we restrict the dynamics to single-site transitions:
\begin{align}
&\resizebox{0.92\linewidth}{!}{$\displaystyle
\textbf{Nelson's: }\quad
\bar r_t^d(a,c\mid x^{-d})
=
r_t^d(c,a\mid x^{-d})
\frac{p_t(x^d=c,x^{-d})}{p_t(x^d=a,x^{-d})}
=
r_t^d(c,a\mid x^{-d})
\frac{p_t(x^d=c\mid x^{-d})}{p_t(x^d=a\mid x^{-d})}
$};
\label{eq:nel}
\\
&\resizebox{0.92\linewidth}{!}{$\displaystyle
\textbf{Kolmogorov: }\quad
\partial_t p_t(x)
=
\sum_d \sum_{c\neq a}
r_t^d(c,a\mid x^{-d})\,
p_t(x^d=c,x^{-d})
-
r_t^d(a,c\mid x^{-d})\,
p_t(x^d=a,x^{-d}).
$}
\label{eq:kov}
\end{align}
\Cref{eq:nel} suggests a natural symmetric parameterization of the forward and backward generators: 
\begin{align}
    &r_t^d(c, a|x^{-d})  \!= \! m_t^d(c, a|x^{-d}) p_t(x^d = a| x^{-d}); \\& \bar r_t^d(a, c|x^{-d})  \!=\!  m_t^d(c, a|x^{-d}) p_t(x^d = c| x^{-d}),
\end{align}
and we only need to figure out the shared component $m_t^d$ satisfying \Cref{eq:kov}. 
Since our interpolant is coordinate-wise, the time derivative of marginal density can be decomposed into summation of local terms: $\partial_t p_t=\sum_d\partial_t^{(d)}p_t$ with 
$\partial_t^{(d)}p_t(x^{-d})=0$. 
Plugging in these symmetric construction and the local decomposition, a sufficient condition for the Kolmogorov equation to hold can be imposed coordinate-wise as
\begin{equation}
    \partial_t^{(d)}p_t(x^d=a\mid x^{-d})
    =
   \sum_{c\neq a} \underbrace{ p_t(x^d=a\mid x^{-d})
    p_t(x^d=c\mid x^{-d})
    \left[
        m_t^d(c,a\mid x^{-d})
        -
        m_t^d(a,c\mid x^{-d})
    \right]}_{\text{local flux } J^d_t(c, a |x^{-d})}.\label{eq:local_flux}
\end{equation}
Define the temporal potential:
    $\phi_t^d(a\mid x^{-d})
    =
    \partial_t^{(d)}
    \log p_t(x^d=a\mid x^{-d})$
and we can obtain from \Cref{eq:local_flux} that
$
    \phi_t^d(a\mid x^{-d})
    =
    \sum_c
    p_t(x^d=c\mid x^{-d})
    \left[
        m_t^d(c,a\mid x^{-d})
        -
        m_t^d(a,c\mid x^{-d})
    \right]$.
 This shows that the Kolmogorov equation constrains only the antisymmetric part of $m_t^d$.
 Recall that Nelson's relation is satisfied by construction for any choice of $m_t^d$ as well.
We therefore can have the freedom to take the simplest circulation-free choice,
\begin{equation}
m_t^d(c,a\mid x^{-d})
-
m_t^d(a,c\mid x^{-d})
=
\phi_t^d(a\mid x^{-d})
-
\phi_t^d(c\mid x^{-d}).\label{eq:m-difference}
\end{equation}

A non-negative  realization is $ 
    m_t^d(c,a\mid x^{-d})
    =
    \left[
        \phi_t^d(a\mid x^{-d})
        -
        \phi_t^d(c\mid x^{-d})
    \right]_+, $
which  yields
\begin{align}
    r_t^d(c,a\mid x^{-d})
    &=
    p_t(x^d=a\mid x^{-d})
    \left[
        \phi_t^d(a\mid x^{-d})
        -
        \phi_t^d(c\mid x^{-d})
    \right]_+,
    \\
    \bar r_t^d(a,c\mid x^{-d})
    &=
    p_t(x^d=c\mid x^{-d})
    \left[
        \phi_t^d(a\mid x^{-d})
        -
        \phi_t^d(c\mid x^{-d})
    \right]_+.
\end{align}
In other words, we only need to parameterize and learn $  p_t^d(\cdot \mid x^{-d})$ and $\phi_t^d( \cdot \mid x^{-d})$.

 \paragraph{Minimum kinetic energy local flux}
 This choice corresponding to an antisymmetric local flux:
 \begin{align}
      J_t^d(c, a|x^{-d}) = p_t(x^d=c|x^{-d}) \partial_t^{(d)} p_t(x^d=a|x^{-d}) - p_t(x^d=a|x^{-d}) \partial_t^{(d)} p_t(x^d=c|x^{-d})\label{eq:flux}
 \end{align}
Notably, this constructed flux uniquely minimizes the probability-weighted kinetic energy:
at $t$ with $p_t>0$, define the kinetic energy for a local flux $j$ as 
\begin{equation}
\mathcal E_t(j)=
\frac12\sum_d\sum_{x^{-d}}p_t(x^{-d})\sum_a \sum_{c \neq a}
\frac{j_t^d(a,c\mid x^{-d})^2}
     {p_t(x^d=a|x^{-d}) p_t(x^d=c|x^{-d}) }
\end{equation}
The unique minimizer that satisfies Kolmogorov is  given by \Cref{eq:flux}.
Similar construction was also derived and explored by \citet{shaul2025flow}.

\begin{table}[t]
\caption{\textbf{Ising and Potts lattice spin models.} We consider the Ising and Potts model at three different inverse temperatures $\beta$ corresponding to disordered, critical and ordered phases. We consider three evaluation criteria: The effective sample size ($\ESS$), the error in normalization constant estimation $|\Delta\ln Z|$ and the total variation distance between the empirical distribution over the $q$ modes of the reference target density, denoted as $\mTVD$. The table reports Mean${\pm}$standard
deviation over \NSeeds{} seeds. In each column, blue shading is used to indicate methods whose pessimistic value lies within one standard deviation of the best method. Table entries with dashes indicate that the method does not support computing $\ln Z$ and $\ESS$.}
\label{tab:lattice}
\centering
\scriptsize
\setlength{\tabcolsep}{1.5pt}
\begin{tabular}{@{}l ccc@{\hspace{3pt}} ccc@{\hspace{3pt}} ccc@{}}
\toprule
& \multicolumn{3}{c}{disordered} & \multicolumn{3}{c}{critical} & \multicolumn{3}{c}{ordered} \\
\cmidrule(lr){2-4}\cmidrule(lr){5-7}\cmidrule(lr){8-10}
& $|\Delta\ln Z|$ & $\ESS$ & $\mTVD$ & $|\Delta\ln Z|$ & $\ESS$ & $\mTVD$ & $|\Delta\ln Z|$ & $\ESS$ & $\mTVD$ \\
Method & $\times10^{3}\,{\downarrow}$ & ${\uparrow}$ & ${\downarrow}$ & $\times10^{3}\,{\downarrow}$ & ${\uparrow}$ & ${\downarrow}$ & $\times10^{3}\,{\downarrow}$ & ${\uparrow}$ & ${\downarrow}$ \\
\midrule
\multicolumn{10}{@{}l}{\emph{Ising, $25\times25$, $q=2$}} \\
MDNS         & \IsingDisMDNSZ  & \IsingDisMDNSESS  & \IsingDisMDNSMTV  & \IsingCritMDNSZ  & \IsingCritMDNSESS  & \IsingCritMDNSMTV  & \IsingOrdMDNSZ  & \IsingOrdMDNSESS  & \IsingOrdMDNSMTV \\
PDNS         & \IsingDisPDNSZ  & \IsingDisPDNSESS  & \IsingDisPDNSMTV  & \IsingCritPDNSZ  & \IsingCritPDNSESS  & \IsingCritPDNSMTV  & \IsingOrdPDNSZ  & \IsingOrdPDNSESS  & \IsingOrdPDNSMTV \\
DASBS         & \IsingDisASBSZ  & \IsingDisASBSESS  & \IsingDisASBSMTV  & \IsingCritASBSZ  & \IsingCritASBSESS  & \IsingCritASBSMTV  & \IsingOrdASBSZ  & \IsingOrdASBSESS  & \IsingOrdASBSMTV \\
DASBS, memoryless     & \IsingDisASBSMZ & \IsingDisASBSMESS & \IsingDisASBSMMTV & \IsingCritASBSMZ & \IsingCritASBSMESS & \IsingCritASBSMMTV & \IsingOrdASBSMZ & \IsingOrdASBSMESS & \IsingOrdASBSMMTV \\
LEAPS        & \IsingDisLEAPSZ & \IsingDisLEAPSESS & \IsingDisLEAPSMTV & \IsingCritLEAPSZ & \IsingCritLEAPSESS & \IsingCritLEAPSMTV & \IsingOrdLEAPSZ & \IsingOrdLEAPSESS & \IsingOrdLEAPSMTV \\
\method{} (ours)   & \IsingDisDSIZ   & \IsingDisDSIESS   & \IsingDisDSIMTV   & \IsingCritDSIZ   & \IsingCritDSIESS   & \IsingCritDSIMTV   & \IsingOrdDSIZ   & \IsingOrdDSIESS   & \IsingOrdDSIMTV \\
\addlinespace
\multicolumn{10}{@{}l}{\emph{Potts, $16\times16$, $q=3$}} \\
MDNS         & \PottsDisMDNSZ  & \PottsDisMDNSESS  & \PottsDisMDNSMTV  & \PottsCritMDNSZ  & \PottsCritMDNSESS  & \PottsCritMDNSMTV  & \PottsOrdMDNSZ  & \PottsOrdMDNSESS  & \PottsOrdMDNSMTV \\
PDNS         & \PottsDisPDNSZ  & \PottsDisPDNSESS  & \PottsDisPDNSMTV  & \PottsCritPDNSZ  & \PottsCritPDNSESS  & \PottsCritPDNSMTV  & \PottsOrdPDNSZ  & \PottsOrdPDNSESS  & \PottsOrdPDNSMTV \\
DASBS         & \PottsDisASBSZ  & \PottsDisASBSESS  & \PottsDisASBSMTV  & \PottsCritASBSZ  & \PottsCritASBSESS  & \PottsCritASBSMTV  & \PottsOrdASBSZ  & \PottsOrdASBSESS  & \PottsOrdASBSMTV \\
DASBS, memoryless     & \PottsDisASBSMZ & \PottsDisASBSMESS & \PottsDisASBSMMTV & \PottsCritASBSMZ & \PottsCritASBSMESS & \PottsCritASBSMMTV & \PottsOrdASBSMZ & \PottsOrdASBSMESS & \PottsOrdASBSMMTV \\
LEAPS        & \PottsDisLEAPSZ & \PottsDisLEAPSESS & \PottsDisLEAPSMTV & \PottsCritLEAPSZ & \PottsCritLEAPSESS & \PottsCritLEAPSMTV & \PottsOrdLEAPSZ & \PottsOrdLEAPSESS & \PottsOrdLEAPSMTV \\
\method{} (ours)   & \PottsDisDSIZ   & \PottsDisDSIESS   & \PottsDisDSIMTV   & \PottsCritDSIZ   & \PottsCritDSIESS   & \PottsCritDSIMTV   & \PottsOrdDSIZ   & \PottsOrdDSIESS   & \PottsOrdDSIMTV \\
\bottomrule
\end{tabular}
\end{table}

\paragraph{Learning the discrete interpolants.}
Finally, we consider learning $p_t^d$ and $\phi_t^d$ from energy and
buffer samples.
In our setting, samples from $p_0$ are directly available or easy to
obtain, while $p_1$ is specified only through the energy $U_1$.
For a generated endpoint $X_1$, we replace its value at site $d$ by the
Gibbs soft label
$h_1^d(a\mid X_1^{-d})
    \propto
    \exp[-U_1(X_1^{d\leftarrow a})].
$ 
Under  independent coupling $p_0\otimes p_1$, define
\begin{align}
\ell_{t,d}(c)
&=
\sum_a h_1^d(a\mid X_1^{-d})
\kappa_t^d(c\mid X_0^d,a),
&
\dot\ell_{t,d}(c)
&=
\sum_a h_1^d(a\mid X_1^{-d})
\partial_t\kappa_t^d(c\mid X_0^d,a).
\end{align}
The two fields can then be learned with
\begin{align}
\resizebox{0.93\linewidth}{!}{$\displaystyle
\mathcal L_q(\theta)
=
-\mathbb E\sum_{d,c}\ell_{t,d}(c)\log p_{\theta,t}^d(c\mid X_t^{-d}),
\quad
\mathcal L_\phi(\psi)
=
\mathbb E\sum_{d,c}
\left[
\frac12\ell_{t,d}(c)\phi_{\psi,t}^d(c\mid X_t^{-d})^2
-\dot\ell_{t,d}(c)\phi_{\psi,t}^d(c\mid X_t^{-d})
\right]
$}
\end{align}
The intermediate state $X_t$ is sampled directly from
$K_t(\cdot\mid X_0,X_1)$, so the same endpoint pair and its energy-based
labels can be reused across many draws of $t$ and $X_t$.
As in the masked case, because $X_1$ is generated by the current sampler
rather than drawn from $p_1$, we weight each endpoint pair by the tempered
importance weight $\bar w$ with ESS control.
The two objectives above are therefore replaced in practice by their
weighted versions.
Detailed form and derivation can be found in  \Cref{app:learn}.

Since both $p_t^d(\cdot\mid x^{-d})$ and $\phi_t^d(\cdot\mid x^{-d})$ must not depend on the queried value $x^d$, we parameterize them with a
hollow (or blind-spot) architecture \citep{laine2019high,sun2022score,zhao2026informed}, whose architecture prevents the
$d$-th output from accessing $x^d$ while retaining the full context
$x^{-d}$.

\begin{figure}[t]
\centering
\begin{minipage}[c]{0.57\textwidth}
\centering
\includegraphics{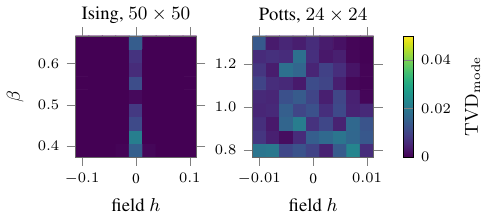}
\end{minipage}\hfill
\begin{minipage}[c]{0.40\textwidth}
\centering
\includegraphics{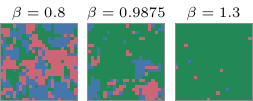}\\[0.35em]
\footnotesize
\setlength{\tabcolsep}{1.5pt}
\begin{tabular}{@{}lcc@{}}
\toprule
$\E_{(\beta,h)}[\cdot]$ & \shortstack{Ising\\$50{\times}50$} & \shortstack{Potts\\$24{\times}24$} \\
\midrule
${|\Delta\ln Z|}\downarrow$ {\scriptsize($10^{-3}$)} & \FPIsingZ & \FPPottsZ \\
${\ESS}\uparrow$ & \FPIsingESS & \FPPottsESS \\
${\mTVD}\downarrow$ & \FPIsingMTVRW & \FPPottsMTVRW \\
\bottomrule
\end{tabular}
\end{minipage}
\caption{\textbf{Amortized training across $(\beta,h)$ of $\method{}$.}
\emph{Left:} a low $\mTVD$ across all $(\beta,h)$ indicates that our method avoids mode collapse and accurately learns the relative mode weights, which change under an external field $h$.
\emph{Right, top:} Potts samples of the amortized model at $h=0$ in the disordered phase, near the critical point and in the ordered phase.
\emph{Right, bottom:} average $\ESS$, $|\Delta\ln Z|$ and $\mTVD$ over the $9\times9$ grid of $(\beta,h)$ values, all computed with the importance weights. The results show mean $\pm$
standard deviation over three seeds.}
\label{fig:fp-tv}
\end{figure}
 
\section{Related work}
\label{sec:related}
 
 \paragraph{Neural samplers on discrete spaces.} Discrete neural samplers include AR models
\citep{wu2019solving,nicoli2020asymptotically}, GFlowNets
\citep{bengio2021flow,malkin2022trajectory}, and  CTMCs including LEAPS, MDNS, PDNS, Discrete ASBS, JANUS
\citep{holderrieth2025leaps,zhu2025mdns,guo2026asbs,guo2026proximal,blessing2026janus}.
Ours combines Gibbs soft-label supervision with tempered
importance weighting, achieving a stable and scalable training and also mitigating mode collapsing.

\paragraph{Discrete interpolant and flow models. } 
Our method supports construction for general interpolants between two arbitrary distributions.
Discrete flow and generator matching also support learning dynamics between general distributions
\citep{gat2024discrete,holderrieth2025generator}.
However, they typically do not learn a pair of time-reversed forward and backward dynamics.
Recently, \citet{he2026free} develop a framework for learning mutually
time-reversed forward and backward dynamics.
However, they all consider data-based training, while our method learns the local conditional distribution and temporal potential from endpoint energies.

Appendix~\ref{app:related} provides a more detailed discussion on the related works.

\section{Experiments}\label{sec:experiments}
  
We perform several experiments on the Ising and Potts lattice spin models at different stages of the phase transition (\cref{sec:lattice}). Next, we test our method on larger lattices, where we train an amortized sampler that is conditioned on two thermodynamic parameters
(\cref{sec:amortized}). Lastly, we consider two alloys, where the
quantity of interest is a phase diagram (\cref{sec:alloy}). We refer to \cref{app:setup} for details on the experimental setup and
to \cref{app:further} for further experiments, including a component ablation
and a bridge between two Boltzmann distributions.

\subsection{Ising and Potts lattice spin models}\label{sec:lattice}
We use the Ising model on a $25\times25$ lattice ($D=625$) and the 
Potts model on a $16\times16$ lattice ($D=256$).  Both place a $q$-state spin
$\sigma_i\in\{1,\dots,q\}$ ($q=2$ for Ising and $q=3$ for Potts)  on every site of a periodic square lattice and
favor equal neighbors through the energies, i.e., $E(\sigma)=-\sum_{\langle ij\rangle}\delta_{\sigma_i\sigma_j}$
for Potts and $E(\sigma)=-\sum_{\langle ij\rangle}s_is_j$ with spins
$s_i=\pm1$ for Ising with the convention
$\delta_{\sigma_i\sigma_j}=(1+s_is_j)/2$. Hence, the target is $p_\beta(\sigma)\propto
e^{-\beta E(\sigma)}$ at inverse temperature $\beta$.  Each model is sampled
at a {disordered}, a {critical} and an {ordered} $\beta$
(\cref{app:lattice-tasks}).  Above $\beta_c$ the target has $q$
symmetry-related modes.  We compare against MDNS \citep{zhu2025mdns}, PDNS \citep{guo2026proximal}, DASBS \citep{guo2026asbs} and LEAPS \citep{holderrieth2025leaps}.  All methods are trained for the same number of gradient steps and are evaluated identically.  Besides the effective sample size ($\ESS$) of the importance weights and the error $|\Delta\ln Z|$ of the normalization-constant estimate, we report the mode TVD ($\mTVD$).  Each draw is assigned to one of the $q$ modes and $\mTVD$ is the total variation distance between the resulting categorical distribution over the $q$ modes and that of the reference. A detailed overview of computational costs is given in \cref{tab:compute}.

Away from the ordered phase the interpolant-based samplers,
\method, MDNS and PDNS are all close to the exact $\ln Z$ on Ising and
Potts, with \method{} leading in every $|\Delta\ln Z|$ column except critical Potts, where MDNS and PDNS are marginally better.  Memoryless DASBS and LEAPS are one to three orders of magnitude
further off. However, the ordered phase, which is multimodal, separates the samplers, where all methods except for DIG-IN and LEAPS suffer from mode collapse as shown by $\mTVD$. The high $\ln Z$ error from LEAPS indicates that training is unstable leaving \method{} as the only approach that does not suffer from mode collapse while giving accurate $\ln Z$ estimates.

\begin{figure}[t]
\centering
\begin{minipage}[c]{0.50\textwidth}
\centering
\includegraphics{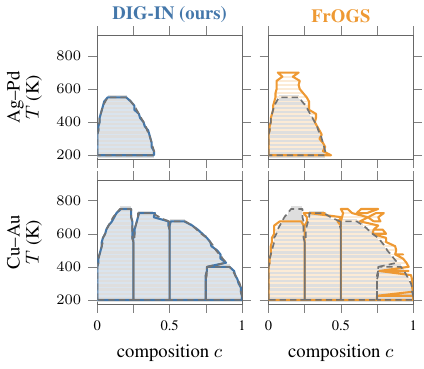}
\end{minipage}\hfill
\begin{minipage}[c]{0.47\textwidth}
\centering
\includegraphics{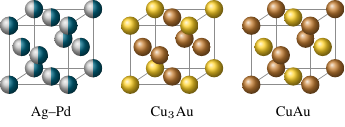}\\[0.6em]
\scriptsize
\setlength{\tabcolsep}{2.5pt}
\begin{tabular}{@{}lcc@{}}
\toprule
$\E_{(T,\Delta\mu)}[\cdot]$& FrOGS & \method{} (ours) \\
\midrule
\multicolumn{3}{@{}l}{\underline{Ag--Pd, $D=125$}} \\
\quad ${|\Delta c|}\downarrow$ \,$(\AgPdCErrUnit)$ & \AgPdFrOGSCErr & \AgPdDSICErr \\
\quad ${|\Delta \ln Z|}/D\downarrow$ \,$(\AgPdZErrUnit)$ & \AgPdFrOGSZErr & \AgPdDSIZErr \\
\quad ${\ESS}\uparrow$ & \AgPdFrOGSESS & \AgPdDSIESS \\
\addlinespace
\multicolumn{3}{@{}l}{\underline{Cu--Au, $D=128$}} \\
\quad ${|\Delta c|}\downarrow$ \,$(\CuAuCErrUnit)$ & \CuAuFrOGSCErr & \CuAuDSICErr \\
\quad ${|\Delta \ln Z|}/D\downarrow$ \,$(\CuAuZErrUnit)$ & \CuAuFrOGSZErr & \CuAuDSIZErr \\
\quad ${\ESS}\uparrow$ & \CuAuFrOGSESS & \CuAuDSIESS \\
\bottomrule
\end{tabular}
\end{minipage}
\caption{\textbf{Alloy phase diagrams.}  \emph{Left:} Ag--Pd ($D=125$, top) and Cu--Au ($D=128$,
bottom).  The grey dashed lines are the reference phase boundaries and the colored lines are the phase boundaries the model predicted. 
\emph{Right, top:} example structures of different phases: Ag--Pd is a random fcc alloy, shown with mixed occupancy, and separates
into Ag-rich and Pd-rich fcc below its miscibility gap; Cu--Au orders into Cu$_3$Au, CuAu and CuAu$_3$.
\emph{Right, bottom:} average composition error $|\Delta c|$, $|\Delta\ln Z|/D$ and $\ESS$ over the $(T,\DMu)$ values, which are the thermodynamic conditions the model was trained on. The results show mean $\pm$ standard deviation over three seeds. The best results are highlighted using blue shading.}
\label{fig:alloy-pd}
\end{figure}

\subsection{Amortized training}\label{sec:amortized}
We train an amortized version of \method{}
conditioned on $(\beta,h)$ for the Ising and Potts model, where $h$ is an external field. The external field results in a new target density given as $\log\tilde p(\sigma|\beta,h)=\beta(\sum_{\langle ij\rangle}s_is_j+h\sum_is_i)$ for Ising and $\log\tilde p(\sigma|\beta,h)=\beta(\sum_{\langle ij\rangle}\delta_{\sigma_i\sigma_j}+h\,N_0(\sigma))$ for Potts, where $N_0(\sigma)$ counts the sites in state~0. For the Ising model, we use a $50\times50$ lattice and therefore $D=2500$. We consider $\beta\in[\FPIsingBetaLo,\FPIsingBetaHi]$, and 
$|h|\le\FPIsingHmax$. For the Potts model we use a $24\times 24$ lattice ($D=576$) and $\beta\in[\FPPottsBetaLo,\FPPottsBetaHi]$, $|h|\le\FPPottsHmax$. In this setting, a single sampler has to  represent the disordered phase, criticality and the ordered phase. We evaluate \method{} on a  $9\times9$ grid of $(\beta,h)$.

When amortizing across multiple conditions and a phase transition in high dimensions, \method{} is able to achieve high ESS values and low $\ln Z$ error. Moreover, under an external field $h$, the $q$ different modes do not have equal relative weight, meaning that the neural sampler not only has to avoid mode collapse but also needs to be able to accurately learn the relative weights of the modes. \cref{fig:fp-tv} shows that \method{} is indeed able to achieve both, mode coverage and correct relative mode weighting, as indicated by the low $\mTVD$ across all thermodynamic conditions $(\beta, h).$

\subsection{Alloy phase diagrams}\label{sec:alloy}
Mapping the phase diagram of an alloy requires sampling atomic configurations across a plane of chemical
conditions $(T,\DMu)$.  We train one amortized sampler per alloy, conditioned on $(T,\DMu)$, and
compare \method{} with FrOGS \citep{min2026frogs}, which places a conditioned
autoregressive prior \citep{du2026scaling} in front of learned LEAPS transport.  The systems are those of
\citet{damewood2022sampling} and \citet{min2026frogs}, i.e., Ag--Pd on a
$5{\times}5{\times}5$ fcc cell ($D=125$), with a single miscibility gap, and
Cu--Au on an $8{\times}4{\times}4$ cell ($D=128$), with three ordered phases.
The energy is a DFT-fitted cluster expansion and the target is semi-grand
canonical over $T\in[200,900]$\,K and $\DMu\in[-0.4,0.4]$\,eV.  Each method trains
one model per system for 60k steps on the same energy, grid and conditioning
distribution with \AlloyNSeeds{} seeds, and draws \AlloyNDraws{} samples at
each of the grid points. Phase
diagrams are computed using $\log Z(T,\DMu)$ estimates paired with the common-tangent
construction of \citet{min2026frogs}. For further details see \cref{app:alloy-tasks,app:phase-diagram,app:alloy-metrics}.

\method{} leads on every quantity measured, on both systems
(\cref{fig:alloy-pd}).  Its free energy is one to two orders of magnitude closer to the reference and its $\ESS$ is significantly higher.  Moreover, \method{} correctly recovers the phase boundaries without adding spurious ones on both systems. In contrast, FrOGS fails at finding all phase boundaries and adds several spurious ones.

\section{Conclusion}
\label{sec:conclusion}
We introduced DIG-IN, a fixed-point neural sampler for discrete, unnormalized distributions. DIG-IN uses Gibbs labels which turn local energy differences into a soft target for the masked denoiser that has the same optimum as the one-hot objective but lower variance. Since target samples are unavailable, DIG-IN trains on self-generated configurations and corrects the resulting bias with importance weights tempered under ESS control. Assuming exact sampling and exact population fitting, this iteration provably converges to the target: for a fixed reveal order at any tempering exponent, and for arbitrary reveal orders once the tempering exponent is sufficiently close to one. The same Gibbs supervision carries over from masked diffusion to general discrete interpolants between two distributions. Empirically, DIG-IN avoids mode collapse in the ordered phase of the Ising and Potts models while also giving accurate $\ln Z$ estimates, successfully scales to lattices with up to $D = 2500$ sites when amortizing over thermodynamic conditions, and recovers the phase diagrams of Ag--Pd and Cu--Au alloys. 

\paragraph{Limitations and future work.} Despite lowering or matching the computational costs compared to most baselines, Gibbs labels requires computing single-site energy differences. For local
energies such as lattice models and cluster expansions, this table costs about
one energy evaluation. For a black-box energy, however, it requires
$D(N-1)+1$ evaluations per configuration. This motivates cheaper label estimators, for instance by subsampling sites. 
While we establish convergence guarantee for the certain setting of our algorithm, it does not cover the practical setting in our experiments all.
 Moreover, the benefit and extension of general interpolants are worth further studying. The broader applications of discrete samplers  in biology and chemistry are left as future work.

\section*{Acknowledgments}
JH acknowledges support from the University of Cambridge Harding Distinguished Postgraduate
Scholars Programme.  D.B. acknowledges support by funding from a Google PhD fellowship in Machine Learning and ML Foundations.

\section*{AI use statement}

LLMs were used to draft parts of the paper from the authors' notes and polish the manuscript for grammar and consistency; help develop preliminary proofs; implement parts of the experiment code and the figure and table scripts; help with literature search; and review the draft for errors before submission. The research question, method, and experimental design originate from the authors.

\bibliographystyle{iclr2027_conference}
\bibliography{references}

@article{blessing2026trust,
  title={Trust region constrained measure transport in path space for stochastic optimal control and inference},
  author={Blessing, Denis and Berner, Julius and Richter, Lorenz and Domingo i Enrich, Carles and Du, Yuanqi and Vahdat, Arash and Neumann, Gerhard},
  journal={Advances in Neural Information Processing Systems},
  volume={38},
  pages={165462--165510},
  year={2026}
}

@article{laine2019high,
  title={High-quality self-supervised deep image denoising},
  author={Laine, Samuli and Karras, Tero and Lehtinen, Jaakko and Aila, Timo},
  journal={Advances in neural information processing systems},
  volume={32},
  year={2019}
}

@article{du2026scaling,
  title={Scaling autoregressive models for lattice thermodynamics},
  author={Du, Xiaochen and Nam, Juno and Liu, Sulin and G{\'o}mez-Bombarelli, Rafael},
  journal={arXiv preprint arXiv:2603.14695},
  year={2026}
}

@article{havens2025adjoint,
  title={Adjoint sampling: Highly scalable diffusion samplers via adjoint matching},
  author={Havens, Aaron and Miller, Benjamin Kurt and Yan, Bing and Domingo-Enrich, Carles and Sriram, Anuroop and Wood, Brandon and Levine, Daniel and Hu, Bin and Amos, Brandon and Karrer, Brian and others},
  journal={arXiv preprint arXiv:2504.11713},
  year={2025}
}

@article{zhao2026informed,
  title={Informed correctors for discrete diffusion models},
  author={Zhao, Yixiu and Shi, Jiaxin and Chen, Feng and Druckmann, Shaul and Mackey, Lester and Linderman, Scott},
  journal={Advances in Neural Information Processing Systems},
  volume={38},
  pages={125510--125538},
  year={2026}
}

@article{sun2022score,
  title={Score-based continuous-time discrete diffusion models},
  author={Sun, Haoran and Yu, Lijun and Dai, Bo and Schuurmans, Dale and Dai, Hanjun},
  journal={arXiv preprint arXiv:2211.16750},
  year={2022}
}

@article{he2025feat,
  title={FEAT: Free energy estimators with adaptive transport},
  author={He, Jiajun and Du, Yuanqi and Vargas, Francisco and Wang, Yuanqing and Gomes, Carla P and Hern{\'a}ndez-Lobato, Jos{\'e} Miguel and Vanden-Eijnden, Eric},
  journal={arXiv preprint arXiv:2504.11516},
  year={2025}
}

@article{he2026free,
  title   = {Free energy Estimation on Any State Space},
  author  = {He, Jiajun and Ou, Zijing and Vargas, Francisco and Li, Yingzhen and Hern{\'a}ndez-Lobato, Jos{\'e} Miguel and Domingo-Enrich, Carles and Du, Yuanqi},
  journal = {arXiv preprint arXiv:2605.31063},
  year    = {2026},
  doi     = {10.48550/arXiv.2605.31063},
  url     = {https://arxiv.org/abs/2605.31063}
}

@article{albergo2023stochastic,
  title   = {Stochastic interpolants: A unifying framework for flows and diffusions},
  author  = {Albergo, Michael S. and Boffi, Nicholas M. and Vanden-Eijnden, Eric},
  journal = {Journal of Machine Learning Research},
  volume  = {26},
  number  = {209},
  pages   = {1--80},
  year    = {2025}
}

@inproceedings{austin2021structured,
  title     = {Structured denoising diffusion models in discrete state-spaces},
  author    = {Austin, Jacob and Johnson, Daniel D. and Ho, Jonathan and Tarlow, Daniel and van den Berg, Rianne},
  booktitle = {Advances in Neural Information Processing Systems},
  year      = {2021}
}

@inproceedings{campbell2022continuous,
  title     = {A continuous time framework for discrete denoising models},
  author    = {Campbell, Andrew and Benton, Joe and De Bortoli, Valentin and Rainforth, Tom and Deligiannidis, George and Doucet, Arnaud},
  booktitle = {Advances in Neural Information Processing Systems},
  year      = {2022}
}

@inproceedings{campbell2024generative,
  title     = {Generative flows on discrete state-spaces: Enabling multimodal flows with applications to protein co-design},
  author    = {Campbell, Andrew and Yim, Jason and Barzilay, Regina and Rainforth, Tom and Jaakkola, Tommi},
  booktitle = {International Conference on Machine Learning},
  year      = {2024}
}

@inproceedings{gat2024discrete,
  title     = {Discrete flow matching},
  author    = {Gat, Itai and Remez, Tal and Shaul, Neta and Kreuk, Felix and Chen, Ricky T. Q. and Synnaeve, Gabriel and Adi, Yossi and Lipman, Yaron},
  booktitle = {Advances in Neural Information Processing Systems},
  year      = {2024}
}

@inproceedings{holderrieth2025generator,
  title     = {Generator matching: Generative modeling with arbitrary {M}arkov processes},
  author    = {Holderrieth, Peter and Havasi, Marton and Yim, Jason and Shaul, Neta and Gat, Itai and Jaakkola, Tommi and Karrer, Brian and Chen, Ricky T. Q. and Lipman, Yaron},
  booktitle = {International Conference on Learning Representations},
  year      = {2025}
}

@inproceedings{shaul2025flow,
  title     = {Flow matching with general discrete paths: A kinetic-optimal perspective},
  author    = {Shaul, Neta and Gat, Itai and Havasi, Marton and Severo, Daniel and Sriram, Anuroop and Holderrieth, Peter and Karrer, Brian and Lipman, Yaron and Chen, Ricky T. Q.},
  booktitle = {International Conference on Learning Representations},
  year      = {2025},
  url       = {https://arxiv.org/abs/2412.03487}
}

@inproceedings{meng2022concrete,
  title     = {Concrete score matching: Generalized score matching for discrete data},
  author    = {Meng, Chenlin and Choi, Kristy and Song, Jiaming and Ermon, Stefano},
  booktitle = {Advances in Neural Information Processing Systems},
  year      = {2022}
}

@inproceedings{sun2023score,
  title     = {Score-based continuous-time discrete diffusion models},
  author    = {Sun, Haoran and Yu, Lijun and Dai, Bo and Schuurmans, Dale and Dai, Hanjun},
  booktitle = {International Conference on Learning Representations},
  year      = {2023}
}

@inproceedings{lou2024discrete,
  title     = {Discrete diffusion modeling by estimating the ratios of the data distribution},
  author    = {Lou, Aaron and Meng, Chenlin and Ermon, Stefano},
  booktitle = {International Conference on Machine Learning},
  year      = {2024}
}

@inproceedings{sahoo2024simple,
  title     = {Simple and effective masked diffusion language models},
  author    = {Sahoo, Subham Sekhar and Arriola, Marianne and Schiff, Yair and Gokaslan, Aaron and Marroquin, Edgar and Chiu, Justin T. and Rush, Alexander and Kuleshov, Volodymyr},
  booktitle = {Advances in Neural Information Processing Systems},
  year      = {2024}
}

@inproceedings{shi2024simplified,
  title     = {Simplified and generalized masked diffusion for discrete data},
  author    = {Shi, Jiaxin and Han, Kehang and Wang, Zhe and Doucet, Arnaud and Titsias, Michalis K.},
  booktitle = {Advances in Neural Information Processing Systems},
  year      = {2024}
}

@inproceedings{uria2014deep,
  title     = {A deep and tractable density estimator},
  author    = {Uria, Benigno and Murray, Iain and Larochelle, Hugo},
  booktitle = {International Conference on Machine Learning},
  year      = {2014}
}

@inproceedings{hoogeboom2022autoregressive,
  title     = {Autoregressive diffusion models},
  author    = {Hoogeboom, Emiel and Gritsenko, Alexey A. and Bastings, Jasmijn and Poole, Ben and van den Berg, Rianne and Salimans, Tim},
  booktitle = {International Conference on Learning Representations},
  year      = {2022}
}

@inproceedings{shih2022training,
  title     = {Training and inference on any-order autoregressive models the right way},
  author    = {Shih, Andy and Sadigh, Dorsa and Ermon, Stefano},
  booktitle = {Advances in Neural Information Processing Systems},
  year      = {2022}
}

@article{noe2019boltzmann,
  title   = {Boltzmann generators: Sampling equilibrium states of many-body systems with deep learning},
  author  = {No{\'e}, Frank and Olsson, Simon and K{\"o}hler, Jonas and Wu, Hao},
  journal = {Science},
  volume  = {365},
  number  = {6457},
  pages   = {eaaw1147},
  year    = {2019}
}

@article{blessing2026bridge,
  title   = {Bridge matching sampler: Scalable sampling via generalized fixed-point diffusion matching},
  author  = {Blessing, Denis and Richter, Lorenz and Berner, Julius and Malitskiy, Egor and Neumann, Gerhard},
  journal = {arXiv preprint arXiv:2603.00530},
  year    = {2026},
  url     = {https://arxiv.org/abs/2603.00530}
}

@article{chen2025neuralflow,
  title   = {Neural Flow Samplers with Shortcut Models},
  author  = {Chen, Wuhao and Ou, Zijing and Li, Yingzhen},
  journal = {arXiv preprint arXiv:2502.07337},
  year    = {2025},
  url     = {https://arxiv.org/abs/2502.07337}
}

@article{liu2025adjoint,
  title   = {Adjoint {Schr{\"o}dinger} Bridge Sampler},
  author  = {Liu, Guan-Horng and Choi, Jaemoo and Chen, Yongxin and Miller, Benjamin Kurt and Chen, Ricky T. Q.},
  journal = {arXiv preprint arXiv:2506.22565},
  year    = {2025},
  url     = {https://arxiv.org/abs/2506.22565}
}

@article{wu2019solving,
  title   = {Solving statistical mechanics using variational autoregressive networks},
  author  = {Wu, Dian and Wang, Lei and Zhang, Pan},
  journal = {Physical Review Letters},
  volume  = {122},
  number  = {8},
  pages   = {080602},
  year    = {2019}
}

@article{nicoli2020asymptotically,
  title   = {Asymptotically unbiased estimation of physical observables with neural samplers},
  author  = {Nicoli, Kim A. and Nakajima, Shinichi and Strodthoff, Nils and Samek, Wojciech and M{\"u}ller, Klaus-Robert and Kessel, Pan},
  journal = {Physical Review E},
  volume  = {101},
  number  = {2},
  pages   = {023304},
  year    = {2020}
}

@inproceedings{zhang2022generative,
  title     = {Generative flow networks for discrete probabilistic modeling},
  author    = {Zhang, Dinghuai and Malkin, Nikolay and Liu, Zhen and Volokhova, Alexandra and Courville, Aaron and Bengio, Yoshua},
  booktitle = {International Conference on Machine Learning},
  year      = {2022}
}

@inproceedings{bengio2021flow,
  title     = {Flow network based generative models for non-iterative diverse candidate generation},
  author    = {Bengio, Emmanuel and Jain, Moksh and Korablyov, Maksym and Precup, Doina and Bengio, Yoshua},
  booktitle = {Advances in Neural Information Processing Systems},
  year      = {2021}
}

@article{holderrieth2025leaps,
  title   = {{LEAPS}: A discrete neural sampler via locally equivariant networks},
  author  = {Holderrieth, Peter and Albergo, Michael S. and Jaakkola, Tommi},
  journal = {arXiv preprint arXiv:2502.10843},
  year    = {2025},
  url     = {https://arxiv.org/abs/2502.10843}
}

@article{woo2025energy,
  title   = {Energy-based generator matching: A neural sampler for general state space},
  author  = {Woo, Dongyeop and Kim, Minsu and Kim, Minkyu and Seong, Kiyoung and Ahn, Sungsoo},
  journal = {arXiv preprint arXiv:2505.19646},
  year    = {2025},
  url     = {https://arxiv.org/abs/2505.19646}
}

@article{zhu2025mdns,
  title   = {{MDNS}: Masked diffusion neural sampler via stochastic optimal control},
  author  = {Zhu, Yuchen and Guo, Wei and Choi, Jaemoo and Liu, Guan-Horng and Chen, Yongxin and Tao, Molei},
  journal = {arXiv preprint arXiv:2508.10684},
  year    = {2025},
  url     = {https://arxiv.org/abs/2508.10684}
}

@article{guo2026asbs,
  title   = {Discrete adjoint {Schr{\"o}dinger} bridge sampler},
  author  = {Guo, Wei and Zhu, Yuchen and Du, Xiaochen and Nam, Juno and Chen, Yongxin and G{\'o}mez-Bombarelli, Rafael and Liu, Guan-Horng and Tao, Molei and Choi, Jaemoo},
  journal = {arXiv preprint arXiv:2602.08243},
  year    = {2026},
  url     = {https://arxiv.org/abs/2602.08243}
}

@article{swendsen1987nonuniversal,
  title   = {Nonuniversal critical dynamics in {M}onte {C}arlo simulations},
  author  = {Swendsen, Robert H. and Wang, Jian-Sheng},
  journal = {Physical Review Letters},
  volume  = {58},
  number  = {2},
  pages   = {86--88},
  year    = {1987}
}

@article{wolff1989collective,
  title   = {Collective {M}onte {C}arlo updating for spin systems},
  author  = {Wolff, Ulli},
  journal = {Physical Review Letters},
  volume  = {62},
  number  = {4},
  pages   = {361--364},
  year    = {1989}
}

@article{kaufman1949crystal,
  title   = {Crystal statistics. {II}. {P}artition function evaluated by spinor analysis},
  author  = {Kaufman, Bruria},
  journal = {Physical Review},
  volume  = {76},
  number  = {8},
  pages   = {1232--1243},
  year    = {1949}
}

@inproceedings{schulman2015trust,
  title     = {Trust region policy optimization},
  author    = {Schulman, John and Levine, Sergey and Abbeel, Pieter and Jordan, Michael I. and Moritz, Philipp},
  booktitle = {International Conference on Machine Learning},
  year      = {2015}
}

@inproceedings{ou2025absorbing,
  title     = {Your absorbing discrete diffusion secretly models the conditional distributions of clean data},
  author    = {Ou, Jingyang and Nie, Shen and Xue, Kaiwen and Zhu, Fengqi and Sun, Jiacheng and Li, Zhenguo and Li, Chongxuan},
  booktitle = {International Conference on Learning Representations},
  year      = {2025}
}

@inproceedings{arriola2025block,
  title     = {Block diffusion: Interpolating between autoregressive and diffusion language models},
  author    = {Arriola, Marianne and Gokaslan, Aaron and Chiu, Justin T. and Yang, Zhihan and Qi, Zhixuan and Han, Jiaqi and Sahoo, Subham Sekhar and Kuleshov, Volodymyr},
  booktitle = {International Conference on Learning Representations},
  year      = {2025}
}

@inproceedings{malkin2022trajectory,
  title     = {Trajectory balance: Improved credit assignment in {GFlowNets}},
  author    = {Malkin, Nikolay and Jain, Moksh and Bengio, Emmanuel and Sun, Chen and Bengio, Yoshua},
  booktitle = {Advances in Neural Information Processing Systems},
  year      = {2022}
}

@article{kong1994sequential,
  title   = {Sequential imputations and {Bayesian} missing data problems},
  author  = {Kong, Augustine and Liu, Jun S. and Wong, Wing Hung},
  journal = {Journal of the American Statistical Association},
  volume  = {89},
  number  = {425},
  pages   = {278--288},
  year    = {1994}
}

@article{cheng2026atlas,
  title={ATLAS: A Foundation Neural Sampler for Amorphous Materials},
  author={Cheng, Mouyang and Blessing, Denis and Yu, Botao and Neumann, Gerhard and Li, Mingda and Domingo-Enrich, Carles and Du, Yuanqi},
  journal={arXiv preprint arXiv:2607.19198},
  year={2026}
}

@article{mcnaughton2020boosting,
  title   = {Boosting {Monte Carlo} simulations of spin glasses using autoregressive neural networks},
  author  = {McNaughton, B. and Milo{\v{s}}evi{\'c}, M. V. and Perali, A. and Pilati, S.},
  journal = {Physical Review E},
  volume  = {101},
  number  = {5},
  pages   = {053312},
  year    = {2020}
}

@article{pan2021solving,
  title   = {Solving statistical mechanics on sparse graphs with feedback-set variational autoregressive networks},
  author  = {Pan, Feng and Zhou, Pengfei and Zhou, Hai-Jun and Zhang, Pan},
  journal = {Physical Review E},
  volume  = {103},
  number  = {1},
  pages   = {012103},
  year    = {2021}
}

@article{hibatallah2021variational,
  title   = {Variational neural annealing},
  author  = {Hibat-Allah, Mohamed and Inack, Estelle M. and Wiersema, Roeland and Melko, Roger G. and Carrasquilla, Juan},
  journal = {Nature Machine Intelligence},
  volume  = {3},
  number  = {11},
  pages   = {952--961},
  year    = {2021}
}

@article{bialas2022hierarchical,
  title   = {Hierarchical autoregressive neural networks for statistical systems},
  author  = {Bia{\l}as, Piotr and Korcyl, Piotr and Stebel, Tomasz},
  journal = {Computer Physics Communications},
  volume  = {281},
  pages   = {108502},
  year    = {2022}
}

@article{bialas2023analysis,
  title   = {Analysis of autocorrelation times in neural {Markov} chain {Monte Carlo} simulations},
  author  = {Bia{\l}as, Piotr and Korcyl, Piotr and Stebel, Tomasz},
  journal = {Physical Review E},
  volume  = {107},
  number  = {1},
  pages   = {015303},
  year    = {2023}
}

@article{biazzo2023autoregressive,
  title   = {The autoregressive neural network architecture of the {Boltzmann} distribution of pairwise interacting spins systems},
  author  = {Biazzo, Indaco},
  journal = {Communications Physics},
  volume  = {6},
  pages   = {296},
  year    = {2023}
}

@article{delbono2025nearest,
  title   = {Nearest-neighbours neural network architecture for efficient sampling of statistical physics models},
  author  = {Del Bono, Luca Maria and Ricci-Tersenghi, Federico and Zamponi, Francesco},
  journal = {Machine Learning: Science and Technology},
  volume  = {6},
  number  = {2},
  pages   = {025029},
  year    = {2025}
}

@article{bialas2026sampling,
  title   = {Sampling two-dimensional spin systems with transformers},
  author  = {Bia{\l}as, Piotr and Korcyl, Piotr and Stebel, Tomasz and Stefa{\'n}ski, Adam and Zapolski, Dawid},
  journal = {arXiv preprint arXiv:2604.27738},
  year    = {2026}
}

@inproceedings{jain2022biological,
  title     = {Biological sequence design with {GFlowNets}},
  author    = {Jain, Moksh and Bengio, Emmanuel and Hernandez-Garcia, Alex and Rector-Brooks, Jarrid and Dossou, Bonaventure F. P. and Ekbote, Chanakya and Fu, Jie and Zhang, Tianyu and Kilgour, Michael and Zhang, Dinghuai and Simine, Lena and Das, Payel and Bengio, Yoshua},
  booktitle = {International Conference on Machine Learning},
  year      = {2022}
}

@article{zhang2023let,
  title={Let the flows tell: Solving graph combinatorial problems with gflownets},
  author={Zhang, Dinghuai and Dai, Hanjun and Malkin, Nikolay and Courville, Aaron C and Bengio, Yoshua and Pan, Ling},
  journal={Advances in neural information processing systems},
  volume={36},
  pages={11952--11969},
  year={2023}
}

@article{carter2026discrete,
  title={Discrete diffusion samplers and bridges: Off-policy algorithms and applications in latent spaces},
  author={Carter, Arran and Choi, Sanghyeok and Tamogashev, Kirill and Elvira, V{\'\i}ctor and Whitammer, Esmeralda S},
  journal={arXiv preprint arXiv:2602.05961},
  year={2026}
}

@inproceedings{sanokowski2023variational,
  title     = {Variational annealing on graphs for combinatorial optimization},
  author    = {Sanokowski, Sebastian and Berghammer, Wilhelm and Hochreiter, Sepp and Lehner, Sebastian},
  booktitle = {Advances in Neural Information Processing Systems},
  year      = {2023}
}

@inproceedings{sanokowski2024diffusion,
  title     = {A diffusion model framework for unsupervised neural combinatorial optimization},
  author    = {Sanokowski, Sebastian and Hochreiter, Sepp and Lehner, Sebastian},
  booktitle = {International Conference on Machine Learning},
  year      = {2024}
}

@inproceedings{sanokowski2025scalable,
  title     = {Scalable discrete diffusion samplers: Combinatorial optimization and statistical physics},
  author    = {Sanokowski, Sebastian and Berghammer, Wilhelm and Ennemoser, Martin and Wang, Haoyu Peter and Hochreiter, Sepp and Lehner, Sebastian},
  booktitle = {International Conference on Learning Representations},
  year      = {2025}
}

@inproceedings{
havens2026flow,
title={Flow Sampling : Learning to Sample from Unnormalized Densities via Denoising Conditional Processes},
author={Aaron J Havens and Brian Karrer and Neta Shaul},
booktitle={Forty-third International Conference on Machine Learning},
year={2026},
url={https://openreview.net/forum?id=YlcyOMTPNl}
}

@inproceedings{ou2025discrete,
  title     = {Discrete neural flow samplers with locally equivariant transformer},
  author    = {Ou, Zijing and Zhang, Ruixiang and Li, Yingzhen},
  booktitle = {Advances in Neural Information Processing Systems},
  year      = {2025}
}

@article{du2026metadns,
  title   = {{MetaDNS}: Enhancing exploration in discrete neural samplers via well-tempered metadynamics},
  author  = {Du, Xiaochen and Nam, Juno and Choi, Jaemoo and Guo, Wei and Edamadaka, Sathya and Sha, Junyi and Pan, Elton and Chen, Yongxin and Tao, Molei and G{\'o}mez-Bombarelli, Rafael},
  journal = {arXiv preprint arXiv:2605.21722},
  year    = {2026}
}

@article{blessing2026janus,
  title   = {{JANUS}: A multi-modal foundation neural sampler for disordered materials},
  author  = {Blessing, Denis and Cheng, Mouyang and Schebek, Maximilian and Rogal, Jutta and Li, Mingda and Domingo-Enrich, Carles and Du, Yuanqi},
  journal = {arXiv preprint arXiv:2608.19116},
  year    = {2026}
}

@article{min2026frogs,
  title   = {{FrOGS}: Discrete neural sampler for independent alloy configurations across chemical conditions},
  author  = {Min, Kyucheol and Hofgard, Elyssa and Smidt, Tess},
  journal = {arXiv preprint arXiv:2609.02948},
  year    = {2026}
}

@inproceedings{hoogeboom2021argmax,
  title     = {Argmax flows and multinomial diffusion: Learning categorical distributions},
  author    = {Hoogeboom, Emiel and Nielsen, Didrik and Jaini, Priyank and Forr{\'e}, Patrick and Welling, Max},
  booktitle = {Advances in Neural Information Processing Systems},
  year      = {2021}
}

@inproceedings{zheng2025masked,
  title     = {Masked diffusion models are secretly time-agnostic masked models and exploit inaccurate categorical sampling},
  author    = {Zheng, Kaiwen and Chen, Yongxin and Mao, Hanzi and Liu, Ming-Yu and Zhu, Jun and Zhang, Qinsheng},
  booktitle = {International Conference on Learning Representations},
  year      = {2025}
}

@inproceedings{ghazvininejad2019mask,
  title     = {Mask-{P}redict: Parallel decoding of conditional masked language models},
  author    = {Ghazvininejad, Marjan and Levy, Omer and Liu, Yinhan and Zettlemoyer, Luke},
  booktitle = {Conference on Empirical Methods in Natural Language Processing},
  year      = {2019}
}

@inproceedings{chang2022maskgit,
  title     = {{MaskGIT}: Masked generative image transformer},
  author    = {Chang, Huiwen and Zhang, Han and Jiang, Lu and Liu, Ce and Freeman, William T.},
  booktitle = {IEEE/CVF Conference on Computer Vision and Pattern Recognition},
  year      = {2022}
}

@inproceedings{bondtaylor2022unleashing,
  title     = {Unleashing transformers: Parallel token prediction with discrete absorbing diffusion for fast high-resolution image generation from vector-quantized codes},
  author    = {Bond-Taylor, Sam and Hessey, Peter and Sasaki, Hiroshi and Breckon, Toby P. and Willcocks, Chris G.},
  booktitle = {European Conference on Computer Vision},
  year      = {2022}
}

@inproceedings{nie2025scaling,
  title     = {Scaling up masked diffusion models on text},
  author    = {Nie, Shen and Zhu, Fengqi and Du, Chao and Pang, Tianyu and Liu, Qian and Zeng, Guangtao and Lin, Min and Li, Chongxuan},
  booktitle = {International Conference on Learning Representations},
  year      = {2025}
}

@inproceedings{gong2025scaling,
  title     = {Scaling diffusion language models via adaptation from autoregressive models},
  author    = {Gong, Shansan and Agarwal, Shivam and Zhang, Yizhe and Ye, Jiacheng and Zheng, Lin and Li, Mukai and An, Chenxin and Zhao, Peilin and Bi, Wei and Han, Jiawei and Peng, Hao and Kong, Lingpeng},
  booktitle = {International Conference on Learning Representations},
  year      = {2025}
}

@inproceedings{nie2025large,
  title     = {Large language diffusion models},
  author    = {Nie, Shen and Zhu, Fengqi and You, Zebin and Zhang, Xiaolu and Ou, Jingyang and Hu, Jun and Zhou, Jun and Lin, Yankai and Wen, Ji-Rong and Li, Chongxuan},
  booktitle = {Advances in Neural Information Processing Systems},
  year      = {2025}
}

@article{ye2025dream,
  title   = {Dream 7{B}: Diffusion large language models},
  author  = {Ye, Jiacheng and Xie, Zhihui and Zheng, Lin and Gao, Jiahui and Wu, Zirui and Jiang, Xin and Li, Zhenguo and Kong, Lingpeng},
  journal = {arXiv preprint arXiv:2508.15487},
  year    = {2025}
}

@inproceedings{wang2024diffusion,
  title     = {Diffusion language models are versatile protein learners},
  author    = {Wang, Xinyou and Zheng, Zaixiang and Ye, Fei and Xue, Dongyu and Huang, Shujian and Gu, Quanquan},
  booktitle = {International Conference on Machine Learning},
  year      = {2024}
}

@inproceedings{nisonoff2025unlocking,
  title     = {Unlocking guidance for discrete state-space diffusion and flow models},
  author    = {Nisonoff, Hunter and Xiong, Junhao and Allenspach, Stephan and Listgarten, Jennifer},
  booktitle = {International Conference on Learning Representations},
  year      = {2025}
}

@inproceedings{schiff2025simple,
  title     = {Simple guidance mechanisms for discrete diffusion models},
  author    = {Schiff, Yair and Sahoo, Subham Sekhar and Phung, Hao and Wang, Guanghan and Boshar, Sam and Dalla-Torre, Hugo and de Almeida, Bernardo P. and Rush, Alexander and Pierrot, Thomas and Kuleshov, Volodymyr},
  booktitle = {International Conference on Learning Representations},
  year      = {2025}
}

@article{zhao2024informed,
  title   = {Informed correctors for discrete diffusion models},
  author  = {Zhao, Yixiu and Shi, Jiaxin and Chen, Feng and Druckmann, Shaul and Mackey, Lester and Linderman, Scott},
  journal = {arXiv preprint arXiv:2407.21243},
  year    = {2024}
}

@inproceedings{wang2025remasking,
  title     = {Remasking discrete diffusion models with inference-time scaling},
  author    = {Wang, Guanghan and Schiff, Yair and Sahoo, Subham Sekhar and Kuleshov, Volodymyr},
  booktitle = {Advances in Neural Information Processing Systems},
  year      = {2025}
}

@inproceedings{liu2025think,
  title     = {Think while you generate: Discrete diffusion with planned denoising},
  author    = {Liu, Sulin and Nam, Juno and Campbell, Andrew and St{\"a}rk, Hannes and Xu, Yilun and Jaakkola, Tommi and G{\'o}mez-Bombarelli, Rafael},
  booktitle = {International Conference on Learning Representations},
  year      = {2025}
}

@article{peng2025path,
  title   = {Path planning for masked diffusion model sampling},
  author  = {Peng, Fred Zhangzhi and Bezemek, Zachary and Patel, Sawan and Rector-Brooks, Jarrid and Yao, Sherwood and Bose, Avishek Joey and Tong, Alexander and Chatterjee, Pranam},
  journal = {arXiv preprint arXiv:2502.03540},
  year    = {2025}
}

@inproceedings{kim2025train,
  title     = {Train for the worst, plan for the best: Understanding token ordering in masked diffusions},
  author    = {Kim, Jaeyeon and Shah, Kulin and Kontonis, Vasilis and Kakade, Sham and Chen, Sitan},
  booktitle = {International Conference on Machine Learning},
  year      = {2025}
}

@inproceedings{xu2025energy,
  title     = {Energy-based diffusion language models for text generation},
  author    = {Xu, Minkai and Geffner, Tomas and Kreis, Karsten and Nie, Weili and Xu, Yilun and Leskovec, Jure and Ermon, Stefano and Vahdat, Arash},
  booktitle = {International Conference on Learning Representations},
  year      = {2025}
}

@inproceedings{sahoo2025diffusion,
  title     = {The diffusion duality},
  author    = {Sahoo, Subham Sekhar and Deschenaux, Justin and Gokaslan, Aaron and Wang, Guanghan and Chiu, Justin and Kuleshov, Volodymyr},
  booktitle = {International Conference on Machine Learning},
  year      = {2025}
}

@inproceedings{zhang2025target,
  title     = {Target concrete score matching: A holistic framework for discrete diffusion},
  author    = {Zhang, Ruixiang and Zhai, Shuangfei and Zhang, Yizhe and Thornton, James and Ou, Zijing and Susskind, Joshua and Jaitly, Navdeep},
  booktitle = {International Conference on Machine Learning},
  year      = {2025}
}

@article{damewood2022sampling,
  title={Sampling lattices in semi-grand canonical ensemble with autoregressive machine learning},
  author={Damewood, James and Schwalbe-Koda, Daniel and G{\'o}mez-Bombarelli, Rafael},
  journal={npj Computational Materials},
  volume={8},
  number={1},
  pages={61},
  year={2022},
  publisher={Nature Publishing Group UK London}
}

@inproceedings{guo2026proximal,
  title={Proximal diffusion neural sampler},
  author={Guo, Wei and Choi, Jaemoo and Zhu, Yuchen and Tao, Molei and Chen, Yongxin},
  booktitle={International Conference on Learning Representations},
  year={2026}
}

@article{sanchez1984generalized,
  title   = {Generalized cluster description of multicomponent systems},
  author  = {Sanchez, J. M. and Ducastelle, F. and Gratias, D.},
  journal = {Physica A: Statistical Mechanics and its Applications},
  volume  = {128},
  number  = {1--2},
  pages   = {334--350},
  year    = {1984}
}

@inproceedings{perez2018film,
  title     = {{FiLM}: Visual reasoning with a general conditioning layer},
  author    = {Perez, Ethan and Strub, Florian and de Vries, Harm and Dumoulin, Vincent and Courville, Aaron},
  booktitle = {Proceedings of the AAAI Conference on Artificial Intelligence},
  pages     = {3942--3951},
  year      = {2018}
}

@inproceedings{vandenoord2016pixel,
  title     = {Pixel recurrent neural networks},
  author    = {van den Oord, A{\"a}ron and Kalchbrenner, Nal and Kavukcuoglu, Koray},
  booktitle = {Proceedings of the 33rd International Conference on Machine Learning},
  pages     = {1747--1756},
  year      = {2016}
}

@book{brooks2011handbook,
  title={Handbook of markov chain monte carlo},
  author={Brooks, Steve and Gelman, Andrew and Jones, Galin and Meng, Xiao-Li},
  year={2011},
  publisher={CRC press}
}

@article{birge1932calculation,
  title   = {The calculation of errors by the method of least squares},
  author  = {Birge, Raymond T.},
  journal = {Physical Review},
  volume  = {40},
  pages   = {207--227},
  year    = {1932}
}

@inproceedings{kingma2015adam,
  title     = {Adam: A method for stochastic optimization},
  author    = {Kingma, Diederik P. and Ba, Jimmy},
  booktitle = {International Conference on Learning Representations},
  year      = {2015}
}

@inproceedings{loshchilov2019decoupled,
  title     = {Decoupled weight decay regularization},
  author    = {Loshchilov, Ilya and Hutter, Frank},
  booktitle = {International Conference on Learning Representations},
  year      = {2019}
}

\clearpage
\appendix
\appendixcontentstrue
\section*{Appendix contents}
\label{app:contents}
\begingroup
\small
\setcounter{tocdepth}{2}
\makeatletter\@starttoc{toc}\makeatother
\endgroup
\clearpage

\section{Additional method details}

\subsection{Damping the fixed-point iteration}
\label{app:damping}

Each step of the fixed-point iteration $\theta_{i+1}=\arg\min_\theta
\mathcal L_{\mathrm{FP}}(\theta;\theta_i,\nu_i)$ fits the predictor to Gibbs
labels computed on samples of the current sampler $q_{\theta_i}$. To further stabilize training, we follow \cite{blessing2026bridge} and use a damped fixed-point iteration, which replaces the Gibbs label by a mixture of the Gibbs label and the current
model's own prediction, i.e.,
\begin{equation}
  h_\eta^d(c\mid X)=\eta\,h^d(c\mid X^{-d})+(1-\eta)\,\mu_{\theta_i}^d(c\mid X_R),
  \qquad \eta\in(0,1],
  \label{eq:damped_label}
\end{equation}
and use it in place of $h^d$ in the tempered loss:
\begin{equation}
  \mathcal L_{\mathrm{FP}}^{\eta}(\theta;\theta_i,\nu_i)
  \propto
  -\mathbb E_{X\sim q_{\theta_i}}
  \Big[
    \bar w(X)\,
    \mathbb E_R\Big[
      \sum_{d\notin R}\sum_{c\in\mathcal A}
      h_\eta^d(c\mid X)\log\mu_\theta^d(c\mid X_R)
    \Big]
  \Big],
  \label{eq:damped_loss}
\end{equation}
which gives the damped iteration
$\theta_{i+1}=\arg\min_\theta\mathcal L_{\mathrm{FP}}^{\eta}(\theta;\theta_i,\nu_i)$. Here $\bar w$ are the tempered importance weights introduced in \cref{sec:method}.
Setting $\eta=1$ recovers the undamped iteration.  In practice
$\mu_{\theta_i}$ is the EMA model, which is also used to collect samples. If the minimization  of \eqref{eq:damped_loss} is exact, the damped update is therefore
the average of the undamped update $\mu_{\theta_{i+1}}^{\eta=1}$ and the
current predictor,
\begin{equation}
  \mu_{\theta_{i+1}}
  =\eta\,\mu_{\theta_{i+1}}^{\eta=1}+(1-\eta)\,\mu_{\theta_i}.
  \label{eq:damped_update}
\end{equation}
If the sampler matches the target exactly, the undamped update leaves it unchanged,
and by \eqref{eq:damped_update} the damped update does too. Conversely, a fixed point of the damped iteration is also fixed point of the undamped one.

\subsection{Importance weighting and Free Energy Estimation}\label{app:is_f}

In this section, we provide details for calculating the importance weights $w$ for both training and free energy estimation. 

\paragraph{Masked diffusion.}
For a generated configuration $x$, the log importance weight is
\begin{equation}
\log w
=
\log\tilde p(x)
-
\sum_{\text{reveals }(R,d)}
\log\mu_\theta^d(x^d\mid x_R),
\label{eq:iw_mask}
\end{equation}
where the sum runs over the reveals, with $R$ the revealed set
immediately before generating site $d$.
This follows from the ratio of the forward masking and backward
generation path densities with masking and unmasking order probability canceled (a more detailed derivation can be found in Appendix G in \cite{he2026free}).
\paragraph{Any-order generation.}
For a configuration $x$ generated in a sampled reveal order,
the importance weight is the same as in \cref{eq:iw_mask}.
The product follows the sampled order, using the probability
assigned to each generated token given the previously revealed
tokens. When the order is sampled independently of token values,
its probability cancels between the augmented target and proposal.

\paragraph{General forward-backward processes.}
For endpoints $p_i=\tilde p_i/Z_i$, $i\in\{0,1\}$, with evaluable
unnormalized densities, a trajectory generated from $p_0$ using
$Q_t$ has log importance weight
\begin{align}
\log w
&=
\log\frac{\tilde p_1(X_1)}{\tilde p_0(X_0)}
+
\sum_{\substack{0<t<1\\X_{t^-}\neq X_t}}
\log
\frac{\bar Q_t(X_t,X_{t^-})}
     {Q_t(X_{t^-},X_t)}+
\int_0^1
\left[
\bar Q_t(X_t,X_t)-Q_t(X_t,X_t)
\right]\mathrm dt.
\label{eq:iw_general}
\end{align}
For a time grid $0=t_0<\cdots<t_K=1$, the discrete-time
log importance weight is
\begin{equation}
\log w
=
\log\frac{\tilde p_1(X_1)}{\tilde p_0(X_0)}
+
\sum_{k=0}^{K-1}
\log
\frac{
\bar P_{t_k|t_{k+1}}(X_{t_k}|X_{t_{k+1}})
}{
P_{t_{k+1}|t_k}(X_{t_{k+1}}|X_{t_k})
}.
\label{eq:iw_general_discrete}
\end{equation}
Here $P_{t_{k+1}|t_k}$ and $\bar P_{t_k|t_{k+1}}$ are the
forward and backward transition densities used by the
discrete-time sampler.
The sum includes steps with $X_{t_{k+1}}=X_{t_k}$, so the
holding-time contribution is already included.

\paragraph{Free-energy estimation.}
For $M$ independent masked-diffusion or any-order samples, with
weights $w_i$ calculated using \eqref{eq:iw_mask}, we have
\begin{equation}
\mathbb E[w]=Z,
\qquad
\widehat Z=\frac1M\sum_{i=1}^M w_i,
\qquad
\widehat F=-\log\widehat Z,
\label{eq:fe_absolute}
\end{equation}
where $F=-\log Z$ is the dimensionless free energy.
For $M$ independent forward trajectories initialized from $p_0$,
the weights in \eqref{eq:iw_general} or
\eqref{eq:iw_general_discrete} instead satisfy
\begin{equation}
\mathbb E[w]=\frac{Z_1}{Z_0},
\qquad
\widehat{\Delta F}
=
-\log\!\left(\frac1M\sum_{i=1}^M w_i\right),
\label{eq:fe_difference}
\end{equation}
which estimates $\Delta F=-\log(Z_1/Z_0)$.
This is an instance of the escorted Jarzynski and Crooks in discrete space \citep{he2025feat,he2026free}.

\section{Convergence of DIG-IN
}
\label{app:convergence}

This section studies an idealized population version of the masked
training loop, designed to isolate the dynamics of the fixed-point
iteration from finite-sample and optimization errors. We replace the
neural network by an unrestricted table of positive categorical heads
$g$, one for every training-supported context. The state space is
finite, the target distribution $p$ is strictly positive, and both the
sampling distribution and the tempered training objective are evaluated
exactly. Each outer iteration returns the exact population minimizer
rather than the result of finite neural-network optimization.

The reveal-order law $\rho$ is fixed and independent of the generated
values. Training masks are drawn independently of both the endpoint and
the generating order, and their support covers every predictor head
that can be queried during generation. Under these assumptions, one
outer iteration is the deterministic map $g\mapsto\Phi_\nu(g)$ given
contextwise by \cref{eq:population_map}. The analysis below establishes
when repeated application of this map converges to the target heads.

\subsection{Iterative predictor updates and convergence statement}
\label{app:convergence_statement}
With $g$ denoting the frozen categorical predictors, for a reveal order
$\sigma=(\sigma_1,\ldots,\sigma_D)$, let
$q_g(x\mid\sigma)$ be the probability that the categorical predictors
$g$ generate the complete configuration $x$ by revealing coordinates
in the order $\sigma$:
\[
q_g(x\mid\sigma)
=
\prod_{k=1}^D
g^{\sigma_k}\!\left(
    x^{\sigma_k}
    \,\middle|\,
    x_{\sigma_{<k}}
\right),
\qquad
\sigma_{<k}=(\sigma_1,\ldots,\sigma_{k-1}).
\]
Marginalizing over the order $\sigma$ while tempering the augmented
weights induces the endpoint distribution
\begin{equation}
 r_{\nu,g}(x)\propto
 p(x)^\nu\sum_\sigma\rho(\sigma)q_g(x\mid\sigma)^{1-\nu}.
 \label{eq:tempered_endpoint}
\end{equation}
Note that the fixed-order case is recovered by setting $\rho = \delta_{\sigma_0}$.

Let $\Phi_\nu(g)$ be the exact population minimizer of the Gibbs-label cross-entropy loss $\mathcal{L}_{\nu,g}$ when the completed configurations are
distributed according to $r_{\nu,g}$, which admits the following expression:
\begin{equation}
 \mathcal{L}_{\nu,g}(\widetilde g)
 =
 -\mathbb{E}_{X\sim r_{\nu,g},\,R}
 \left[
   \sum_{d\notin R}\sum_{c\in\mathcal A}
   h^d(c\mid X^{-d})
   \log \widetilde g^d(c\mid X_R)
 \right].
 \label{eq:masked_loss}
\end{equation}
For each context $(d,R,x_R)$,
\begin{equation}
 [\Phi_\nu(g)]^d(c\mid x_R)=
 \E_{X\sim r_{\nu,g}}
       [h^d(c\mid X^{-d})\mid X_R=x_R].
 \label{eq:population_map}
\end{equation}
The right-hand side is a conditional expectation under the current
softened buffer, rather than an assumed target expectation.
The following theorem, which is a longer version of \cref{thm:fixed_point_convergence}, studies the behavior resulting from iteratively optimizing $\mathcal{L}_{\nu,g}$ and resetting $g$ to $\Phi_\nu(g)$.

\begin{theorem}[Convergence of the population iteration]
\label{thm:main_convergence}
Assume a finite product space, $p(x)>0$, positive categorical heads,
exact sampling, and exact population fitting for the problem \eqref{eq:masked_loss}. Assume that reveal orders $\sigma$ are sampled from $\rho$. Given an arbitrary categorical predictor $g_0$, define the sequence $(g_n)_{n \geq 0}$ such that $g_{n+1} = \Phi_\nu(g_n)$ for all $n \geq 0$. 
For any fixed order law $\rho$, define
\begin{equation} \label{eq:Delta_g}
\Delta(g)=\max_{R,d,x_R}\osc_c\log[g^d(c\mid x_R)/g_*^d(c\mid x_R)],
\end{equation}
where the maximum is over trained contexts and the oscillation $\osc_c f(c)$ is defined as $\max_c f(c) - \min_c f(c)$ for an arbitrary function $f$ over a finite set, and $g_*^d(c\mid x_R)=p(X^d=c\mid X_R=x_R)$ is the target head.
\begin{enumerate}[label=(\roman*),itemsep=3pt]
\item The target heads $g_*^d(c\mid x_R)=p(X^d=c\mid X_R=x_R)$
satisfy $\Phi_\nu(g_*)=g_*$ for every $\nu\in[0,1]$.
Full weighting gives $\Phi_1(g)=g_*$ for every positive initialization. 
\item 
Then,
\begin{equation}
 \Delta(\Phi_\nu(g))\leq2D(1-\nu)\Delta(g).
 \label{eq:anyorder_contraction}
\end{equation}
Thus a fixed $\nu>1-1/(2D)$ guarantees global convergence of this sequence to $g_*$ (since $g$ is defined over a finite sets, all norms are equivalent).
\item With one fixed reveal order (i.e. setting $\rho = \delta_{\sigma_0}$ in \eqref{eq:tempered_endpoint}) and training on its prefixes,
every fixed $\nu\in(0,1)$ gives global convergence of the sequence $(g_n)_{n \geq 0}$ to $g_*$ with error $\Delta (g_n) = O(k^{D-1}(1-\nu)^k)$ after $k$ outer iterations.
\item With the same fixed-order training scheme, the label-only ($\nu=0$)
iteration $g_{k+1}=\Phi_0(g_k)$ also converges globally to $g_*$ from
every positive initialization.
\end{enumerate}
\end{theorem}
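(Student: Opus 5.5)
The unifying idea is to write the tempered endpoint law as a reweighting of the target, $r_{\nu,g}(x)\propto p(x)\lambda_g(x)$, and then track how oscillations of $\log\lambda_g$ propagate through the population map \eqref{eq:population_map}. Here
\[
\lambda_g(x)=\sum_\sigma\rho(\sigma)\Bigl(\tfrac{q_g(x\mid\sigma)}{p(x)}\Bigr)^{1-\nu}.
\]
For every order $\sigma$ the chain rule gives $p(x)=\prod_k g_*^{\sigma_k}(x^{\sigma_k}\mid x_{\sigma_{<k}})$. Hence
\[
\log\frac{q_g(x\mid\sigma)}{p(x)}=\sum_{k=1}^D\log\rho^{\sigma}_k(x),\qquad \rho^\sigma_k(x)=\frac{g^{\sigma_k}}{g_*^{\sigma_k}}\bigl(x^{\sigma_k}\mid x_{\sigma_{<k}}\bigr).
\]
Next I would expand \eqref{eq:population_map}, using that $h^d(c\mid x^{-d})=p(x^d=c\mid x^{-d})$. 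Define $\Lambda(x^{-d})=\E_p[\lambda_g(X)\mid X^{-d}=x^{-d}]$. A direct computation then gives
\[
\frac{[\Phi_\nu(g)]^d(c\mid x_R)}{g^d_*(c\mid x_R)}\;\propto\;\E_p\bigl[\Lambda(X^{-d})\mid X_R=x_R,\,X^d=c\bigr].
\]
Consequently $\osc_c\log(\Phi_\nu(g)/g_*)$ is at most the oscillation of $\log\lambda_g$ over completions $x$ that agree with $x_R$. This holds because averaging and taking conditional expectations cannot enlarge a ratio bound.

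\textbf{Part (i).} If $\nu=1$ then $\lambda_g\equiv1$, and if $g=g_*$ then every $\rho^\sigma_k\equiv1$. In both cases $\Lambda$ is constant, so $\Phi_\nu(g)=g_*$.

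\textbf{Part (ii).} Since $g$ and $g_*$ are both normalized over $c$, $\min_c g/g_*\le1\le\max_c g/g_*$. Therefore every $\log\rho^\sigma_k$ lies in $[-\Delta(g),\Delta(g)]$ (this uses the coverage assumption, so that every queried head is trained). Each of the $D$ summands can then change by at most $2\Delta(g)$ between two completions, so the oscillation of $(1-\nu)\sum_k\log\rho^\sigma_k$ is at most $2D(1-\nu)\Delta(g)$. Mixing over $\sigma$ inside a sum of exponentials cannot exceed the worst per-order ratio. Together with the displayed reduction this gives \eqref{eq:anyorder_contraction}. Since $\Delta(g)=0$ iff $g=g_*$ on trained contexts, iterating yields geometric convergence whenever $2D(1-\nu)<1$.

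\textbf{Part (iii).} Fix the order $\sigma_0$ and let $\Delta_k(g)$ be the maximal oscillation at position $k$. For the head at position $k$, with $d=\sigma_{0,k}$ and $R$ the prefix, I would split the factors of $\lambda_g$ into three groups:
\begin{itemize}
\item factors $j<k$ depend only on $x_R$ and cancel;
\item the factor $j=k$ depends only on $x^d$, and its oscillation is at most $\osc_a\log\rho_k\le\Delta_k$ (the factor $2$ is lost here);
\item factors $j>k$ each contribute at most $2\Delta_j$.
\end{itemize}
This gives the triangular recursion
\[
\Delta_k(g_{n+1})\le(1-\nu)\Delta_k(g_n)+2(1-\nu)\sum_{j>k}\Delta_j(g_n).
\]
For the last position, $X^{-d}=X_R$, so $\Delta_D(g_1)=0$. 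The recursion matrix is upper triangular with diagonal entries $1-\nu$. A Jordan-block, or simply inductive, bound then gives $\Delta(g_n)=O(n^{D-1}(1-\nu)^n)$.

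\textbf{Part (iv), $\nu=0$.} This is the step I expect to be the main obstacle, because the diagonal coefficient becomes $1$ and the recursion gives no contraction. I would argue by backward induction on $k$. Once all later heads are exact, the weight at position $k$ is $p\cdot(g/g_*)(x^d)$, and the update reduces to $g\mapsto gK$. Here
\[
K(a,c)=\sum_{y}p(y\mid x_R,x^d=a)\,p(x^d=c\mid x_R,y)
\]
is the two-step kernel that resamples the later sites $y$ and then site $d$. This kernel is strictly positive and reversible, with stationary law $g_*^d(\cdot\mid x_R)$, so Perron--Frobenius gives $gK^n\to g_*$ geometrically.

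Later heads are in fact only asymptotically exact (the last head is exact after one step). The remaining work is therefore a perturbation argument. It shows that the position-$k$ update equals $gK$ up to an error controlled by $\sum_{j>k}\Delta_j(g_n)\to0$, and that a strict contraction in Hilbert's projective metric together with vanishing perturbations still forces convergence.
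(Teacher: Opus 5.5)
Your proposal is correct. For (i)--(iii) it is essentially the paper's argument. The paper also writes $r_{\nu,g}/p$ as a $\rho$-mixture of $\exp\{(1-\nu)a_\sigma\}$. It shows that exact Gibbs-label fitting cannot enlarge log-ratio oscillation via the same positive-averaging lemma. It bounds each $a_\sigma$ by $D\Delta(g)$ and reaches the identical triangular recursion $e_{k+1}\le(1-\nu)He_k$. Your $\Lambda$-reduction is slightly sharper: it uses the oscillation of $\log\lambda_g$ only over completions of $x_R$, whereas the paper uses the global $\osc_x\log(r/p)$ in (ii) and redoes the prefix conditioning by hand in (iii). That difference is cosmetic.

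The genuine difference is in (iv). The paper switches to total variation. Every entry of $K_{d,k}$ is at least $\epsilon=\min h$, so a Doeblin bound gives the uniform factor $1-N\epsilon$. It then concludes from $A^d_{k+1}\le\kappa A^d_k+B^d_k$, with $B^d_k\to0$ left unquantified.

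You instead use Birkhoff's contraction in Hilbert's projective metric $d_H$, which works from the same minorization. The image of the simplex under a stochastic kernel with entries in $[\epsilon,1]$ has projective diameter at most $2\log(1/\epsilon)$, so $\tau\le(1-\epsilon)/(1+\epsilon)$ uniformly. The Hilbert distance between $g^d(\cdot\mid x_R)$ and $g^d_*(\cdot\mid x_R)$ is exactly $\osc_c\log(g/g_*)$, so you never leave the error measure of (ii)--(iii).

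Concretely, the perturbation you allude to follows from the same averaging lemma. Since $q_n(y\mid x_R,a)/p(y\mid x_R,a)\in[e^{-\sum_{j>k}\Delta_j},e^{\sum_{j>k}\Delta_j}]$, you get $d_H(gK_n,gK)\le2\sum_{j>k}\Delta_j(g_n)$. Hence $\Delta_k(g_{n+1})\le\tau\Delta_k(g_n)+2\sum_{j>k}\Delta_j(g_n)$. This is the (iii) recursion with $\tau$ replacing $1-\nu$ on the diagonal, and it yields an explicit rate $O(n^{D-1}\tau^n)$ at $\nu=0$, which the paper's statement of (iv) does not claim.

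The paper's route needs only an elementary Doeblin bound. Yours invokes the Birkhoff--Hopf theorem, but it unifies (iii) and (iv) in one metric and is quantitative.
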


The any-order threshold in \emph{(ii)} is a conservative sufficient condition: in general we are interested in taking values of $\nu$ substantially smaller than 1 to avoid high importance weight variance.
The fixed-order result in \emph{(iii)} holds for an arbitrary $\nu$, and covers an
ordered masked bridge. 

The following subsections prove the four statements: statements \emph{(i)} and \emph{(ii)} in \Cref{app:anyorder_proof}, statement \emph{(iii)} in \Cref{app:fixed_order_proof}, and statement \emph{(iv)} in \Cref{sec:label_only_convergence}.
and in \Cref{sec:ess_adaptation} we show that the ESS rule \eqref{eq:adaptive_tempering} supplies a uniformly positive exponent in
the fixed-order population setting.
In \Cref{app:local_proof}, we also show that tempering multiplies the Jacobian of the population
update at the target by $1-\nu$, yielding a complementary local
stability condition.
All of these results establish a target-directed population iteration: they do not assert that finite neural optimization is exact. In \Cref{sec:approximate_fitting} we explore how robust our arguments are to approximate fitting.

\subsection{Gibbs-label fitting does not amplify log-ratio error}

We begin by introducing a lemma that constitutes the backbone of our argument.
\begin{lemma} \label{lem:a_b_K}
Let $a,b : \mathcal{X} \to (0,+\infty)$ be arbitrary positive scalar-valued functions, with $\mathcal{X}$ a finite set, and $K$ be a nonnegative linear map with positive output, defined as
\begin{equation} \label{eq:K_a_def}
(K a)(y) = \sum_{x} K(y,x) a(x).
\end{equation}
Then,
\begin{equation}
 \min_x\frac{a(x)}{b(x)}
 \leq \frac{(Ka)(y)}{(Kb)(y)}
 \leq \max_x\frac{a(x)}{b(x)}.
 \label{eq:positive_average}
\end{equation}
\end{lemma}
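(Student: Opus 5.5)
The plan is to write $(Ka)(y)$ as a positive combination of the values $b(x)$, weighted by the ratios $a(x)/b(x)$, and read the bound off as a weighted-average statement. Set $m=\min_x a(x)/b(x)$ and $M=\max_x a(x)/b(x)$. Both exist and are positive because $\mathcal X$ is finite and $a,b>0$. For every $x$ we then have the pointwise sandwich
\[
m\,b(x)\le a(x)\le M\,b(x).
\]

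Next, fix $y$ and multiply this sandwich by the nonnegative coefficient $K(y,x)$. Summing over $x$ preserves the inequalities because every coefficient is nonnegative, and linearity of $K$ as defined in \eqref{eq:K_a_def} gives
\[
m\,(Kb)(y)\le (Ka)(y)\le M\,(Kb)(y).
\]

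Finally, divide by $(Kb)(y)$, which is strictly positive since $K$ has positive output on positive inputs, and this yields \eqref{eq:positive_average}. Equivalently, the ratio $(Ka)(y)/(Kb)(y)$ equals $\sum_x \lambda_y(x)\,a(x)/b(x)$, where the weights are $\lambda_y(x)=K(y,x)b(x)/(Kb)(y)$. These weights are nonnegative and sum to one, so the ratio is a convex combination of the pointwise ratios and must lie between their minimum and maximum.

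There is essentially no obstacle. The only point to check is that the denominator $(Kb)(y)$ is nonzero, which is exactly the positivity-of-output hypothesis; it holds, for instance, whenever each row $K(y,\cdot)$ has at least one positive entry. For later use I would record the consequence that taking logarithms gives $\osc_y\log[(Ka)(y)/(Kb)(y)]\le\osc_x\log[a(x)/b(x)]$, which is the form needed to control $\Delta$ in \eqref{eq:Delta_g}.
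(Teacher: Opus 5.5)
Your proof is correct and is essentially the paper's argument: the paper writes $(Ka)(y)=\sum_x K(y,x)b(x)\,\frac{a(x)}{b(x)}$ and reads the ratio as a weighted average of $a(x)/b(x)$ with weights proportional to $K(y,x)b(x)$, which is your second formulation, and your pointwise sandwich is the same idea written out. Your logarithmic oscillation corollary is also valid, and it is how the lemma is later used in the proof of the non-expansiveness proposition.
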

\begin{proof}
These inequalities hold because $(Ka)(y) = \sum_x K(y,x) b(x) \frac{a(x)}{b(x)}$, which means that $(Ka)(y)/(Kb)(y)$ can be regarded as the weighted average of the terms $\frac{a(x)}{b(x)}$ with normalized weights proportional to $K(y,x)b(x)$.
\end{proof}

Based on this result, the following proposition, which we use in \Cref{app:anyorder_proof}, shows that a certain operator associated to Gibbs-label fitting is non-expansive.

\begin{proposition}[Non-expansiveness of Gibbs-label fitting]
\label{prop:fitting_nonexpansion}
Let $p$ and $r$ be strictly positive distributions on the finite
product space $\mathcal X$. Define the exact Gibbs-label fitting
operator by
\[
[\mathcal T(r)]^d(c\mid x_R)
=
\mathbb E_r\!\left[
    h^d(c\mid X^{-d})
    \,\middle|\,
    X_R=x_R
\right].
\]
Then
\begin{equation}
\max_{(R,d,x_R)\in\mathcal C_{\mathrm{train}}}
\operatorname{osc}_c
\log
\frac{[\mathcal T(r)]^d(c\mid x_R)}
     {g_*^d(c\mid x_R)}
\leq
\operatorname{osc}_x
\log\frac{r(x)}{p(x)}.
\label{eq:fitting_nonexpansion}
\end{equation}
\end{proposition}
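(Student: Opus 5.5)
The plan is to write both heads in the ratio as linear images of $r$ and of $p$ under the same nonnegative kernel, up to a normalising factor that does not depend on $c$, and then apply \Cref{lem:a_b_K}.

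\textbf{Step 1: identify the target head.} By the tower property, exactly as in \Cref{prop:gibbs_consistency}, we have $h^d(c\mid X^{-d})=\mathbb P_p(X^d=c\mid X^{-d})$ and $X_R$ is a function of $X^{-d}$ since $d\notin R$. Hence
\[
g_*^d(c\mid x_R)=\mathbb E_p\!\left[h^d(c\mid X^{-d})\,\middle|\,X_R=x_R\right]=[\mathcal T(p)]^d(c\mid x_R).
\]
So $g_*=\mathcal T(p)$, and the statement compares $\mathcal T(r)$ with $\mathcal T(p)$.

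\textbf{Step 2: write the ratio through a kernel.} Fix a trained context $(R,d,x_R)$ and let $\mathcal X_{x_R}=\{x:\ x_R \text{ matches the given values}\}$. Define the kernel $K(c,x)=h^d(c\mid x^{-d})\,\mathbf 1_{\{x\in\mathcal X_{x_R}\}}$. It is nonnegative, and $(Ka)(c)>0$ for positive $a$ because $h^d>0$ when $p>0$. Writing $r(x_R)$ and $p(x_R)$ for the marginals of $X_R$, we get
\[
\frac{[\mathcal T(r)]^d(c\mid x_R)}{g_*^d(c\mid x_R)}
=\frac{(Kr)(c)}{(Kp)(c)}\cdot\frac{p(x_R)}{r(x_R)}.
\]
The second factor is independent of $c$. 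After taking logarithms it contributes only an additive constant, so it drops out of $\operatorname{osc}_c$.

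\textbf{Step 3: apply \Cref{lem:a_b_K}.} Use the finite set $\mathcal X_{x_R}$, with $a=r$ and $b=p$ restricted to it. The lemma gives, for every $c$,
\[
\min_{x\in\mathcal X_{x_R}}\frac{r(x)}{p(x)}\le\frac{(Kr)(c)}{(Kp)(c)}\le\max_{x\in\mathcal X_{x_R}}\frac{r(x)}{p(x)}.
\]
Therefore the oscillation over $c$ of $\log[(Kr)(c)/(Kp)(c)]$ is at most $\operatorname{osc}_{x\in\mathcal X_{x_R}}\log(r/p)$. This is in turn at most $\operatorname{osc}_x\log(r/p)$ over the whole of $\mathcal X$, because the max and min over a subset are bounded by those over $\mathcal X$.

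\textbf{Step 4: conclude.} The bound from Step 3 does not depend on the context, so taking the maximum over $(R,d,x_R)\in\mathcal C_{\mathrm{train}}$ gives \eqref{eq:fitting_nonexpansion}.

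I expect no real obstacle. The one point needing care is Step 2: one must notice that the normalisation $r(x_R)$ differs from $p(x_R)$ but is constant in $c$, and that oscillation, unlike a sup-norm of the log-ratio, is invariant to such constants.
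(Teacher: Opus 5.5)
Your proof is correct and follows essentially the same route as the paper. You identify $g_*=\mathcal T(p)$ via the tower property, write both heads as images of $r$ and $p$ under the kernel $(Ka)(c)=\sum_{x:\,x_R=u}a(x)\,h^d(c\mid x^{-d})$ up to a $c$-independent factor, and apply \Cref{lem:a_b_K}; taking the min and max over the fiber $\{x: x_R=u\}$ before relaxing to all of $\mathcal X$ is only a slightly sharper intermediate step than the paper's.
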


\begin{proof}
Since by definition, $h^d(c\mid X^{-d})=\E_p[\one_{X^d=c}\mid X^{-d}]$, the target distribution $p$ is a fixed point of $\mathcal T$ as a consequence of the tower property of conditional expectation:
\begin{align} 
\begin{split} \label{eq:T_p}
 [\mathcal T(p)]^d(c\mid x_R)
 &=\E_p[h^d(c\mid X^{-d})\mid X_R=x_R] = \E_p[\one_{X^d=c}\mid X_R=x_R] \\ &= p^d( X^d = c\mid x_R) = g_*^d(c\mid x_R).
\end{split}
\end{align}

Fix a context $(R,d,u)$, where $d\notin R$ and $u\in\mathcal A^R$, 
and define the nonnegative linear map
\[
 (K_{R,d,u}a)(c)
 =
 \sum_{x:\,x_R=u}
 a(x)h^d(c\mid x^{-d}).
\]
Applying this map to an arbitrary $r$ gives
\[
 (K_{R,d,u}r)(c)
 =
 r_R(u)[\mathcal T(r)]^d(c\mid u).
\]
Applying it to $p$ and using equation \eqref{eq:T_p} yields
\[
 (K_{R,d,u}p)(c)
 =
 p_R(u)g_*^d(c\mid u).
\]
Hence,
\begin{align} \label{eq:ratio_Ks}
 \frac{(K_{R,d,u}r)(c)}
      {(K_{R,d,u}p)(c)}
 =
 \frac{r_R(u)}{p_R(u)}
 \frac{[\mathcal T(r)]^d(c\mid u)}
      {g_*^d(c\mid u)}.
\end{align}
The first factor on the right-hand side is constant in $c$ and
therefore disappears under $\osc_c\log$. 
By Lemma~\ref{lem:a_b_K},
\[
 \min_x\frac{r(x)}{p(x)}
 \leq
 \frac{(K_{R,d,u}r)(c)}
      {(K_{R,d,u}p)(c)}
 \leq
 \max_x\frac{r(x)}{p(x)}.
\]
Taking logarithms and then the oscillation over $c$, and plugging equation~\eqref{eq:ratio_Ks} (where the factor $r_R(u)/p_R(u)$ can be simplified because it does not depend on $c$), yields
\[
 \osc_c\log
 \frac{[\mathcal T(r)]^d(c\mid u)}
      {g_*^d(c\mid u)}
 \leq
 \osc_x\log\frac{r(x)}{p(x)}.
\]
Since this holds for every context $(R,d,u)$,
\begin{equation}
 \max_{R,d,x_R}
 \osc_c\log
 \frac{[\mathcal T(r)]^d(c\mid x_R)}
      {g_*^d(c\mid x_R)}
 \leq
 \osc_x\log\frac{r(x)}{p(x)}.
\end{equation}
This inequality, which we will leverage in the next subsection, shows that for any context, the oscillation of the log-ratio to the optimal categorical predictor is bounded by the log-ratio.
\end{proof}

\subsection{Fixed-point exactness and global contraction for arbitrary reveal orders: proof of \cref{thm:main_convergence}\emph{(i)--(ii)}}
\label{app:anyorder_proof}

\paragraph{Fixed point and full importance weighting.}
For every reveal order
$\sigma=(\sigma_1,\ldots,\sigma_D)$, the target heads satisfy the
chain-rule identity
\begin{align}
q_{g_*}(x\mid\sigma) =
\prod_{k=1}^D
g_*^{\sigma_k}
\left(
x^{\sigma_k}\mid x_{\sigma_{<k}}
\right) =
\prod_{k=1}^D
p\left(
X^{\sigma_k}=x^{\sigma_k}
\mid
X_{\sigma_{<k}}=x_{\sigma_{<k}}
\right)
=
p(x).
\end{align}
Consequently, substituting $g=g_*$ into
equation \eqref{eq:tempered_endpoint} gives $r_{\nu,g_*}(x,\sigma)= \rho(\sigma)p(x)$
for every $\nu\in[0,1]$. Its endpoint marginal is therefore $p$, and
the Gibbs-label identity gives
$\Phi_\nu(g_*) = \mathcal T(p) =
g_*$. Thus $g_*$ is a fixed point for every tempering exponent.

At $\nu=1$, the proposal contribution in equation
\eqref{eq:tempered_endpoint} disappears:
$P_{1,g}(x,\sigma)
=
\rho(\sigma)p(x)$
for every positive $g$. Hence
\[
\Phi_1(g)=\mathcal T(p)=g_*,
\]
which proves one-step recovery under full importance weighting.

\paragraph{Accumulation of head-wise error.}
Now fix an arbitrary positive predictor $g$ and write $\Delta=\Delta(g)$, where $\Delta(\cdot)$ is defined in \eqref{eq:tempered_endpoint}.
For every training-supported context $(R,d,x_R)$,
\begin{equation} \label{eq:equality_one}
\sum_c
g_*^d(c\mid x_R)
\frac{g^d(c\mid x_R)}
     {g_*^d(c\mid x_R)}
=
1.
\end{equation}
Therefore,
\begin{equation} \label{eq:min_1_max}
\min_c
\frac{g^d(c\mid x_R)}
     {g_*^d(c\mid x_R)}
\leq 1
\leq
\max_c
\frac{g^d(c\mid x_R)}
     {g_*^d(c\mid x_R)}.
\end{equation}
If we write $\ell(c) = \log \big( g^d(c\mid x_R) / g_*^d(c\mid x_R) \big)$, the definition of $\Delta(\cdot)$ and equation \eqref{eq:min_1_max} imply, respectively, that
\begin{align}
    \max_c \ell(c) - \min_c \ell(c) \leq \Delta, \qquad \min_c \ell(c) \leq 0 \leq \max_c \ell(c).
\end{align}
Combining these inequalities, we obtain that for all $c$,
\begin{equation} \label{eq:abs_log_ratio}
\left|
\log
\frac{g^d(c\mid x_R)}
     {g_*^d(c\mid x_R)}
\right|
\leq\Delta.
\end{equation}

For a reveal order $\sigma$, define the joint proposal error
\begin{equation}
a_\sigma(x)
=
\log\frac{q_g(x\mid\sigma)}{p(x)} =
\sum_{k=1}^D
\log
\frac{
g^{\sigma_k}
\left(
x^{\sigma_k}\mid x_{\sigma_{<k}}
\right)
}{
g_*^{\sigma_k}
\left(
x^{\sigma_k}\mid x_{\sigma_{<k}}
\right)
}.
\end{equation}
Every factor queried by the reveal policy is training-supported, so
each summand has absolute value at most $\Delta$. Hence
\begin{equation}
-D\Delta
\leq
a_\sigma(x)
\leq
D\Delta
\qquad
\text{for every }(x,\sigma).
\label{eq:order_proposal_error}
\end{equation}

\paragraph{Tempering and endpoint error.}
Using the definition of $a_\sigma$, the tempered endpoint ratio can be
written as
\begin{equation} \label{eq:r_nu_g_p}
\frac{r_{\nu,g}(x)}{p(x)}
=
C_{\nu,g}^{-1}
\sum_\sigma
\rho(\sigma)
\exp\!\left\{
    (1-\nu)a_\sigma(x)
\right\}.
\end{equation}
By \cref{eq:order_proposal_error},
\begin{equation} \label{eq:exponent}
e^{-D(1-\nu)\Delta}
\leq
\sum_\sigma
\rho(\sigma)
e^{(1-\nu)a_\sigma(x)}
\leq
e^{D(1-\nu)\Delta}.
\end{equation}
Thus, plugging \eqref{eq:r_nu_g_p} into \eqref{eq:exponent} and taking logarithms and the oscillation yields
\begin{equation}
\operatorname{osc}_x
\log\frac{r_{\nu,g}(x)}{p(x)}
\leq
2D(1-\nu)\Delta(g).
\label{eq:tempered_endpoint_error}
\end{equation}
Note that in this equation, the normalizing constant $C_{\nu,g}^{-1}$ is independent of $x$ and
therefore disappears under log-ratio oscillation.

Since $\Phi_\nu(g)=\mathcal T(r_{\nu,g})$ by the definitions of $\Phi_\nu$ and $\mathcal T$,
\cref{prop:fitting_nonexpansion,eq:tempered_endpoint_error} give
\begin{equation}
\Delta(\Phi_\nu(g))
\leq
\operatorname{osc}_x
\log\frac{r_{\nu,g}(x)}{p(x)} \leq
2D(1-\nu)\Delta(g),
\end{equation}
which proves \cref{eq:anyorder_contraction}.

\paragraph{Iteration and convergence.}
Let $c_\nu=2D(1-\nu)$.
If $c_\nu<1$, equivalently $\nu>1-\frac{1}{2D}$,
then repeated application of the contraction gives
\[
\Delta(g_k)
\leq
c_\nu^k\Delta(g_0)
\longrightarrow0.
\]
Because the heads are normalized categorical distributions, vanishing
log-ratio oscillation implies
\[
g_k^d(\cdot\mid x_R)
\longrightarrow
g_*^d(\cdot\mid x_R)
\]
at every training-supported context. Consequently,
$q_{g_k}(\cdot\mid\sigma)\to p$ for every order in the support of
$\rho$, and the corresponding marginal proposal also converges to
$p$. The contraction also establishes uniqueness in this regime. Indeed,
if $g=\Phi_\nu(g)$, then
$\Delta(g)
\leq
c_\nu\Delta(g)$,
and $c_\nu<1$ implies $\Delta(g)=0$, hence $g=g_*$ on all
training-supported contexts.

For varying exponents $\nu_k$, define $c_k=2D(1-\nu_k)$.
The same argument gives $\Delta(g_k)
\leq
\big(\prod_{j=0}^{k-1}c_j\big)\Delta(g_0)$.
It is therefore sufficient that
$\prod_{j=0}^{k-1}c_j\longrightarrow0$.
In particular, convergence holds if
$\sup_k c_k<1$.

The factor $2D$ is a worst-case bound obtained by adding the maximum
possible error across all $D$ reveals. It is a sufficient theoretical
bound, not a necessary condition, and can be conservative in high
dimensions. It should therefore not be interpreted as prescribing the
empirically optimal tempering exponent.

\subsection{Global convergence for a fixed reveal order:
proof of \cref{thm:main_convergence}\emph{(iii)}}
\label{app:fixed_order_proof}

Relabel the coordinates so that the fixed reveal order is
$1,\ldots,D$. At outer iteration $k$, write the corresponding
order-conditional proposal as
\[
q_k(x)
=
\prod_{d=1}^D q_k^d(x^d\mid x^{<d}),
\]
where $q_k^d$ denotes the current categorical head evaluated at the
prefix $x^{<d}=(x^1,\ldots,x^{d-1})$. For a tempering exponent
$\nu_k$, the effective endpoint distribution is
\[
r_k(x)
=
\frac{q_k(x)^{1-\nu_k}p(x)^{\nu_k}}
     {\sum_{x'}q_k(x')^{1-\nu_k}p(x')^{\nu_k}}.
\]

\paragraph{Population update at a prefix.}
Fix a coordinate $d$ and a prefix $u\in\mathcal A^{d-1}$. Write
$y=X^{>d}$ for the suffix following site $d$. Exact Gibbs-label
fitting gives
\begin{equation}
q_{k+1}^d(c\mid u)
=
\sum_y
r_k(X^{>d}=y\mid X^{<d}=u)
h^d(c\mid u,y).
\label{eq:ar_update}
\end{equation}
The value of $X^d$ is absent from the averaging distribution because
the Gibbs label $h^d(c\mid X^{-d})$ depends only on
$X^{-d}=(u,y)$.

Under the target distribution, the same Gibbs-label identity gives
\begin{equation}
p(X^d=c\mid X^{<d}=u)
=
\sum_y
p(X^{>d}=y\mid X^{<d}=u)
h^d(c\mid u,y).
\label{eq:target_ar_update}
\end{equation}
Thus the updated-head error is controlled by the discrepancy between
the suffix laws of $r_k$ and $p$.

\paragraph{Head-wise error.}
Define
\[
e_k^d
=
\max_u
\operatorname{osc}_c
\log
\frac{
q_k^d(c\mid u)
}{
p(X^d=c\mid X^{<d}=u)
}.
\]
We claim that
\begin{equation}
e_{k+1}^D=0,
\qquad
e_{k+1}^d
\leq
(1-\nu_k)
\left(
e_k^d+2\sum_{j>d}e_k^j
\right),
\quad d<D.
\label{eq:ar_tempered_recursion}
\end{equation}

For $d=D$, the suffix is empty. Hence
equation \eqref{eq:ar_update} reduces directly to
\[
q_{k+1}^D(c\mid u)
=
h^D(c\mid u)
=
p(X^D=c\mid X^{<D}=u),
\]
which proves $e_{k+1}^D=0$.

Now fix $d<D$ and a prefix $u$. For compactness, let
\[
\mu_k(c)=q_k^d(c\mid u),
\qquad
\mu_*(c)=p(X^d=c\mid X^{<d}=u),
\]
and define
\[
v_k(c,y)
=
\frac{
\mu_k(c)q_k(y\mid u,c)
}{
\mu_*(c)p(y\mid u,c)
}.
\]
Conditioning the softened distribution \(r_k\) on the prefix \(u\)
cancels all proposal and target factors preceding site \(d\). Therefore,
\begin{equation}
\frac{r_k(c,y\mid u)}{p(c,y\mid u)}
\propto
v_k(c,y)^{1-\nu_k}.
\label{eq:fixed_order_conditional_ratio}
\end{equation}

Marginalizing $c$ is a positive linear operation. Applying
Lemma \ref{lem:a_b_K} to \cref{eq:fixed_order_conditional_ratio} gives
\begin{align}
&\min_c \frac{r_k(c,y\mid u)}{p(c,y\mid u)} \leq \frac{r_k(y\mid u)}{p(y\mid u)} \leq \max_c \frac{r_k(c,y\mid u)}{p(c,y\mid u)}, \\
&\implies \operatorname{osc}_y
\log
\frac{r_k(y\mid u)}{p(y\mid u)}
\leq
(1-\nu_k)
\operatorname{osc}_{c,y}\log v_k(c,y).
\label{eq:suffix_error_bound}
\end{align}
Likewise, equations \eqref{eq:ar_update} and \eqref{eq:target_ar_update} apply the same
positive Gibbs-label kernel to the two suffix distributions. Another
application of Lemma \ref{lem:a_b_K} therefore gives
\begin{align}
&\min_y \frac{r_k(y\mid u)}{p(y\mid u)} \leq \frac{
q_{k+1}^d(c\mid u)}{p(X^d=c\mid X^{<d}=u)} \leq \max_y \frac{r_k(y\mid u)}{p(y\mid u)},
\\ &\implies \operatorname{osc}_c
\log
\frac{
q_{k+1}^d(c\mid u)
}{
p(X^d=c\mid X^{<d}=u)
}
\leq
\operatorname{osc}_y
\log
\frac{r_k(y\mid u)}{p(y\mid u)}.
\label{eq:head_from_suffix_error}
\end{align}

It remains to control the right-hand side. The autoregressive
factorizations of $q_k$ and $p$ give
\begin{equation}
\log v_k(c,y)
=
\log\frac{\mu_k(c)}{\mu_*(c)}
+
\sum_{j>d}
\log
\frac{
q_k^j\left(
y^j\mid u,c,y^{d+1:j-1}
\right)
}{
p\left(
X^j=y^j
\mid
X^{<j}=(u,c,y^{d+1:j-1})
\right)
}.
\label{eq:fixed_order_error_decomposition}
\end{equation}

The first term in \eqref{eq:fixed_order_error_decomposition} has
oscillation at most $e_k^d$ by definition. For every later head $j>d$ and every
fixed prefix, let  
\begin{equation}
\ell_{y^{d+1:j-1}}(y^j) = \log
\frac{
q_k^j\left(
y^j\mid u,c,y^{d+1:j-1}
\right)
}{
p\left(
X^j=y^j
\mid
X^{<j}=(u,c,y^{d+1:j-1})
\right)
}
\end{equation}
We can mirror the normalization argument of equations \eqref{eq:equality_one}-\eqref{eq:abs_log_ratio} to conclude that $|\ell_{y^{d+1:j-1}}(c)| \leq \osc_c \ell_{y^{d+1:j-1}}(c) \leq e_k^j$, since $e_k^j$ is a uniform bound of such oscillations over all prefixes. Hence, we can further bound the oscillation of $\ell_{y^{d+1:j-1}}(c)$ with respect to both $c$ and $y^{d+1:j-1}$:
\begin{equation}
    \osc_{c,y^{d+1:j-1}} \ell_{y^{d+1:j-1}}(c) \leq 2 |\ell_{y^{d+1:j-1}}(c)| \leq 2 e_k^j.
\end{equation}

By subadditivity of
oscillation, we conclude that 
\[
\operatorname{osc}_{c,y}\log v_k(c,y) 
\leq
e_k^d+2\sum_{j>d}e_k^j.
\]
Combining this bound with
\eqref{eq:suffix_error_bound}, \eqref{eq:head_from_suffix_error} and maximizing
over $u$ proves \cref{eq:ar_tempered_recursion}.

\paragraph{Triangular iteration.}
Let $e_k=(e_k^1,\ldots,e_k^D)^\top$
and define the upper-triangular matrix $H\in\mathbb R^{D\times D}$ by
\[
H_{dj}
=
\begin{cases}
1, & j=d,\\
2, & j>d,\\
0, & j<d.
\end{cases}
\]
Then equation \eqref{eq:ar_tempered_recursion} implies, componentwise, $e_{k+1} \leq (1-\nu_k)He_k$.
Iterating gives
\begin{equation}
e_k
\leq
\bigg\{
\prod_{i=0}^{k-1}(1-\nu_i)
\bigg\}
H^k e_0.
\label{eq:fixed_order_matrix_bound}
\end{equation}

Write $H=I+N$, where $N$ is strictly upper triangular. Since
$N^D=0$,
$
H^k
=
(I+N)^k
=
\sum_{m=0}^{D-1}
\binom{k}{m}N^m$.
Thus $\lVert H^k\rVert=O(k^{D-1})$ in any matrix norm.

For a fixed exponent $\nu\in(0,1)$,
\cref{eq:fixed_order_matrix_bound} therefore gives
\[
\max_d e_k^d
=
O\!\left(
k^{D-1}(1-\nu)^k
\right).
\]
The exponential factor dominates the polynomial factor, so
$e_k^d\to0$ for every site $d$.

Because both $q_k^d(\cdot\mid u)$ and
$p(X^d=\cdot\mid X^{<d}=u)$ are normalized categorical
distributions, vanishing log-ratio oscillation implies $q_k^d(\cdot\mid u)
\longrightarrow
p(X^d=\cdot\mid X^{<d}=u)$
at every prefix. Consequently,
\[
q_k(x)
=
\prod_{d=1}^Dq_k^d(x^d\mid x^{<d})
\longrightarrow
p(x),
\]
which proves global convergence for every positive initialization.

More generally, if the exponents vary but satisfy
$\nu_k\geq\nu_{\min}>0$,
then $\prod_{i=0}^{k-1}(1-\nu_i)
\leq
(1-\nu_{\min})^k$,
and the same argument gives $\max_d e_k^d
=
O\!\left(
k^{D-1}(1-\nu_{\min})^k
\right)$.

\subsection{Global convergence of label-only fixed-order fitting: proof of \cref{thm:main_convergence}\emph{(iv)}}
\label{sec:label_only_convergence}

The fixed-order proof in \cref{app:fixed_order_proof} uses the
exponential factor $(1-\nu)^k$ and therefore does not cover the
endpoint $\nu=0$. Nevertheless, exact label-only fitting also converges
globally under the same strict-positivity assumptions. The argument
uses contraction of the Gibbs transition kernels rather than tempered
importance correction.

\begin{proposition}[Fixed-order convergence without importance weighting]
\label{prop:label_only_convergence}
Consider one fixed reveal order and exact population fitting with
$\nu=0$. If $p$ is strictly positive on the finite product space, then
the fixed-order heads converge globally to the target heads.
\end{proposition}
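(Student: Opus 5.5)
The plan is backward induction over the sites, from $d=D$ down to $d=1$. At each site the label-only update is viewed as a Markov chain on the alphabet $\mathcal A$ whose stationary law is the target head. Relabel the fixed order as $1,\ldots,D$ as in \Cref{app:fixed_order_proof}. At $\nu=0$ the softened endpoint law is just $r_k=q_k$. The update \eqref{eq:ar_update} then reads
\[
q_{k+1}^d(c\mid u)=\sum_{c'}q_k^d(c'\mid u)\,K_k^{d,u}(c',c),
\qquad
K_k^{d,u}(c',c)=\sum_y q_k(X^{>d}=y\mid u,c')\,h^d(c\mid u,y).
\]
This holds because $q_k(y\mid u)=\sum_{c'}q_k^d(c'\mid u)q_k(y\mid u,c')$. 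Each $K_k^{d,u}$ is a row-stochastic matrix on $\mathcal A$. The matrix depends on the iterate only through the \emph{later} heads $j>d$, which gives the triangular structure the induction exploits.

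\textbf{Base case.} For $d=D$ the suffix is empty, so $q_{k}^D=g_*^D$ exactly for all $k\ge1$, as already shown in the proof of (iii).

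\textbf{Limit kernel.} Define $K^{d,u}_*(c',c)=\sum_y p(y\mid u,c')h^d(c\mid u,y)$. This is the kernel obtained when all heads $j>d$ equal their targets. By the tower identity \eqref{eq:target_ar_update}, $\mu_*=p(X^d=\cdot\mid u)$ is stationary for it: $\sum_{c'}\mu_*(c')p(y\mid u,c')=p(y\mid u)$, and then $\sum_y p(y\mid u)h^d(c\mid u,y)=\mu_*(c)$.

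\textbf{Uniform Dobrushin bound.} Since $p>0$ on a finite space, $\eta:=\min_{d,x,c}h^d(c\mid x^{-d})>0$. Every kernel $K_k^{d,u}$, and also $K_*^{d,u}$, has all entries at least $\eta$, whatever the current heads are. Their Dobrushin coefficients are therefore at most $\lambda:=1-N\eta<1$, uniformly in $k$, $d$ and $u$. This uniformity is what frees the argument from any importance-weighting factor $(1-\nu)$.

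\textbf{Inductive step.} Suppose the heads $j>d$ converge to $g_*^j$ at every prefix. Then $q_k(\cdot\mid u,c')\to p(\cdot\mid u,c')$, being a finite product of convergent factors, so $K_k^{d,u}\to K_*^{d,u}$ entrywise. Write $\mu_k=q_k^d(\cdot\mid u)$. Using $\mu_*=\mu_*K_*^{d,u}$, we get
\[
\|\mu_{k+1}-\mu_*\|_{\mathrm{TV}}
\le \|(\mu_k-\mu_*)K_k^{d,u}\|_{\mathrm{TV}}+\|\mu_*(K_k^{d,u}-K_*^{d,u})\|_{\mathrm{TV}}
\le\lambda\|\mu_k-\mu_*\|_{\mathrm{TV}}+\varepsilon_k,
\]
with $\varepsilon_k\to0$. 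The elementary lemma that $a_{k+1}\le\lambda a_k+\varepsilon_k$ with $\lambda<1$ and $\varepsilon_k\to0$ implies $a_k\to0$ gives $\mu_k\to\mu_*$. There are finitely many prefixes $u$, so the convergence is uniform, and the induction proceeds to $d-1$.

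\textbf{Conclusion.} All heads are strictly positive, and after one step they are bounded below by $\eta$. Hence TV convergence is equivalent to convergence of the log-ratio oscillations $e_k^d\to0$, and $q_k\to p$ follows exactly as at the end of \Cref{app:fixed_order_proof}.

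The main obstacle is the inductive step for $d<D-1$. There the kernel is only asymptotically equal to $K_*^{d,u}$, not exactly, so one cannot simply cite Perron--Frobenius for a fixed stochastic matrix. This is handled by the uniform contraction coefficient $\lambda$, which holds for every iterate, combined with the perturbation recursion above. The one thing to check is that the lower bound $\eta$ on the Gibbs labels really is independent of the current heads; it is, since the entries of $K_k^{d,u}$ are convex combinations of $h^d$ values.
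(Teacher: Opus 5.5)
Your proposal is correct and follows essentially the same route as the paper. Both arguments use the same two-block Gibbs kernel, the same Dobrushin bound $1-N\epsilon$ coming from the uniform lower bound on the Gibbs labels, stationarity of the target head under the limiting kernel, the perturbation recursion $A_{k+1}^d\le\kappa A_k^d+B_k^d$, and backward induction from $d=D$. Your closing translation into log-ratio oscillations is correct, but the paper does not need it.
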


\begin{proof}
Relabel the fixed order as $1,\ldots,D$. Strict positivity of $p$ on
the finite state space implies that the Gibbs labels are uniformly
positive:
\[
\epsilon
=
\min_{d,x^{-d},c}
h^d(c\mid x^{-d})
>0.
\]

Fix a site $d$, a prefix $u=x^{<d}$, and a possible value
$a\in\mathcal A$ at site $d$. Define
\begin{equation}
K_{d,k}(a,c\mid u)
=
\sum_y
q_k(X^{>d}=y\mid X^{<d}=u,X^d=a)
h^d(c\mid u,y).
\label{eq:label_only_kernel}
\end{equation}
This kernel first samples the suffix from the current proposal,
conditional on the prefix and current symbol \(a\), and then resamples
site \(d\) from its target Gibbs conditional.

When $\nu=0$, the buffer distribution is $q_k$. Marginalizing the
current value \(a\) in the population update gives
\begin{equation}
q_{k+1}^d(c\mid u)
=
\sum_{a\in\mathcal A}
q_k^d(a\mid u)K_{d,k}(a,c\mid u).
\label{eq:label_only_head_update}
\end{equation}
Thus the current head is updated by applying the Markov kernel
$K_{d,k}$.

Every entry of $K_{d,k}$ is at least $\epsilon$, because it is an
average of Gibbs labels: $K_{d,k}(a,c\mid u)\geq\epsilon$.
Its total-variation contraction coefficient is therefore uniformly
bounded by
$\kappa
=
1-N\epsilon
<1$,
where \(N=|\mathcal A|\). In particular, for any categorical
distributions \(\mu\) and \(\widetilde\mu\),
\begin{equation}
\left\|
\mu K_{d,k}
-
\widetilde\mu K_{d,k}
\right\|_{\mathrm{TV}}
\leq
\kappa
\left\|
\mu-\widetilde\mu
\right\|_{\mathrm{TV}}.
\label{eq:gibbs_kernel_contraction}
\end{equation}

We now proceed backward through the reveal order. For $d=D$, the suffix
is empty, so
\[
q_{k+1}^D(c\mid u)
=
h^D(c\mid u)
=
p(X^D=c\mid X^{<D}=u).
\]
Hence the last head becomes exact after one iteration.

Suppose inductively that all later heads \(j>d\) converge to their
target conditionals. Their product then satisfies
\[
q_k(X^{>d}=y\mid u,a)
\longrightarrow
p(X^{>d}=y\mid u,a)
\]
uniformly over the finitely many \((u,a,y)\). Consequently,
$K_{d,k}$ converges to the target two-block Gibbs kernel
\[
K_d^*(a,c\mid u)
=
\sum_y
p(X^{>d}=y\mid u,a)
h^d(c\mid u,y).
\]

The target head $\pi_d(c\mid u) := p(X^d=c\mid X^{<d}=u)$ is stationary for this limiting kernel. Indeed,
\begin{equation}
\sum_a
\pi_d(a\mid u)K_d^*(a,c\mid u)
=
\sum_{a,y}
p(a,y\mid u)h^d(c\mid u,y) =
\sum_y
p(y\mid u)p(c\mid u,y) 
=
\pi_d(c\mid u).
\label{eq:target_gibbs_stationarity}
\end{equation}

Define the maximum head error
\[
A_k^d
=
\max_u
\left\|
q_k^d(\cdot\mid u)
-
\pi_d(\cdot\mid u)
\right\|_{\mathrm{TV}}.
\]
Using \cref{eq:label_only_head_update,eq:gibbs_kernel_contraction} and
adding and subtracting
$\pi_d(\cdot\mid u)K_{d,k}$ gives
\begin{equation}
A_{k+1}^d
\leq
\kappa A_k^d+B_k^d,
\label{eq:label_only_error_recursion}
\end{equation}
where
\[
B_k^d
=
\max_u
\left\|
\pi_d(\cdot\mid u)K_{d,k}(\cdot,\cdot\mid u)
-
\pi_d(\cdot\mid u)K_d^*(\cdot,\cdot\mid u)
\right\|_{\mathrm{TV}}.
\]
The induction hypothesis implies \(B_k^d\to0\). Since
\(\kappa<1\), unrolling \cref{eq:label_only_error_recursion} gives
\(A_k^d\to0\). Backward induction over
\(d=D,D-1,\ldots,1\) proves convergence of every head and hence of the
fixed-order joint proposal to \(p\).
\end{proof}

This result shows that tempered importance weighting is not the only
source of contraction. Even at $\nu=0$, repeated Gibbs-label fitting
propagates the correct target conditionals backward through a fixed
reveal order. Positive tempering strengthens this mechanism by adding
the explicit factor \(1-\nu\) in
\cref{eq:ar_tempered_recursion}.

\subsection{ESS adaptation preserves fixed-order convergence} \label{sec:ess_adaptation}

The fixed-order result remains valid when the tempering exponent is
selected adaptively by the empirical ESS rule, provided that each
selected exponent is followed by the corresponding exact population
fit. The key point is that the selected exponents remain uniformly
bounded away from zero.

\begin{proposition}[Uniform positivity of ESS-selected exponents]
\label{prop:ess_fixed_order}
Consider the fixed-order population iteration, and suppose that at
iteration $k$ the exponent $\nu_k$ is selected from a finite batch by
\cref{eq:adaptive_tempering}. Then, after the first population update,
there exists a constant $\nu_{\min}>0$, independent of $k$ and of the
sampled batch, such that
\[
\nu_k\geq\nu_{\min}.
\]
Consequently, if each update is the exact population update
$g_{k+1}=\Phi_{\nu_k}(g_k)$, then $g_k$ converges to $g_*$.
\end{proposition}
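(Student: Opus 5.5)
The plan is to show that, after one population update, every head is bounded below by a constant that does not depend on $k$. This makes the importance weights uniformly bounded above and below, which in turn forces the ESS to stay large for all small enough $\nu$. Convergence then follows from the varying-exponent part of \cref{thm:main_convergence}\emph{(iii)}.

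\textbf{Step 1 (uniform lower bound on heads).} Let $\epsilon=\min_{d,x^{-d},c}h^d(c\mid x^{-d})$, which is positive by strict positivity of $p$ on the finite space, exactly as in the proof of \cref{prop:label_only_convergence}. By \eqref{eq:ar_update}, for every $k\ge0$ the updated head $q_{k+1}^d(\cdot\mid u)$ is a convex combination of Gibbs labels. Hence $q_{k+1}^d(c\mid u)\ge\epsilon$ for every $d,u,c$, whatever $\nu_k$ and $q_k$ were. It follows that for all $k\ge1$ and all $x$,
\[
\epsilon^D\le q_k(x)\le 1 .
\]

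\textbf{Step 2 (bounded weight ratio).} Let $\tilde p_{\min}$ and $\tilde p_{\max}$ be the extreme values of $\tilde p$, both positive and finite. For $k\ge1$, every raw weight $w(x)=\tilde p(x)/q_k(x)$ lies in $[\tilde p_{\min},\tilde p_{\max}\epsilon^{-D}]$. So for any two buffer entries the ratio $w(x_i)/w(x_j)$ is at most
\[
M:=\frac{\tilde p_{\max}}{\tilde p_{\min}}\,\epsilon^{-D}.
\]
This bound depends neither on $k$ nor on the sampled batch. After tempering, the weights $\bar w=w^\nu$ have ratios at most $M^\nu$.

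\textbf{Step 3 (ESS lower bound).} Suppose $B$ positive numbers lie in $[a,b]$. Then
\[
\frac{(\sum_i \bar w_i)^2}{\sum_i\bar w_i^2}\ge \frac{(Ba)^2}{Bb^2}=B\,(a/b)^2 .
\]
Applied to the tempered weights this gives $\operatorname{ESS}(\nu,\theta_k)\ge B\,M^{-2\nu}$. Therefore every
\[
\nu\le \bar\nu:=\min\Bigl\{1,\ \frac{\log(1/\xi)}{2\log M}\Bigr\}
\]
satisfies $\operatorname{ESS}\ge\xi B$. In particular $\bar\nu$ belongs to the feasible set in \eqref{eq:adaptive_tempering}. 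Since $\nu^\star$ is the maximum of that set, $\nu_k\ge\bar\nu=:\nu_{\min}>0$ for all $k\ge1$, uniformly over batches. Monotonicity of ESS in $\nu$ is never needed, because we only exhibit a single feasible point. If $M=1$, the bound holds for every $\nu$ and $\nu_{\min}=1$.

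\textbf{Step 4 (conclusion).} Discard iteration $0$, or treat it as a new positive initialization $g_1$. The tail of the iteration then uses exponents $\nu_k\ge\nu_{\min}>0$. The varying-exponent remark closing \cref{app:fixed_order_proof} gives $\max_d e_k^d=O\bigl(k^{D-1}(1-\nu_{\min})^k\bigr)\to0$, so $g_k\to g_*$.

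\textbf{Main obstacle.} I expect the main difficulty to be in Step 1: the lower bound must hold uniformly in $k$, not merely for each fixed iterate. This is why the statement says ``after the first population update''. The initial $g_0$ is positive but arbitrary, so its weights may be arbitrarily spread, and a uniform constant only becomes available once every head is a Gibbs-label average.
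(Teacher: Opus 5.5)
Your proposal is correct and follows essentially the same route as the paper. The uniform Gibbs-label floor $\epsilon$ gives $q_k\ge\epsilon^D$ for $k\ge1$, so the log-weight range is bounded by $\log M = U_{\max}-U_{\min}+D\log(1/\epsilon)$, which is exactly the paper's $L_{\max}$; the estimate $\operatorname{ESS}(\nu)\ge B e^{-2\nu L}$ then exhibits a uniform feasible exponent, and the varying-exponent fixed-order bound gives convergence. The only cosmetic difference is that you bound the weight ratio directly, whereas the paper passes through the batch's empirical log-range $L\le L_{\max}$ in its auxiliary lemma.
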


\begin{proof}
Because $p$ is strictly positive on a finite product space, every
single-site Gibbs label is uniformly positive. Define
\[
\epsilon
=
\min_{d,x^{-d},c}
h^d(c\mid x^{-d})
>0.
\]
Every fitted categorical head is a conditional average of Gibbs
labels. Hence, after the first population update,
$g_k^d(c\mid x^{<d})\geq\epsilon$
for every site, prefix, and symbol. The corresponding fixed-order
proposal therefore satisfies
\begin{equation}
q_k(x)
=
\prod_{d=1}^Dg_k^d(x^d\mid x^{<d})
\geq
\epsilon^D,
\qquad k\geq1.
\label{eq:fixed_order_proposal_lower_bound}
\end{equation}

The unnormalized importance weight is
\[
W_k(x)=\frac{e^{-U(x)}}{q_k(x)},
\]
so its log is
\[
\lambda_k(x)=-U(x)-\log q_k(x).
\]
Since $q_k(x)\leq1$ and \cref{eq:fixed_order_proposal_lower_bound}
gives $q_k(x)\geq\epsilon^D$,
\[
0\leq-\log q_k(x)\leq D\log(1/\epsilon).
\]
It follows that the full log-weight range is uniformly bounded:
\begin{align}
L_k :=
\operatorname{osc}_x\lambda_k(x) \leq
U_{\max}-U_{\min}
+
D\log(1/\epsilon)
=:L_{\max}<\infty.
\label{eq:uniform_log_weight_range}
\end{align}

For any finite batch drawn at iteration $k$, its empirical log-weight
range is at most $L_k$, and therefore at most $L_{\max}$. By
Lemma \ref{lem:ess_exponent_lower}, the exponent selected by the ESS rule satisfies
\[
\nu_k
\geq
\min\left\{
1,\frac{\log(1/\xi)}{2L_{\max}}
\right\}
=:\nu_{\min}>0,
\]
with the convention $\nu_{\min}=1$ if $L_{\max}=0$.

Thus the adaptively selected exponents satisfy
$\inf_k\nu_k\geq\nu_{\min}>0$. The fixed-order matrix bound
\cref{eq:fixed_order_matrix_bound} consequently gives
\[
\max_d e_k^d
=
O\!\left(
k^{D-1}(1-\nu_{\min})^k
\right)
\longrightarrow0.
\]
Hence all fixed-order heads, and therefore the joint proposal $q_k$,
converge to the target.
\end{proof}

The preceding result only requires the selected exponents to remain
positive. In fact, the ESS rule eventually selects full weighting in
the exact population setting. Since $q_k\to p$ on the finite state
space,
\[
\operatorname{osc}_x
\log\frac{e^{-U(x)}}{q_k(x)}
=
\operatorname{osc}_x
\log\frac{p(x)}{q_k(x)}
\longrightarrow0.
\]
By \cref{eq:ess_lower}, once this range is small enough that
\[
B\exp(-2L_k)\geq\xi B,
\]
every possible batch satisfies the ESS threshold at $\nu=1$. The rule
then selects $\nu_k=1$, and the next exact population update recovers
$g_*$ in one step.

This conclusion concerns an idealized hybrid procedure: a finite batch
is used only to select $\nu_k$, while $\Phi_{\nu_k}(g_k)$ is still
evaluated as an exact population fit. If the same finite batch is also
used to approximate the fitting expectation, additional sampling and
optimization errors arise and are not covered by this proposition.

\begin{lemma}[Lower bound for ESS-selected tempering]
\label{lem:ess_exponent_lower}
Let $W_i=e^{\lambda_i}>0$, $i=1,\ldots,B$, be a finite collection
of importance weights, and define their log-weight range by $L
=
\max_i\lambda_i-\min_i\lambda_i$.
For $\nu\in[0,1]$, let
$\operatorname{ESS}(\nu)
=\big(\sum_{i=1}^B e^{\nu\lambda_i}\big)^2/ \sum_{i=1}^B e^{2\nu\lambda_i}$.
Then
\begin{equation}
\operatorname{ESS}(\nu)
\geq
B e^{-2\nu L}.
\label{eq:ess_lower}
\end{equation}
Consequently, for any threshold $\xi\in(0,1)$, the largest feasible
tempering exponent
$\nu^\star
=
\max\left\{
    \nu\in[0,1]:
    \operatorname{ESS}(\nu)\geq\xi B
\right\}$
satisfies
\begin{equation}
\nu^\star
\geq
\min\left\{
    1,
    \frac{\log(1/\xi)}{2L}
\right\}
\label{eq:ess_exponent_lower}
\end{equation}
when $L>0$. If $L=0$, then
$\operatorname{ESS}(\nu)=B$ for every $\nu\in[0,1]$ and hence
$\nu^\star=1$.
\end{lemma}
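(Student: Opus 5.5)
The plan is to bound the ESS ratio directly in terms of the log-weight range, and then turn that bound into a lower bound on $\nu^\star$ using monotonicity of the bound in $\nu$.

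\emph{Step 1: the ESS bound.} Write $m=\min_i\lambda_i$ and $M=\max_i\lambda_i$, so $M-m=L$. Each tempered weight satisfies
\[
e^{\nu m}\le e^{\nu\lambda_i}\le e^{\nu M}.
\]
Bounding the numerator from below gives $\big(\sum_i e^{\nu\lambda_i}\big)^2\ge B^2e^{2\nu m}$. Bounding the denominator from above gives $\sum_i e^{2\nu\lambda_i}\le Be^{2\nu M}$. Dividing the two yields
\[
\operatorname{ESS}(\nu)\ge Be^{2\nu(m-M)}=Be^{-2\nu L},
\]
which is \eqref{eq:ess_lower}. This crude bound is all that is needed; sharper estimates via Cauchy--Schwarz are unnecessary.

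\emph{Step 2: the lower bound on $\nu^\star$.} Set
\[
\nu_0=\min\{1,\log(1/\xi)/(2L)\}.
\]
Since $\xi\in(0,1)$, we have $\nu_0\in(0,1]$. Because $\nu_0\le\log(1/\xi)/(2L)$, it follows that $2\nu_0L\le\log(1/\xi)$, hence $e^{-2\nu_0L}\ge\xi$. By Step 1 this gives $\operatorname{ESS}(\nu_0)\ge\xi B$, so $\nu_0$ lies in the feasible set. Therefore $\nu^\star\ge\nu_0$.

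\emph{Step 3: the maximum is attained.} We must check that $\nu^\star$ is a genuine maximum and not just a supremum. The map $\nu\mapsto\operatorname{ESS}(\nu)$ is continuous on $[0,1]$, so the feasible set $\{\nu:\operatorname{ESS}(\nu)\ge\xi B\}$ is closed. It is nonempty, because $\operatorname{ESS}(0)=B\ge\xi B$. A nonempty closed subset of $[0,1]$ has a maximum.

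\emph{Step 4: the case $L=0$.} If $L=0$, all $\lambda_i$ are equal to a common value $\lambda$. Then $\operatorname{ESS}(\nu)=B^2e^{2\nu\lambda}/(Be^{2\nu\lambda})=B$ for every $\nu\in[0,1]$, so $\nu^\star=1$.

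No step is a real obstacle. The only subtleties are the well-definedness of the maximum in Step 3 and the case split on whether the $\min$ in $\nu_0$ equals $1$; both are handled by the monotonicity of $e^{-2\nu L}$ in $\nu$.
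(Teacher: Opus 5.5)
Your proposal is correct and follows the paper's proof essentially line for line: the same crude bounds on numerator and denominator give $\operatorname{ESS}(\nu)\ge Be^{-2\nu L}$, feasibility of $\bar\nu=\min\{1,\log(1/\xi)/(2L)\}$ then yields $\nu^\star\ge\bar\nu$, and the $L=0$ case is handled identically. Your Step 3, which uses continuity of $\operatorname{ESS}$ and $\operatorname{ESS}(0)=B$ to show the maximum is attained, is a small rigor point the paper leaves implicit; note also that the monotonicity of $e^{-2\nu L}$ you invoke at the start and end is never actually used, since Step 2 relies only on feasibility of $\bar\nu$ and the definition of $\nu^\star$.
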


\begin{proof}
Let $\lambda_{\min}=\min_i\lambda_i$,
$\lambda_{\max}=\max_i\lambda_i$.
The numerator of $\operatorname{ESS}(\nu)$ satisfies $\big(\sum_{i=1}^B e^{\nu\lambda_i}\big)^2
\geq
B^2 e^{2\nu\lambda_{\min}}$,
while its denominator satisfies
$\sum_{i=1}^B e^{2\nu\lambda_i}
\leq
B e^{2\nu\lambda_{\max}}$.
Therefore,
\[
\operatorname{ESS}(\nu)
\geq
\frac{
    B^2 e^{2\nu\lambda_{\min}}
}{
    B e^{2\nu\lambda_{\max}}
}
=
B e^{-2\nu(\lambda_{\max}-\lambda_{\min})}
=
B e^{-2\nu L},
\]
which proves \cref{eq:ess_lower}.

Suppose $L>0$ and define $\bar\nu
= \min\big\{ 1, \frac{\log(1/\xi)}{2L} \big\}$.
By construction, $e^{-2\bar\nu L}\geq\xi$,
and hence \cref{eq:ess_lower} gives
$\operatorname{ESS}(\bar\nu)\geq\xi B$.
Thus $\bar\nu$ is feasible. Since $\nu^\star$ is the largest feasible
exponent, $\nu^\star\geq\bar\nu$,
which proves \cref{eq:ess_exponent_lower}.

If $L=0$, all log-weights, and therefore all weights, are equal. It
follows directly that $\operatorname{ESS}(\nu)=B$ for every
$\nu\in[0,1]$, so the largest feasible exponent is $\nu^\star=1$.
\end{proof}

\subsection{Tempering scales the local Jacobian}
\label{app:local_proof}

The global contraction condition in
\cref{eq:anyorder_contraction} is sufficient but can be conservative.
We now study the linearized population update near the target. This
local result applies to any fixed, value-independent reveal-order law
$\rho$ and is independent of the fixed-order triangular argument.

\begin{proposition}[Local Jacobian scaling]
\label{prop:jacobian_scaling}
At the target fixed point,
\begin{equation}
D\Phi_\nu(g_*)
=
(1-\nu)D\Phi_0(g_*),
\qquad
\nu\in[0,1].
\label{eq:jacobian_scaling}
\end{equation}
Consequently, every eigenvalue of $D\Phi_\nu(g_*)$ is equal to
$1-\nu$ times the corresponding eigenvalue of $D\Phi_0(g_*)$.
\end{proposition}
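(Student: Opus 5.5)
The plan is to compute the Fréchet derivative of $\Phi_\nu=\mathcal T\circ r_{\nu,\cdot}$ at $g_*$ by the chain rule, and then observe that the $\nu$-dependence factors out. The outer map $\mathcal T$ is linear in the unnormalized buffer: by \cref{eq:population_map},
\[
[\mathcal T(r)]^d(c\mid x_R)=\frac{\sum_{x:x_R=x_R}r(x)\,h^d(c\mid x^{-d})}{\sum_{x:x_R=x_R}r(x)},
\]
a ratio of linear functionals of $r$. Its derivative at $r=p$ in a direction $\delta r$ is therefore a fixed linear map $D\mathcal T(p)$ that is independent of $\nu$. Its kernel contains multiples of $p$, which is why normalization constants will not matter.

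The key step is to differentiate the tempered endpoint law \cref{eq:tempered_endpoint} with respect to $g$ at $g_*$. Perturb $g=g_*+\varepsilon\,\delta g$ within the space of normalized positive heads, and set $a_\sigma(x)=\log q_g(x\mid\sigma)-\log p(x)$. Since $q_{g_*}(x\mid\sigma)=p(x)$ for every $\sigma$ (the chain-rule identity used in \cref{app:anyorder_proof}), we have $a_\sigma=\varepsilon\,\delta a_\sigma+o(\varepsilon)$ with
\[
\delta a_\sigma(x)=\sum_k \frac{\delta g^{\sigma_k}(x^{\sigma_k}\mid x_{\sigma_{<k}})}{g_*^{\sigma_k}(x^{\sigma_k}\mid x_{\sigma_{<k}})} .
\]
Using \cref{eq:r_nu_g_p}, the unnormalized tempered density is $p(x)\sum_\sigma\rho(\sigma)e^{(1-\nu)a_\sigma(x)}$. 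Expanding the exponential to first order gives $p(x)\bigl[1+\varepsilon(1-\nu)\sum_\sigma\rho(\sigma)\delta a_\sigma(x)\bigr]+o(\varepsilon)$. The normalizer $C_{\nu,g}$ is $1+\varepsilon(1-\nu)\,\E_p[\bar{\delta a}]+o(\varepsilon)$, where $\bar{\delta a}=\sum_\sigma\rho(\sigma)\delta a_\sigma$. Hence
\[
D r_{\nu,\cdot}(g_*)[\delta g]=(1-\nu)\,p\,\bigl(\bar{\delta a}-\E_p\bar{\delta a}\bigr)=(1-\nu)\,D r_{0,\cdot}(g_*)[\delta g].
\]
Composing with the $\nu$-independent linear map $D\mathcal T(p)$ yields \cref{eq:jacobian_scaling}. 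The eigenvalue statement follows immediately, because scalar multiples of a linear operator share eigenvectors and their eigenvalues scale by the same factor.

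The main obstacle is justifying differentiability and the correct function space. The heads live on a product of simplices, so the derivative should be taken on the tangent space of perturbations with $\sum_c\delta g^d(c\mid x_R)=0$, or equivalently in logit coordinates. I also need $\Phi_\nu$ to map that space to itself and to be smooth near $g_*$. Smoothness holds because everything is a finite composition of positive polynomials, exponentials with real exponent $1-\nu$ applied to positive quantities, and ratios with positive denominators. I would also check the endpoint $\nu=1$ separately: there $\Phi_1\equiv g_*$ by part (i) of \cref{thm:main_convergence}, so its derivative is zero, consistent with the formula.
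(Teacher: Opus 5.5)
Your proposal is correct and follows the paper's proof. Both factor $\Phi_\nu=\mathcal T\circ\mathcal R_\nu$, use $q_{g_*}(\cdot\mid\sigma)=p$ to get $D\mathcal R_\nu(g_*)=(1-\nu)\,D\mathcal R_0(g_*)$, and then compose with the $\nu$-independent derivative $D\mathcal T(p)$. The only cosmetic difference is that you keep the normalizer term $\E_p\bar{\delta a}$. The paper shows this term is zero, because $p\,\delta a_\sigma=\dot q_\sigma$ sums to zero over $x$, and in any case $D\mathcal T(p)$ annihilates it, as you note.
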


\begin{proof}
Let $\mathcal R_\nu(g)=r_{\nu,g}$
denote the map from the current categorical heads to the tempered
endpoint distribution. The population update decomposes as $\Phi_\nu = \mathcal T\circ\mathcal R_\nu$,
where $\mathcal T$ is the exact Gibbs-label fitting operator from
\cref{prop:fitting_nonexpansion}.

Choose independent coordinates for the categorical simplices and let
$v$ be a tangent perturbation of $g_*$. Set
$g_\epsilon=g_*+\epsilon v$
for sufficiently small $\epsilon$. For each reveal order $\sigma$,
define
$\dot q_\sigma(x)
=
\left.
\frac{\dd}{\dd\epsilon}
q_{g_\epsilon}(x\mid\sigma)
\right|_{\epsilon=0}$.
Because $q_{g_\epsilon}(\cdot\mid\sigma)$ is normalized for every
$\epsilon$,
\begin{equation}
\sum_x\dot q_\sigma(x)=0.
\label{eq:q_tangent_normalization}
\end{equation}

Consider the unnormalized endpoint law
\[
\widetilde r_{\nu,g}(x)
=
p(x)^\nu
\sum_\sigma
\rho(\sigma)
q_g(x\mid\sigma)^{1-\nu}.
\]
At $g=g_*$, the target chain rule gives
$q_{g_*}(x\mid\sigma)=p(x)$
for every $\sigma$, and hence
$\widetilde r_{\nu,g_*}(x)=p(x)$.

Differentiating in the direction $v$ gives
\begin{equation}
\left.
\frac{\dd}{\dd\epsilon}
\widetilde r_{\nu,g_\epsilon}(x)
\right|_{\epsilon=0} =
p(x)^\nu
\sum_\sigma\rho(\sigma)
(1-\nu)
p(x)^{-\nu}
\dot q_\sigma(x) =
(1-\nu)
\sum_\sigma\rho(\sigma)\dot q_\sigma(x).
\label{eq:tempered_endpoint_derivative}
\end{equation}
The derivative of the normalizing constant vanishes, since
\begin{equation}
\sum_x
\left.
\frac{\dd}{\dd\epsilon}
\widetilde r_{\nu,g_\epsilon}(x)
\right|_{\epsilon=0}
=
(1-\nu)
\sum_\sigma\rho(\sigma)
\sum_x\dot q_\sigma(x) =0
\end{equation}
by equation \eqref{eq:q_tangent_normalization}. Therefore normalization does not
contribute an additional first-order term at $g_*$, and
\[
D\mathcal R_\nu(g_*)[v]
=
(1-\nu)
\sum_\sigma\rho(\sigma)\dot q_\sigma.
\]
For $\nu=0$, the same calculation gives
$D\mathcal R_0(g_*)[v]
=
\sum_\sigma\rho(\sigma)\dot q_\sigma$.
Thus
\begin{equation}
D\mathcal R_\nu(g_*)
=
(1-\nu)D\mathcal R_0(g_*).
\label{eq:endpoint_jacobian_scaling}
\end{equation}

On the interior of the finite probability simplex, the fitting map
$\mathcal T$ is differentiable. Moreover,
$\mathcal R_\nu(g_*)=p$
for every $\nu$. Applying the chain rule and
equation \eqref{eq:endpoint_jacobian_scaling} yields
\begin{equation}
D\Phi_\nu(g_*)
=
D\mathcal T(p)\,D\mathcal R_\nu(g_*) =
(1-\nu)
D\mathcal T(p)\,D\mathcal R_0(g_*) =
(1-\nu)D\Phi_0(g_*),
\end{equation}
which proves equation \eqref{eq:jacobian_scaling}.
\end{proof}

Let $s_0
=
\operatorname{spr}\!\left(D\Phi_0(g_*)\right)$
be the spectral radius of the unweighted population update. By
Proposition \ref{prop:jacobian_scaling},
$\operatorname{spr}\!\left(D\Phi_\nu(g_*)\right)
=
(1-\nu)s_0$.
Therefore,
\begin{equation}
(1-\nu)s_0<1
\label{eq:local_stability_condition}
\end{equation}
is sufficient for local asymptotic stability of $g_*$. In other
words, under this condition, every sufficiently small perturbation of
the target heads converges back to $g_*$ under repeated population
updates.

This local condition can be substantially weaker than the global
sufficient condition
$2D(1-\nu)<1$
from equation \eqref{eq:anyorder_contraction}. The global result controls every
positive initialization using a worst-case accumulation of error
across all $D$ coordinates, whereas
equation \eqref{eq:local_stability_condition} describes only the dynamics of
small perturbations around $g_*$.

At $\nu=1$, the Jacobian vanishes. This is consistent with the stronger
global fact that full importance weighting maps every positive
initialization to $g_*$ in a single exact population update. Thus the
fixed-point identity establishes that the target is correct, while
\cref{eq:jacobian_scaling} quantifies how tempering improves its local
stability.

\subsection{Robustness to approximate population fitting}
\label{sec:approximate_fitting}

The preceding convergence results assume unrestricted categorical
heads and exact population minimization. In the arbitrary-order
contraction regime, bounded fitting errors can be incorporated through
a standard perturbation argument.

Let $\widehat\Phi_k(g)$ denote the model produced by the implemented
fit at iteration \(k\), and suppose its error relative to the exact
population update satisfies
\begin{equation}
\delta_k
=
\max_{(R,d,x_R)\in\mathcal C_{\mathrm{train}}}
\operatorname{osc}_c
\log
\frac{
[\widehat\Phi_k(g_k)]^d(c\mid x_R)
}{
[\Phi_\nu(g_k)]^d(c\mid x_R)
}.
\label{eq:approximate_fitting_error}
\end{equation}
If $g_{k+1}=\widehat\Phi_k(g_k)$,
then subadditivity of log-ratio oscillation and
\cref{eq:anyorder_contraction} give
\begin{equation}
\Delta(g_{k+1})
\leq
c_\nu\Delta(g_k)+\delta_k,
\qquad
c_\nu=2D(1-\nu).
\label{eq:perturbed_contraction}
\end{equation}

Assume \(c_\nu<1\). Iterating \cref{eq:perturbed_contraction} yields
\begin{equation}
\Delta(g_k)
\leq
c_\nu^k\Delta(g_0)
+
\sum_{j=0}^{k-1}
c_\nu^{k-1-j}\delta_j.
\label{eq:perturbed_contraction_unrolled}
\end{equation}
Consequently:

\begin{itemize}[leftmargin=*]
\item If $\delta_k\to0$ for every $k$, then $\Delta(g_k)\to0$.
\item If $\delta_k\leq\overline\delta$ for every $k$, then
\[
\Delta(g_k)
\leq
c_\nu^k\Delta(g_0)
+
\frac{1-c_\nu^k}{1-c_\nu}\,\overline\delta,
\]
and hence
\[
\limsup_{k\to\infty}\Delta(g_k)
\leq
\frac{\overline\delta}{1-c_\nu}.
\]
\end{itemize}

Thus vanishing fitting errors preserve convergence, while persistent
bounded errors yield convergence only to a controlled neighborhood of
the target. This is a conditional robustness statement: it does not
establish that any particular neural architecture, finite replay
buffer, or optimizer achieves a prescribed sequence \(\delta_k\).

Both this argument and
\cref{prop:label_only_convergence} rely on strict positivity. Hard
constraints can produce zero Gibbs labels or inaccessible contexts, in
which case the uniform kernel contraction and log-ratio metrics used
above may fail and require a separate analysis.

\section{General discrete interpolants}
\label{app:general}

We recap and prove the construction of general forward and backward CTMC that we introduced in \Cref{sec:general_interpolants}.
We will prove the construction with  $  p_t^d(\cdot \mid x^{-d})$ and $\phi_t^d( \cdot \mid x^{-d})$ in  \Cref{eq:m-difference,eq:flux} minimizes the kinetic energy in \Cref{app:kinetic}, and derive the training objective \Cref{app:learn}.

\subsection{Minimum kinetic energy local  flux}\label{app:kinetic}

We aim to find a flux to minimize the following kinetic energy

\begin{equation}
\mathcal E_t(j)=
\frac12\sum_d\sum_{x^{-d}}p_t(x^{-d})\sum_a \sum_{c \neq a}
\frac{j_t^d(a,c\mid x^{-d})^2}
     {p_t(x^d=a|x^{-d}) p_t(x^d=c|x^{-d}) }
\end{equation}
 subject to the Kolmogorov constraint:
 \begin{align}
       \partial_t^{(d)}p_t(x^d=a\mid x^{-d})
    = \sum_{c\neq a} j^d_t(c, a |x^{-d}), \quad \forall a, d
 \end{align}
\begin{proof}

We solve the constrained minimization problem by introducing Lagrange multipliers:
\begin{align}
\mathcal{L}(j,\lambda)
=
\sum_d \sum_{x^{-d}} p_t(x^{-d}) \sum_a
\Bigg[
&\frac12 \sum_{c\neq a}
\frac{j_t^d(a,c\mid x^{-d})^2}
     {p_t(x^d=a\mid x^{-d})\,
      p_t(x^d=c\mid x^{-d})}
\\
& 
+\lambda_t^d(a\mid x^{-d})
\left(
\partial_t^{(d)}p_t(x^d=a\mid x^{-d})
-\sum_{c\neq a}j_t^d(c,a\mid x^{-d})
\right)
\Bigg].\nonumber
\end{align}
Therefore, taking the variation, we have
\begin{align}
&\frac{\delta \mathcal L}
{\delta j_t^d(a,c\mid x^{-d})}\nonumber
\\=
&p_t(x^{-d})
\left[
\frac{ j_t^d(a,c\mid x^{-d})}
{p_t(x^d=a\mid x^{-d})\,
 p_t(x^d=c\mid x^{-d})}
+\lambda_t^d(a\mid x^{-d})
-\lambda_t^d(c\mid x^{-d})
\right]
=0.
\end{align}
Hence
\begin{align}
j_t^d(a,c\mid x^{-d})
=
 \,
p_t(x^d=a\mid x^{-d})\,
p_t(x^d=c\mid x^{-d})
\left[
\lambda_t^d(c\mid x^{-d})
-\lambda_t^d(a\mid x^{-d})
\right].
\end{align} 

Now, plugging this into the Kolmogorov constraint, we obtain
\begin{align}
\partial_t^{(d)}p_t(x^d=a\mid x^{-d})
&=
p_t(x^d=a\mid x^{-d})
\sum_{c\neq a}
p_t(x^d=c\mid x^{-d})
\left[
\lambda_t^d(a\mid x^{-d})
-\lambda_t^d(c\mid x^{-d})
\right]
\nonumber\\
&=
p_t(x^d=a\mid x^{-d})
\left[
\lambda_t^d(a\mid x^{-d})
-
\sum_c p_t(x^d=c\mid x^{-d})
\lambda_t^d(c\mid x^{-d})
\right].
\end{align}
Dividing both sides by $p_t(x^d=a\mid x^{-d})$ gives
\begin{align}
\phi_t^d(a\mid x^{-d})
=
\lambda_t^d(a\mid x^{-d})
-
\sum_c p_t(x^d=c\mid x^{-d})
\lambda_t^d(c\mid x^{-d}).
\end{align}
Since the second term is a constant w.r.t $a$, taking the difference
between states $a$ and $c$ yields
\begin{align}
\lambda_t^d(a\mid x^{-d})
-
\lambda_t^d(c\mid x^{-d})
=
\phi_t^d(a\mid x^{-d})
-
\phi_t^d(c\mid x^{-d}).
\end{align}
Therefore, the minimizing flux is
\begin{align}
J_t^d(c,a\mid x^{-d})
&=
p_t(x^d=a\mid x^{-d})
p_t(x^d=c\mid x^{-d})
\left[
\phi_t^d(a\mid x^{-d})
-
\phi_t^d(c\mid x^{-d})
\right]
\nonumber\\
&=
p_t(x^d=c\mid x^{-d})
\partial_t^{(d)}p_t(x^d=a\mid x^{-d})
-
p_t(x^d=a\mid x^{-d})
\partial_t^{(d)}p_t(x^d=c\mid x^{-d}).
\end{align}
\end{proof}

\subsection{Learning the discrete interpolants}\label{app:learn}

We now derive the training objective.
We also detail the form after using tempered importance reweighting.

\subsubsection{ Learning $ p_t^d(\cdot \mid x^{-d})$}
The entire derivation follows in three steps:
\begin{enumerate}
    \item  we want to use Gibbs label as efficient energy supervision, and hence we write down  $ p_t^d(\cdot \mid x^{-d})$ with both ends explicitly  $ p_t^d(\cdot \mid x^{-d}) = \mathbb{E}_{x_0, x_1 | x^{-d}} [\cdots] = \mathbb{E}_{x_0, x_1^{-d}, x_1^d | x^{-d}} [ \cdots] $, the second equality allows us to bake in the endpoints' Gibbs label into the training;
    \item we apply CE between the optimal $ p_t^d(\cdot \mid x^{-d})$ and our network, writing down a CE objective;
    \item we take expectation over end point distributions, and swapping the conditions: $\mathbb{E}_{x^{-d}}\mathbb{E}_{x_0, x_1  | x^{-d}} [ \cdots]  =\mathbb{E}_{x_0, x_1 }  \mathbb{E}_{x^{-d}|{x_0, x_1 }} [ \cdots]$, allowing us to directly train the model without simulation or sampling from the endpoint posteriors.
\end{enumerate}

\textbf{Step 1} Under the independent coupling and the coordinate-wise interpolant  in \Cref{eq:coord_interpolant}, we have
 \begin{align}
     p_t (x^d=c|x^{-d}) &= \frac{ p_t (x^d=c, x^{-d})}{p_t ( x^{-d})}\\
     &=\frac{1}{p_t ( x^{-d})} \sum_{x_0, x_1} p_0(x_0)p_1(x_1)\kappa_t^d(x^d=c\mid x_0^d,x_1^d)  \prod_{j\neq d}\kappa_t^j(x^j\mid x_0^j,x_1^j)
 \end{align}
 We can also factorize both endpoints into Gibbs label:
 \begin{align}
     p_0(x_0) = p_0^d(x_0^d |x_0^{-d} ) p_0( x_0^{-d} ) ;     \quad p_1(x_1) = p_1^d(x_1^d |x_1^{-d} ) p_1( x_1^{-d} )
 \end{align}
 We can achieve three different variations from these factorization: (1) factorize both  endpoints; (2) factorize $p_0$; and (3)  factorize $p_1$.
 In our experiments, we consider having samples from one endpoints, and hence we only  factorize $p_1$. However, we can also derive other variations.

The factorization over $p_1$ gives us:
\begin{align}
      p_t (x^d=c|x^{-d})  =\frac{1}{p_t ( x^{-d})} \sum_{x_0, x_1} p_0(x_0) p_1^d(x_1^d |x_1^{-d} ) p_1( x_1^{-d} ) \kappa_t^d(x^d=c\mid x_0^d,x_1^d)  \prod_{j\neq d}\kappa_t^j(x^j\mid x_0^j,x_1^j)
\end{align}
Note that
\begin{align}
    \prod_{j\neq d}\kappa_t^j(x^j\mid x_0^j,x_1^j) = p_t(x^{-d}|x_0 , x_1^{-d})
\end{align}
and hence
\begin{align}
    p_0(x_0)   p_1( x_1^{-d} )    \prod_{j\neq d}\kappa_t^j(x^j\mid x_0^j,x_1^j)/ p_t ( x^{-d})  = p_t( x_0 , x_1^{-d} | x^{-d})
\end{align}
Therefore, 
\begin{align}
      p_t (x^d=c|x^{-d})  = \sum_{x_0, x_1^{-d}} p_t( x_0 , x_1^{-d} | x^{-d})   \sum_{x_1^{d}} p_1^d(x_1^d |x_1^{-d} )   \kappa_t^d(x^d=c\mid x_0^d,x_1^d)   
\end{align}
\textbf{Step 2}  To learn this objective, we can simply use cross entropy:
\begin{align}
&\mathcal L_q(\theta)\\
=
&-\sum_{d,c}\mathbb{E}_{x^{-d},t}
p_t(x^d=c\mid x^{-d})
\log p_{\theta,t}^d(c\mid x^{-d})
\\=
&-\sum_{d,c}\mathbb{E}_{t}\sum_{x^{-d}} p(x^{-d})
\sum_{x_0, x_1^{-d}} p_t( x_0 , x_1^{-d} | x^{-d})   \sum_{x_1^{d}} p_1^d(x_1^d |x_1^{-d} )   \kappa_t^d(x^d=c\mid x_0^d,x_1^d)   
\log p_{\theta,t}^d(c\mid x^{-d})
\\
=
&-\sum_{d,c}\mathbb{E}_{t}\sum_{x^{-d}} 
\sum_{x_0, x_1^{-d}} p_t( x_0 , x_1^{-d} , x^{-d})   \sum_{x_1^{d}} p_1^d(x_1^d |x_1^{-d} )   \kappa_t^d(x^d=c\mid x_0^d,x_1^d)   
\log p_{\theta,t}^d(c\mid x^{-d})
\\
={}&
-\sum_{d,c}\mathbb{E}_{t}
\sum_{x_0,x_1^{-d},x^{-d}}
p_0(x_0)\,p_1(x_1^{-d})
\prod_{j\neq d}
\kappa_t^j(x^j\mid x_0^j,x_1^j)
\nonumber\\
&\qquad\qquad\qquad\qquad\times
\left[
\sum_a
p_1^d(a\mid x_1^{-d})
\kappa_t^d(c\mid x_0^d,a)
\right]
\log p_{\theta,t}^d(c\mid x^{-d})
\\
={}&
-\sum_{d,c}\mathbb E_t
\sum_{x_0,x_1,x}
p_0(x_0)\,
p_1(x_1^{-d})p_1^d(x_1^d\mid x_1^{-d})
\nonumber\\
&\qquad\qquad\times
\left[
\prod_{j\neq d}
\kappa_t^j(x^j\mid x_0^j,x_1^j)
\right]
\kappa_t^d(x^d\mid x_0^d,x_1^d)
\nonumber\\
&\qquad\qquad\times
\left[
\sum_a p_1^d(a\mid x_1^{-d})
\kappa_t^d(c\mid x_0^d,a)
\right]
\log p_{\theta,t}^d(c\mid x^{-d})
\end{align}
The last line hold as the remaining part does not depend on the newly added $x^d$ variable. 
\begin{align}   
&\mathcal L_q(\theta)=
-\sum_{d,c}\mathbb E_t
\sum_{x_0,x_1,x}
p_0(x_0)p_1(x_1)
K_t(x\mid x_0,x_1)
\nonumber\\
&\qquad\qquad\times
\left[
\sum_a p_1^d(a\mid x_1^{-d})
\kappa_t^d(c\mid x_0^d,a)
\right]
\log p_{\theta,t}^d(c\mid x^{-d})
\end{align}
\textbf{Step 3} Finally, 
\begin{align}
-\mathbb E_{t,X_0\sim p_0,X_1\sim p_1,
X_t\sim K_t(\cdot\mid X_0,X_1)}
\sum_{d,c}
\left[
\sum_a
p_1^d(a\mid X_1^{-d})
\kappa_t^d(c\mid X_0^d,a)
\right]
\log p_{\theta,t}^d(c\mid X_t^{-d}).
\end{align}

\subsubsection{Learning the temporal potential $\phi_t^d(\cdot\mid x^{-d})$}
Now, we consider how to learn $\phi_t^d( \cdot \mid x^{-d})$.
We follow the same three steps as above.

\textbf{Step 1}
Recall that
\begin{align}
\phi_t^d(c\mid x^{-d})
=
\partial_t^{(d)}
\log p_t(x^d=c\mid x^{-d})
=
\frac{
\partial_t^{(d)}p_t(x^d=c\mid x^{-d})
}{
p_t(x^d=c\mid x^{-d})
}.
\end{align}
Since $\partial_t^{(d)}$ only differentiates the $d$-th interpolant
kernel, we have $\partial_t^{(d)}p_t(x^{-d})=0$, and therefore
\begin{align}
\partial_t^{(d)}p_t(x^d=c\mid x^{-d})
&=
\frac{1}{p_t(x^{-d})}
\partial_t^{(d)}p_t(x^d=c,x^{-d})
\\
&=
\frac{1}{p_t(x^{-d})}
\sum_{x_0,x_1}
p_0(x_0)p_1(x_1)
\partial_t\kappa_t^d(c\mid x_0^d,x_1^d)
\prod_{j\neq d}
\kappa_t^j(x^j\mid x_0^j,x_1^j).
\end{align}
Factorizing $p_1$ as before gives
\begin{align}
\partial_t^{(d)}p_t(x^d=c\mid x^{-d})
=
\sum_{x_0,x_1^{-d}}
p_t(x_0,x_1^{-d}\mid x^{-d})
\sum_{x_1^d}
p_1^d(x_1^d\mid x_1^{-d})
\partial_t\kappa_t^d(c\mid x_0^d,x_1^d).
\end{align}

\textbf{Step 2}
To learn $\phi_t^d$, we consider the quadratic objective
\begin{align}
\mathcal L_\phi(\psi)
=
\sum_{d,c}\mathbb E_{x^{-d},t}
\Bigg[
&\frac12
p_t(x^d=c\mid x^{-d})
\phi_{\psi,t}^d(c\mid x^{-d})^2
\nonumber\\
&-
\partial_t^{(d)}p_t(x^d=c\mid x^{-d})
\phi_{\psi,t}^d(c\mid x^{-d})
\Bigg].
\end{align}
Its first-order condition is
\begin{align}
p_t(x^d=c\mid x^{-d})
\phi_{\psi,t}^d(c\mid x^{-d})
-
\partial_t^{(d)}p_t(x^d=c\mid x^{-d})
=0,
\end{align}
whose solution is exactly
$\phi_{\psi,t}^d=\phi_t^d$.

Substituting the expressions derived above for
$p_t(x^d=c\mid x^{-d})$ and
$\partial_t^{(d)}p_t(x^d=c\mid x^{-d})$ gives
\begin{align}
\mathcal L_\phi(\psi)
=
\sum_{d,c}\mathbb E_t
\sum_{x^{-d},x_0,x_1^{-d}}
&p_t(x_0,x_1^{-d},x^{-d})
\Bigg[
\frac12
\left(
\sum_a
p_1^d(a\mid x_1^{-d})
\kappa_t^d(c\mid x_0^d,a)
\right)
\phi_{\psi,t}^d(c\mid x^{-d})^2
\nonumber\\
&\qquad -
\left(
\sum_a
p_1^d(a\mid x_1^{-d})
\partial_t\kappa_t^d(c\mid x_0^d,a)
\right)
\phi_{\psi,t}^d(c\mid x^{-d})
\Bigg].
\end{align}
Using
\begin{align}
p_t(x_0,x_1^{-d},x^{-d})
=
p_0(x_0)p_1(x_1^{-d})
\prod_{j\neq d}
\kappa_t^j(x^j\mid x_0^j,x_1^j),
\end{align}
we obtain
\begin{align}
\mathcal L_\phi(\psi)
=
\sum_{d,c}\mathbb E_t
\sum_{x_0,x_1^{-d},x^{-d}}
&p_0(x_0)p_1(x_1^{-d})
\prod_{j\neq d}
\kappa_t^j(x^j\mid x_0^j,x_1^j)
\nonumber\\
&\times
\Bigg[
\frac12
\left(
\sum_a
p_1^d(a\mid x_1^{-d})
\kappa_t^d(c\mid x_0^d,a)
\right)
\phi_{\psi,t}^d(c\mid x^{-d})^2
\nonumber\\
&\qquad -
\left(
\sum_a
p_1^d(a\mid x_1^{-d})
\partial_t\kappa_t^d(c\mid x_0^d,a)
\right)
\phi_{\psi,t}^d(c\mid x^{-d})
\Bigg].
\end{align}

As above, the remaining integrand does not depend on the newly
introduced $x_1^d$ or $x^d$. We can therefore multiply by the
normalized distributions
$p_1^d(x_1^d\mid x_1^{-d})$ and
$\kappa_t^d(x^d\mid x_0^d,x_1^d)$ and sum over these variables.
This gives
\begin{align}
\mathcal L_\phi(\psi)
=
\sum_{d,c}\mathbb E_t
\sum_{x_0,x_1,x}
&p_0(x_0)p_1(x_1)
K_t(x\mid x_0,x_1)
\nonumber\\
&\times
\Bigg[
\frac12
\left(
\sum_a
p_1^d(a\mid x_1^{-d})
\kappa_t^d(c\mid x_0^d,a)
\right)
\phi_{\psi,t}^d(c\mid x^{-d})^2
\nonumber\\
&\qquad -
\left(
\sum_a
p_1^d(a\mid x_1^{-d})
\partial_t\kappa_t^d(c\mid x_0^d,a)
\right)
\phi_{\psi,t}^d(c\mid x^{-d})
\Bigg].
\end{align}

\textbf{Step 3}
Recognizing the preceding joint distribution as the usual endpoint
and interpolant sampling procedure, we finally obtain
\begin{align}
\mathcal L_\phi(\psi)
=
\mathbb E_{\substack{
t,\,
X_0\sim p_0,\,
X_1\sim p_1,\\
X_t\sim K_t(\cdot\mid X_0,X_1)
}}
\sum_{d,c}
\Bigg[
&\frac12
\left(
\sum_a
p_1^d(a\mid X_1^{-d})
\kappa_t^d(c\mid X_0^d,a)
\right)
\phi_{\psi,t}^d(c\mid X_t^{-d})^2
\nonumber\\
&-
\left(
\sum_a
p_1^d(a\mid X_1^{-d})
\partial_t\kappa_t^d(c\mid X_0^d,a)
\right)
\phi_{\psi,t}^d(c\mid X_t^{-d})
\Bigg].
\end{align}
\subsubsection{Final Objectives}
The objectives above assume that endpoint samples are drawn from the
correct distributions.
In practice, as in the masked-diffusion case, we only have direct samples
from one endpoint, while samples from the other endpoint are generated by
the current sampler.
We therefore use the same fixed-point iteration with tempered importance
weighting and ESS control.

Let $\bar w(X_1)$ denote the tempered importance weight. The final training objectives are
\begin{align}
\mathcal L_q(\theta)
=
-\mathbb E_{\substack{
t,\,
X_0\sim p_0,\,
X_1\sim q_{\theta^-},\\
X_t\sim K_t(\cdot\mid X_0,X_1)
}}
\bar w(X_1)
\sum_{d,c}
\left[
\sum_a
p_1^d(a\mid X_1^{-d})
\kappa_t^d(c\mid X_0^d,a)
\right]
\log p_{\theta,t}^d(c\mid X_t^{-d}),
\end{align}
and
\begin{align}
\mathcal L_\phi(\psi)
=
\mathbb E_{\substack{
t,\,
X_0\sim p_0,\,
X_1\sim q_{\theta^-},\\
X_t\sim K_t(\cdot\mid X_0,X_1)
}}
\bar w(X_1)
\sum_{d,c}
\Bigg[
&\frac12
\left(
\sum_a
p_1^d(a\mid X_1^{-d})
\kappa_t^d(c\mid X_0^d,a)
\right)
\phi_{\psi,t}^d(c\mid X_t^{-d})^2
\nonumber\\
&-
\left(
\sum_a
p_1^d(a\mid X_1^{-d})
\partial_t\kappa_t^d(c\mid X_0^d,a)
\right)
\phi_{\psi,t}^d(c\mid X_t^{-d})
\Bigg].
\end{align}

\section{Further related work}
\label{app:related}

\paragraph{Neural samplers on discrete spaces.}
Early work on discrete neural samplers uses autoregressive models
\citep{wu2019solving,pan2021solving,hibatallah2021variational,nicoli2020asymptotically,mcnaughton2020boosting,bialas2022hierarchical,bialas2023analysis,sanokowski2023variational},
with subsequent work developing architectural refinements
\citep{biazzo2023autoregressive,delbono2025nearest,bialas2026sampling}.
These models provide explicit generation probabilities and can be
trained through variational objectives
\citep{wu2019solving,nicoli2020asymptotically}.
GFlowNets instead learn generation policies from unnormalized rewards
\citep{bengio2021flow,malkin2022trajectory,zhang2022generative,jain2022biological,zhang2023let},
while variational annealing and diffusion samplers address
combinatorial optimization and statistical physics
\citep{sanokowski2024diffusion,sanokowski2025scalable}.
Recent CTMC-based approaches include locally equivariant rate models
\citep{holderrieth2025leaps,ou2025discrete},
importance-weighted energy-based generator matching with bootstrapping
\citep{woo2025energy}, optimal-control approaches to masked diffusion
\citep{zhu2025mdns,guo2026proximal}, and discrete adjoint matching or
off-policy bridge learning
\citep{guo2026asbs,carter2026discrete}.
Applications include sampling disordered materials and alloys
\citep{damewood2022sampling,du2026scaling,blessing2026janus,min2026frogs},
with related work also investigating enhanced sampling
\citep{du2026metadns}.

Among masked energy-based samplers, MDNS trains with
importance-weighted denoising cross entropy \citep{zhu2025mdns}.
Our method retains the cross-entropy objective but replaces one-hot
reconstruction targets with Gibbs soft labels, combining this
supervision with tempered importance weighting.
This use of local energy information is motivated by continuous
neural samplers, including adjoint Schr\"odinger bridge sampling,
neural flow samplers, and bridge matching samplers, which incorporate
energy-gradient information into their learning objectives
\citep{liu2025adjoint,chen2025neuralflow,blessing2026bridge}.
Single-site energy differences play a related informational role
in discrete spaces, without requiring a differentiable extension
of the energy, and are also exploited by discrete ASBS
\citep{guo2026asbs}.
Our construction converts these local comparisons into categorical
supervision.
This local conditional-matching objective is also related
to distribution-based target concrete score matching
\citep{zhang2025target}.

\paragraph{Masked diffusion models.}
Discrete diffusion with uniform or masking noise
\citep{hoogeboom2021argmax,austin2021structured}
has been formulated in continuous time
\citep{campbell2022continuous}, through ratio estimation
\citep{meng2022concrete,sun2023score,lou2024discrete,zhang2025target},
and through flow or generator matching along general probability paths
\citep{campbell2024generative,gat2024discrete,shaul2025flow,holderrieth2025generator}.
For masking processes, training reduces to a weighted cross-entropy
objective for conditional categorical predictions
\citep{sahoo2024simple,shi2024simplified}.
Under value-independent masking, the exact clean-symbol conditional
does not depend on time once the revealed context is given
\citep{ou2025absorbing,zheng2025masked}.
Masked models also admit an any-order autoregressive interpretation
\citep{uria2014deep,hoogeboom2022autoregressive,shih2022training},
while masked parallel decoding predates these diffusion formulations
\citep{ghazvininejad2019mask,chang2022maskgit,bondtaylor2022unleashing}.

Trained from data, masked diffusion has been developed for language
and protein modeling
\citep{nie2025scaling,gong2025scaling,nie2025large,ye2025dream,arriola2025block,wang2024diffusion}.
Related advances include guidance
\citep{nisonoff2025unlocking,schiff2025simple},
correctors and remasking
\citep{zhao2024informed,wang2025remasking},
reveal-order planning
\citep{liu2025think,peng2025path,kim2025train},
and energy-based or uniform-state variants
\citep{xu2025energy,sahoo2025diffusion}.
Our setting differs from these data-trained constructions:
the target is specified through an unnormalized energy, without
access to target-distributed training samples.
We nevertheless exploit the same conditional-generation structure.
For a sampled single-site reveal order, the conditional generation
probability is a tractable product of categorical prediction
probabilities \citep{hoogeboom2022autoregressive}.
This permits exact importance weights on an augmented space that
includes the generation path and its reveal order.
It does not imply that the marginal probability of a random-order
model, obtained by summing over reveal orders, is tractable.

\paragraph{Discrete interpolant and flow models.}
Discrete flow matching
and generator matching learn transition dynamics along general
interpolant paths
\citep{gat2024discrete,holderrieth2025generator}.
However, they do not consider constructing time-reversal forward and backward between two arbitrary distributions.
\citet{he2026free} develop a general framework for learning such dynamics from endpoint interpolants.
However, all these models only consider data-based training, while our interpolant framework makes use of the Gibbs label for efficient energy-based training.

\section{Experimental setup}\label{app:setup}

This section collects the setup of every experiment of \cref{sec:experiments},
one subsection per task family.  Further results are in \cref{app:further}.

\subsection{Lattice models}\label{app:lattice}

\subsubsection{Tasks}\label{app:lattice-tasks}

Both models are defined on a periodic two-dimensional lattice of $D$ sites
carrying $q$-state spins $\sigma_i\in\{1,\dots,q\}$, with
\begin{equation}
  E(\sigma) = -\!\!\sum_{\langle ij\rangle}\!\delta_{\sigma_i\sigma_j}
  \ \ \text{(Potts)},
  \qquad
  E(\sigma) = -\!\!\sum_{\langle ij\rangle}\! s_is_j,\quad
  s_i = 2\sigma_i-3\in\{\pm1\}
  \ \ \text{(Ising)},
  \label{eq:lattice-energy}
\end{equation}
each bond counted once, and $p_\beta(\sigma)=e^{-\beta E(\sigma)}/Z(\beta)$
at inverse temperature $\beta$.  The Ising energy is the $q=2$ Potts energy in
its usual convention using $\delta_{\sigma_i\sigma_j}=(1+s_is_j)/2$. 
The three phases of the Ising model are $\beta=0.30$ (disordered), $\beta=\beta_c=\tfrac12\ln(1+\sqrt2)=0.4407$
(critical) and $\beta=0.60$ (ordered) and $\beta=0.70$, $\beta\approx\beta_c=\ln(1+\sqrt3)=1.0050$,
and $\beta=1.20$ for the Potts model. The ordered phase of the Potts model has $q=3$ symmetry-related modes rather than two.
The  reference structures used to compute reference values come from a long Monte Carlo chain with cluster updates
\citep{wolff1989collective,swendsen1987nonuniversal}. Example configurations are shown in \cref{fig:configs}.

\begin{figure}[t!]
\centering
\includegraphics{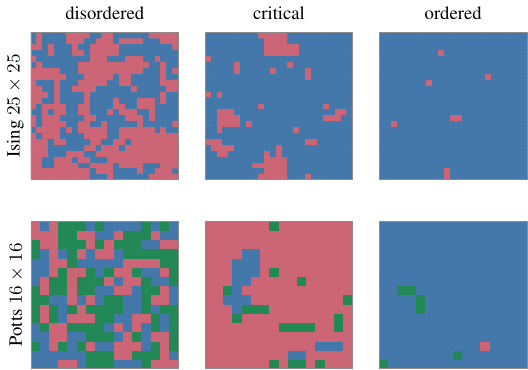}
\caption{Example configurations of the Ising and Potts model at different phases.}
\label{fig:configs}
\end{figure}

\subsubsection{Evaluation metrics}\label{app:lattice-metrics}

Each method draws $M=\NDraws$ independent configurations.  Writing $a^{(m)} = \ln\hat\nu(\sigma^{(m)}) - \ln
q(\sigma^{(m)})$ for the log importance weight of draw $m$ and
$\bar w^{(m)} = e^{a^{(m)}}/\sum_{m'} e^{a^{(m')}}$ for its self-normalized
form. The evaluation metrics are computed as follows:
\begin{align}
  \ESS &= \frac{1}{M\sum_m (\bar w^{(m)})^2} \in \big[\tfrac1M, 1\big],
  \label{eq:ess}\\
  |\Delta \ln Z| &= \Big|\ln\tfrac1M\textstyle\sum_m e^{a^{(m)}} - \ln Z\Big|,
  \label{eq:dlnz}\\
  \mTVD &= \tfrac12\textstyle\sum_{k=1}^{q} \big|\hat\pi_k - \pi^{\mathrm{ref}}_k\big|,
  \qquad \hat\pi_k = \tfrac1M\textstyle\sum_m \rho_k(\sigma^{(m)}),
  \label{eq:tvd}
\end{align}
where $\rho_k(\sigma)$ assigns each configuration to one of the $q$ ordered
modes: for Ising, $\rho_k$ is the indicator of the sign of the magnetization
$m(\sigma)=\frac1D\sum_i s_i$ (the two modes of $m>0$ and $m<0$); for Potts,
of the majority color $\arg\max_k\sum_i\delta_{\sigma_i k}$, with a tie shared
equally between the tied colors.  $\pi^{\mathrm{ref}}$ is the reference
distribution over the modes, exactly uniform at zero field
(\cref{app:lattice-tasks}). In the amortized runs, where a field breaks the
symmetry, it is estimated from reference samples.

\subsubsection{Hyperparameters}\label{app:lattice-hparams}

All methods train from the energy alone, with no reference data, and are given
the same step budget, the same batch size and the same network capacity where
the architecture admits it.  No method uses MCMC at evaluation.

\begin{table}[h]
\caption{Settings shared by every method and state point.}
\label{tab:hp-shared}
\centering
\small
\begin{tabular}{@{}>{\raggedright}p{4.6cm}p{8.2cm}@{}}
\toprule
Systems & Ising $25\times25$ ($q=2$, $D=625$); Potts $16\times16$ ($q=3$, $D=256$) \\
Phases per system & 3 (disordered, critical, ordered); see \cref{app:lattice-tasks} \\
Seeds & \NSeeds{} per (system, phase, method) \\
Training steps & 150k \\
Evaluation draws & \NDraws{} per seed, independent, no MCMC correction \\
Backbone & periodic residual CNN, 2-D circular convolutions, 96 channels, 7 residual blocks, kernel 3, dilations covering the torus (all methods but LEAPS)\\
Optimizer & AdamW \citep{loshchilov2019decoupled} without weight decay (Adam for LEAPS) \\
\bottomrule
\end{tabular}
\end{table}

\paragraph{\method{}.}
A masked interpolant.  The sampler reveals one site per network
evaluation in a uniformly random order, $D$ evaluations per draw, so the
proposal density of every draw is exact.  Training runs the fixed-point loop:
every 100 gradient steps the EMA model draws 512 configurations into a replay
buffer of 2048, each stored with its exact table of single-site energy
differences.  The denoiser is trained by soft cross-entropy onto the Gibbs labels of the buffered endpoints at freshly masked copies of them, with
the labels damped toward the EMA model's prediction ($\eta=0.1$) and the
per-endpoint losses weighted by importance weights tempered to an $\ESS$ of
0.8 within each training batch.  Batch 128, learning rate $3\times10^{-4}$,
EMA decay 0.995, gradient clipping at 1.0.

\paragraph{MDNS.}
The masked diffusion neural sampler \citep{zhu2025mdns}, run in the same
codebase with the same interpolant, backbone, sampler, buffer and optimizer as
\method{}.  It is trained by cross-entropy onto the endpoint's own token
rather than a heat-bath label, weighted by the full self-normalized importance
weights, without tempering and without damping.

\paragraph{PDNS.}
The proximal diffusion neural sampler \citep{guo2026proximal}: MDNS with its
importance weights tempered to an $\ESS$ of 0.8 within each training batch,
the same rule as \method{}.  Labels are hard and there is no damping, so PDNS
differs from \method{} in exactly the label and the damping, and from MDNS in
exactly the tempering.

\paragraph{DASBS.}
The discrete adjoint Schr\"odinger bridge sampler \citep{guo2026asbs}: a
time-conditioned controller and a time-free corrector, both on the shared
backbone, over a uniform reference CTMC with a log-linear noise schedule
($\gamma=1$, $\alpha=0.5$).  The two are trained in alternating stages of
Bregman matching, 400 controller steps and then 200 corrector steps; the
controller regresses onto the target's single-site score ratios at the
endpoint, which needs the energy-difference table.  Each batch of 128
endpoints is replicated 8 times, the buffer of 2048 is refreshed with 512
rollouts every 20 steps, and a draw takes 100 steps.  Learning rate
$10^{-3}$, EMA decay 0.9999.  With memory, the path weights give $\ln Z$
only at convergence and no importance weights, so its $|\Delta\ln Z|$ and
$\ESS$ are not reported; the memoryless variant drops the corrector and has
unbiased path weights.

\paragraph{LEAPS.}
LEAPS \citep{holderrieth2025leaps} learns a CTMC that transports the uniform
distribution to the target along the annealed path
$\rho_t\propto e^{-t\beta E}$, with a locally equivariant rate network (20
channels, kernels 3, 5, 7 and 9) and a free-energy MLP (hidden width 200),
trained on the squared residual of the Kolmogorov forward equation at
buffered trajectory states. We use training rollouts of 128 walkers every 20 steps, fill a buffer of 50 rollouts, and each loss batch
holds 128 states.  Rates are clamped at 10.  Learning rate $3\times10^{-4}$,
decayed by 0.995 every 1000 steps. Importance weights follow from the discrete
Jarzynski equality.

\subsubsection{Computational costs}\label{app:lattice-compute}
Each run trains on one 80\,GB H100 for 150k steps.  \Cref{tab:compute}
reports the training time and the number of energy calls during training,
counted per configuration.  A method calls the target for two things: the
energy $E(\sigma)$ of a configuration, for its importance weight or the
annealed path, and the table of single-site energy differences
$\{E(\sigma^{d\leftarrow n})-E(\sigma)\}_{d\le D,\,n\le q}$, which \method{}
needs for its heat-bath labels and DASBS and LEAPS for their training
targets.  For the lattice models the energy is local, and the whole table
follows from one neighbour-count pass at about the cost of a single energy
evaluation, so we count it as one call.  For a target that is only available
as a black box, the table needs the energy of every configuration that
differs from $\sigma$ in one site, that is $D(q-1)+1$ energy evaluations
instead of 1; the black-box column counts the same runs at this rate, $626$
calls per table for the Ising model and $513$ for the Potts model.

\begin{table}[h]
\caption{\textbf{Computational cost.}  Training time in hours (mean${\pm}$standard
deviation over \NSeeds{} seeds and the three phases) and the total number of
energy calls during training, with the single-site difference table counted
as one call (local) or as $D(q-1)+1$ calls (black box), the cost it would
have if the energy were not local.}
\label{tab:compute}
\centering
\footnotesize
\setlength{\tabcolsep}{4pt}
\begin{tabular}{@{}l ccc@{\hspace{10pt}} ccc@{}}
\toprule
& \multicolumn{3}{c}{Ising $25\times25$} & \multicolumn{3}{c}{Potts $16\times16$} \\
\cmidrule(lr){2-4}\cmidrule(lr){5-7}
& & \multicolumn{2}{c}{energy calls} & & \multicolumn{2}{c}{energy calls} \\
\cmidrule(lr){3-4}\cmidrule(lr){6-7}
Method & train (h) & local & black box & train (h) & local & black box \\
\midrule
\method{} (ours)  & \CmpIsingDSITrain   & \CmpIsingDSITotal   & \CmpIsingDSIBB   & \CmpPottsDSITrain   & \CmpPottsDSITotal   & \CmpPottsDSIBB   \\
MDNS              & \CmpIsingMDNSTrain  & \CmpIsingMDNSTotal  & \CmpIsingMDNSBB  & \CmpPottsMDNSTrain  & \CmpPottsMDNSTotal  & \CmpPottsMDNSBB  \\
PDNS              & \CmpIsingPDNSTrain  & \CmpIsingPDNSTotal  & \CmpIsingPDNSBB  & \CmpPottsPDNSTrain  & \CmpPottsPDNSTotal  & \CmpPottsPDNSBB  \\
DASBS             & \CmpIsingASBSTrain  & \CmpIsingASBSTotal  & \CmpIsingASBSBB  & \CmpPottsASBSTrain  & \CmpPottsASBSTotal  & \CmpPottsASBSBB  \\
DASBS, memoryless & \CmpIsingASBSMTrain & \CmpIsingASBSMTotal & \CmpIsingASBSMBB & \CmpPottsASBSMTrain & \CmpPottsASBSMTotal & \CmpPottsASBSMBB \\
LEAPS             & \CmpIsingLEAPSTrain & \CmpIsingLEAPSTotal & \CmpIsingLEAPSBB & \CmpPottsLEAPSTrain & \CmpPottsLEAPSTotal & \CmpPottsLEAPSBB \\
\bottomrule
\end{tabular}
\end{table}

\subsection{Amortized training for lattice models}\label{app:amortized}

\subsubsection{Setup}\label{app:fp-setup}
The distribution over the thermodynamic conditions $(\beta,h)$ is set as follows: Half of the samples are drawn uniformly over $[\beta_{\mathrm{min}},\beta_\mathrm{max}]\times [h_{\mathrm{min}},h_\mathrm{max}]$. The other half is drawn from an
acquisition map on an $8\times12$ grid of cells $C_k$ of the box.  Each cell
is scored by the trace of the Fisher information of the target family,
estimated from the model's own draws,
\begin{equation}
  a_k=\operatorname{Tr}\operatorname{Cov}_{\sigma\sim q(\cdot\mid\theta),\,\theta\in C_k}
  \big[\nabla_{\tilde\theta}\ln\tilde p(\sigma\mid\theta)\big],
  \label{eq:fp-acq}
\end{equation}
with $\theta=(\beta,h)$ rescaled to the unit box ($\tilde\theta$).  For Ising
the two components are $\partial_\beta\ln\tilde p=\sum_{\langle ij\rangle}s_is_j+h\sum_is_i$
and $\partial_h\ln\tilde p=\beta\sum_is_i$, so $a_k$ combines the specific heat
and the magnetic susceptibility, which peak at the phase transition.  Each
component is divided by its median over the cells, and cells are drawn with
probability proportional to $a_k^{1/2}$.  The
acquisition only decides which conditions receive more training, not the
labels or the loss, so the fixed point is unchanged. Importance weights are self-normalized per acquisition cell as 
$\ln Z(\beta,h)$ depends on $(\beta,h)$. \Cref{tab:hp-fp} lists the settings.

\begin{table}[h]
\caption{Settings of the amortized runs.}
\label{tab:hp-fp}
\centering
\small
\setlength{\tabcolsep}{5pt}
\begin{tabular}{@{}>{\raggedright}p{3.1cm}>{\raggedright}p{4.7cm}p{4.7cm}@{}}
\toprule
& Ising & Potts, $q=3$ \tabularnewline
\midrule
Lattice & $50\times50$ torus, $D=2500$ & $24\times24$ torus, $D=576$ \tabularnewline
Training box & $\beta\in[\FPIsingBetaLo,\FPIsingBetaHi]$, $|h|\le\FPIsingHmax$
             & $\beta\in[\FPPottsBetaLo,\FPPottsBetaHi]$, $|h|\le\FPPottsHmax$ \tabularnewline
Zero-field $\beta_c$ & $\tfrac12\ln(1+\sqrt2)=0.4407$ & $\ln(1+\sqrt3)=1.00505$ \tabularnewline
Network & \multicolumn{2}{p{9.6cm}}{residual CNN on the torus: 7 blocks, 96 channels,
  $3\times3$ circular convolutions, conditioned by FiLM \citep{perez2018film} on
  Fourier features of $(t,\beta,h)$; dilations $(1,1,2,4,8,1,1)$ for Ising so that
  7 blocks span the $50\times50$ torus, none for Potts} \tabularnewline
Optimizer & \multicolumn{2}{p{9.6cm}}{AdamW \citep{loshchilov2019decoupled}, learning
  rate $3\times10^{-4}$, no weight decay, gradient clipping at 1.0, EMA 0.995} \tabularnewline
Batch / buffer & 1536 / 49\,152 & 1536 / 196\,608 \tabularnewline
Damping, ESS target & \multicolumn{2}{p{9.6cm}}{$\eta=0.5$; weights tempered to
  ESS 0.8 per acquisition cell} \tabularnewline
Training & \FPIsingRounds{} rounds of 100 steps & \FPPottsRounds{} rounds of 100 steps \tabularnewline
Sampler & \multicolumn{2}{p{9.6cm}}{random-order unmasking with an exact proposal
  density, \FPIsingSteps{} (Ising) and \FPPottsSteps{} (Potts) network evaluations,
  the first 64 revealing a single site} \tabularnewline
Seeds & \FPIsingNSeeds & \FPPottsNSeeds \tabularnewline
Evaluation & \multicolumn{2}{p{9.6cm}}{$9\times9$ lattice over the box, \FPNDraws{}
  draws per cell, EMA weights of the last checkpoint, no MCMC, no resampling} \tabularnewline
\bottomrule
\end{tabular}
\end{table}

\subsection{Alloy phase diagrams}\label{app:alloy}

\subsubsection{Tasks}\label{app:alloy-tasks}

\paragraph{Semi-grand canonical ensemble.}
A binary alloy on a fixed lattice of $D$ sites is described by occupations
$\sigma=(\sigma_1,\dots,\sigma_D)\in\{-1,+1\}^D$, where $\sigma_i=+1$ marks
the first species (Ag in Ag--Pd, Au in Cu--Au).  Its configurational energy is
a cluster expansion \citep{sanchez1984generalized}, a multilinear polynomial
in the occupations,
\begin{equation}
  E(\sigma) = J_\emptyset + \sum_{\omega} J_\omega \prod_{i\in\omega}\sigma_i ,
\end{equation}
where the sum runs over the clusters $\omega$ (points, pairs and higher-order
clusters) of the orbits that the expansion retains.  We use the DFT-fitted
effective cluster interactions $J_\omega$ of \citet{damewood2022sampling},
which are also the energies used by \citet{min2026frogs} and
\citet{du2026scaling}, without subtracting a pure-element reference.  At
condition $\theta=(T,\DMu)$, with $\DMu$ the difference between the chemical
potentials of the first and second species and $\beta=1/\kB T$, the target is
\begin{equation}
  \nu_\theta(\sigma) = \frac{\tilde\nu_\theta(\sigma)}{Z(\theta)},\qquad
  \tilde\nu_\theta(\sigma) = e^{-\beta H_\theta(\sigma)},\qquad
  H_\theta(\sigma) = E(\sigma) - \frac{\DMu}{2}\sum_{i=1}^D\sigma_i .
  \label{eq:sgc}
\end{equation}
Let $n(\sigma)=\sum_i(1+\sigma_i)/2$ be the number of atoms of the first
species, so that $c=n/D$ is the composition.  Since $\sum_i\sigma_i=2n-D$,
\cref{eq:sgc} equals $E-\DMu\,n$ up to a constant that does not depend on
the configuration.  That form is the convention of \citet{min2026frogs}, and
both forms define the same distribution.

\paragraph{Systems.}
\Cref{tab:alloy-systems} lists the two cells.  Sites are indexed by their
integer coordinates in the fcc primitive basis.  This maps each cell onto a
dense periodic grid on which 3-D convolutions act, while the energy is always
evaluated on the physical fcc geometry through the cluster tables.  The
training box is the same for both systems.  The evaluation sweep is the grid
on which the reference data are tabulated.

\begin{table}[h]
\caption{The two alloy tasks.}
\label{tab:alloy-systems}
\centering
\small
\begin{tabular}{@{}lll@{}}
\toprule
& Ag--Pd & Cu--Au \\
\midrule
Lattice, cell & fcc, $5\times5\times5$ primitive & fcc, $8\times4\times4$ primitive \\
Sites $D$ & 125 & 128 \\
$\sigma_i=+1$ & Ag & Au \\
Phases & one miscibility gap & Cu$_3$Au (L1$_2$), CuAu (L1$_0$), CuAu$_3$ (L1$_2$) \\
Training box & $T\in[200,900]$\,K, $\DMu\in[-0.4,0.4]$\,eV & $T\in[200,900]$\,K, $\DMu\in[-0.4,0.4]$\,eV \\
Evaluation $T$ & 200--900\,K, step 25\,K (\AgPdNT) & 200--900\,K, step 25\,K (\CuAuNT) \\
Evaluation $\DMu$ & $\AgPdDmuMin$ to $\AgPdDmuMax$\,eV, step 0.01\,eV (\AgPdNdmu) & $\CuAuDmuMin$ to $\CuAuDmuMax$\,eV, step 0.01\,eV (\CuAuNdmu) \\
State points & \AgPdNpoints & \CuAuNpoints \\
Reference tie lines & \AgPdNties, highest at \AgPdTClose\,K & \CuAuNties, highest at \CuAuTClose\,K \\
\bottomrule
\end{tabular}
\end{table}

\paragraph{Training conditions.}
All three methods draw training conditions with $1/T$ uniform on
$[1/900,1/200]$\,K$^{-1}$ and $\DMu$ uniform on $[-0.4,0.4]$\,eV.  Because the
log-density is linear in $\beta$, sampling uniformly in $\beta$ spreads the
magnitude of the energy term evenly across the box.  \method{} replaces half of each
batch of conditions with draws from an adaptive acquisition map
(\cref{app:dsi-hparams}).

\paragraph{Exact reference.}
The reference data tabulate the fixed-composition partition functions
\begin{equation}
  Z_n(T) = \sum_{\sigma:\,n(\sigma)=n} e^{-\beta E(\sigma)}, \qquad n=0,\dots,D,
\end{equation}
at every temperature of the sweep.  Since
$\DMu$ couples only to $n$, the partition function and the composition
distribution follow in closed form on the whole plane,
\begin{equation}
  \ln Z(\theta) = \logsumexp_{n}\Big[\ln Z_n(T) + \beta\tfrac{\DMu}{2}(2n-D)\Big],
  \qquad
  p_\theta(n) = \frac{Z_n(T)\,e^{\beta\DMu(2n-D)/2}}{Z(\theta)},
  \label{eq:exact-lnz}
\end{equation}
and so does the mean composition $\langle c\rangle_\theta=\sum_n (n/D)\,p_\theta(n)$.
All errors are measured against these exact values.

\subsubsection{Constructing the phase diagram}\label{app:phase-diagram}

We follow the constructor of \citet[App.~A.4]{min2026frogs} and apply it
unchanged to every method. However, for completeness, we outline the procedures below.

\paragraph{Fixed-composition free energy from weighted draws.}
At condition $\theta$, a sampler returns draws $\sigma^{(m)}\sim q(\cdot\mid\theta)$,
$m=1,\dots,M$, with log-weights
$a^{(m)} = \ln\tilde\nu_\theta(\sigma^{(m)}) - \ln q(\sigma^{(m)}\mid\theta)$
(for FrOGS, the path weight $A_1$ of \citealp{holderrieth2025leaps}).  Since
$\E_q[e^{a}\,\mathbf{1}\{n(\sigma)=n\}] = Z_n(T)\,e^{\beta\DMu(2n-D)/2}$,
\begin{equation}
  \hat Z_n(T) = e^{-\beta\DMu(2n-D)/2}\,\frac1M\sum_{m:\,n(\sigma^{(m)})=n} e^{a^{(m)}},
  \qquad
  \hat f(c,T) = -\frac{\kB T}{D}\ln\hat Z_n(T),\quad c=n/D,
  \label{eq:zn-hat}
\end{equation}
is an unbiased estimate of $Z_n(T)$ at every composition that some draw visits.  It is a
histogram without a bin-width parameter, because composition is discrete.
Because $\hat Z$ itself is estimated from the same draws, the estimate is absolute.

\paragraph{Pooling over chemical potentials.}
$Z_n(T)$ does not depend on $\DMu$, so every condition $\theta_i=(T,\DMu_i)$
at a given temperature gives an independent estimate $\hat y_i(n)=\ln\hat Z_n$
of the same curve.  Its standard error is
$s_i(n)=N_{\mathrm{eff},i}(n)^{-1/2}$, with
$N_{\mathrm{eff},i}(n)=\big(\sum_{m\in n}e^{a^{(m)}}\big)^2/\sum_{m\in n}e^{2a^{(m)}}$
the effective sample count of the draws at $\theta_i$ that land at
composition $n$.  The estimates are merged by precision weighting,
\begin{equation}
  \hat y(n) = \frac{\sum_i w_i(n)\,\hat y_i(n)}{\sum_i w_i(n)},\qquad
  \operatorname{Var}\hat y(n) = \frac{1}{\sum_i w_i(n)},\qquad w_i=s_i^{-2}.
\end{equation}
Error bars at temperature $T$ are then widened by the Birge ratio
\citep{birge1932calculation},
$\lambda=\max\{1,\operatorname{median}_{n:\,\sum_i w_i(n)\ge 5}\chi^2_\nu(n)\}^{1/2}$
with $\chi^2_\nu(n)=\frac{1}{k_n-1}\sum_i w_i(\hat y_i-\hat y)^2$ over the
$k_n$ contributing conditions.  This correction only acts when the conditions
disagree by more than their errors predict.

\paragraph{Common tangent.}
Two phases coexist at temperature $T$ when a single line is tangent to $f(\cdot,T)$ at two
compositions $c_a<c_b$.  These are the edges of the lower convex hull of
$\{(c,\hat f(c,T))\}$.  A hull edge is accepted as a tie line only if
(i)~at least four compositions lie strictly between its ends, counted on
the composition axis $n=0,\dots,D$ rather than on the list of resolved
compositions;
(ii)~$\hat f$ lies at least three standard errors above the chord at some
interior composition, or no draw at any $\DMu$ at that temperature visited the
interior; and
(iii)~an accepted gap overlapping it exists at the next lower temperature,
chained from the lowest temperature.
Criterion (i) matters at low temperature, where ordered phases are sharp and
most compositions are never visited.  Criterion (iii) must chain on
\emph{accepted} edges, since otherwise a rejected isotherm could license the one
above it.  A two-phase region ends at the highest temperature at which its edge is still
accepted.  The reference curves in \cref{fig:alloy-pd} are the same
common-tangent construction on the exact $\ln Z_n(T)$.

\Cref{fig:alloy-fmix} shows the whole construction on Cu--Au, at the
coldest isotherm and at the last one before the two-phase region closes.  At
both, the curve estimated from \method{}'s draws follows the exact one and
yields the same tie lines, \FmixTiesModA{} of \FmixTiesRefA{} at \FmixTA\,K
and \FmixTiesModB{} of \FmixTiesRefB{} at \FmixTB\,K.

\begin{figure}[t]
\centering
\includegraphics{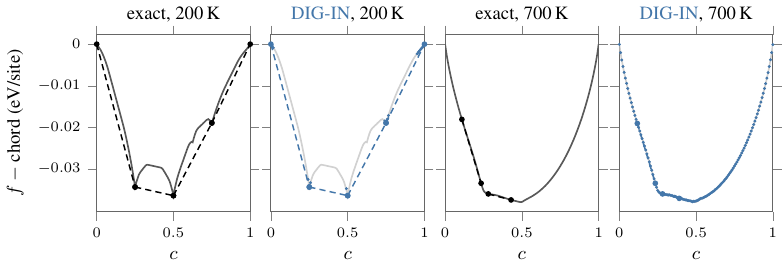}
\caption{\textbf{Common-tangent construction on Cu--Au.}  Mixing free
energy against composition at \FmixTA\,K (left pair) and \FmixTB\,K (right
pair), measured from the chord through the pure compositions.  In each pair:
exact (left) and from the draws of one \method{} run (right, exact curve in
grey).  Dashed chords are the accepted tie lines, whose ends are the phase
boundaries at that temperature.}
\label{fig:alloy-fmix}
\end{figure}

\subsubsection{Evaluation metrics}\label{app:alloy-metrics}

At every state point $\theta$ of the sweep each method draws $M=\AlloyNDraws$
independent configurations, with no MCMC moves and no resampling.  With
self-normalized weights $\bar w^{(m)}=e^{a^{(m)}}/\sum_{m'}e^{a^{(m')}}$,
\begin{align}
  \ESS(\theta) &= \frac{1}{M\sum_m (\bar w^{(m)})^2}\in\big[\tfrac1M,1\big], \\
  |\Delta\ln Z|(\theta)/D &= \frac1D\Big|\ln\frac1M\sum_m e^{a^{(m)}} - \ln Z(\theta)\Big|, \\
  |\Delta c|(\theta) &= \Big|\sum_m \bar w^{(m)}\,c(\sigma^{(m)}) - \langle c\rangle_\theta\Big|,
\end{align}
with $\ln Z(\theta)$ and $\langle c\rangle_\theta$ exact from
\cref{eq:exact-lnz}.  The ESS is that of \citet{kong1994sequential}: it
equals 1 when all weights are equal and $1/M$ when one draw carries all the
weight.  \Cref{fig:alloy-pd} reports each quantity averaged over all state points
of the sweep, then the mean and standard deviation over seeds.

\subsubsection{Hyperparameters}\label{app:alloy-hparams}\label{app:dsi-hparams}\label{app:frogs-hparams}
\Cref{tab:hp-dsi,tab:hp-frogs} list the settings of \method{} and FrOGS,
which are identical for Ag--Pd and Cu--Au.  Both methods train for 60k
steps on the same energy, conditioning box and conditioning distribution, and
are evaluated with \AlloyNDraws{} draws per state point without MCMC moves.
Code for FrOGS was not public at the time of our experiments, so we
reimplemented it following Sections~4 and A.2--A.9 of \citet{min2026frogs}.

\begin{table}[h]
\caption{\method{} hyperparameters, identical for Ag--Pd and Cu--Au.}
\label{tab:hp-dsi}
\centering
\small
\begin{tabular}{@{}>{\raggedright}p{4.6cm}p{8.2cm}@{}}
\toprule
\multicolumn{2}{@{}l}{\emph{Interpolant and sampler}} \\
Noising process & masked (absorbing state) \\
Reveal schedule & one site per network evaluation ($D$ evaluations per draw) \\
Training labels & heat-bath conditionals from the exact single-site energy-difference table, cached with the buffered endpoint \\
Loss & soft cross-entropy on masked sites \\
\midrule
\multicolumn{2}{@{}l}{\emph{Network}} \\
Backbone & periodic residual CNN, 3-D circular convolutions over the cell \\
Width, depth, kernel & 96 channels, 7 pre-norm residual blocks (GroupNorm), kernel 3 \\
Dilations & chosen so that the receptive field covers the periodic cell \\
Conditioning & Fourier features of normalized $(T,\DMu)$ (6 frequencies), 2-layer MLP, FiLM in every block \\
Output head & zero-initialized (uniform predictor at step 0) \\
\midrule
\multicolumn{2}{@{}l}{\emph{Optimization}} \\
Optimizer & AdamW \citep{loshchilov2019decoupled}, learning rate $3\times10^{-4}$, weight decay 0 \\
Batch size, steps & 1024, 60\,000 \\
Gradient clipping & global norm 1.0 \\
EMA of weights & decay 0.995 (the EMA model is the sampler) \\
\midrule
\multicolumn{2}{@{}l}{\emph{Fixed-point loop}} \\
Replay buffer & 25\,000 endpoints, pre-filled with $12\times2048$ uniform configurations \\
Exploration & 2048 fresh draws from the EMA model every 200 optimizer steps \\
Proximal damping $\eta$ & 0.5 (labels blended with the EMA model's prediction; $\eta=1$ is undamped) \\
\midrule
\multicolumn{2}{@{}l}{\emph{Importance weighting}} \\
Scheme & tempered weights, exponent and normalization per conditioning cell \\
Weight grid & $4\times4$ cells over $(T,\DMu)$ \\
Target ESS & 0.9 of the cell's buffered endpoints \\
Minimum occupancy & 24 buffered endpoints; thinner cells use the pooled exponent \\
\midrule
\multicolumn{2}{@{}l}{\emph{Conditioning acquisition}} \\
Uniform fraction & 0.5 \\
Grid, EMA, floor, power & $16\times16$ cells, moment EMA 0.9, uniform floor 0.05, score exponent 0.25 \\
\midrule
Evaluation & \AlloyNDraws{} draws per state point \\
\bottomrule
\end{tabular}
\end{table}

\begin{table}[h]
\caption{FrOGS hyperparameters, identical for Ag--Pd and Cu--Au.}
\label{tab:hp-frogs}
\centering
\small
\begin{tabular}{@{}>{\raggedright}p{4.6cm}p{8.2cm}@{}}
\toprule
\multicolumn{2}{@{}l}{\emph{Transport}} \\
Rate network & locally equivariant CNN: centre-masked convolution (kernel 3), then a state-dependent convolution (kernel 5); 24 channels; FiLM conditioning \\
Rates & Metropolis-reparametrized, clamped at 10 \\
Free-energy head & MLP, hidden width 200, input $(t,T,\DMu)$ \\
Path discretization & $\Delta t=0.02$ (50 CTMC steps) \\
Horizon curriculum & from 1 step to 50, quadratic over the first 2000 optimizer steps \\
\midrule
\multicolumn{2}{@{}l}{\emph{Prior}} \\
Network & PixelCNN-style causal 3-D convolutions \citep{vandenoord2016pixel}, 64 channels, 4 blocks, kernel 3, FiLM conditioning \citep{perez2018film} \\
\midrule
\multicolumn{2}{@{}l}{\emph{Training}} \\
Rollouts & 64 walkers every 20 optimizer steps; the buffer keeps the last 20 rollouts \\
Loss batch & 64 (time, walker) states drawn from one buffered rollout \\
MCMC moves during training & none \\
Optimizer & Adam \citep{kingma2015adam}; learning rate $3\times10^{-4}$ (transport and free-energy head), $10^{-3}$ (prior) \\
Schedule & step decay by 0.995 every 1000 steps \\
Gradient clipping, steps & global norm 1.0, 60\,000 \\
\midrule
Evaluation & \AlloyNDraws{} draws per state point, 50 CTMC steps, no MCMC moves \\
\bottomrule
\end{tabular}
\end{table}

\paragraph{Differences from the published FrOGS configuration.}
To keep the step budget and evaluation equal across methods, our runs differ
from \citet[App.~A.11, A.14]{min2026frogs}: 60k training steps rather than
200k, 50 CTMC steps rather than 125, no MCMC moves during training rather
than 10 per time step, a flat locally equivariant CNN as rate network rather
than a three-encoder, two-decoder pyramid with 16 channels, a prior with 64
channels rather than 32 and a prior learning rate of $10^{-3}$ rather than
$10^{-4}$, and a Cu--Au box that stops at 900\,K rather than 1200\,K.

\FloatBarrier
\section{Further experiments}\label{app:further}

\subsection*{Component ablation}\label{app:ablation}
\method{} adds three ingredients to the masked interpolant: heat-bath (Gibbs)
labels, importance weights tempered to a target $\ESS$, and proximal damping.
We remove one at a time on Ising $25\times25$ at the settings of
\cref{tab:lattice}: `no damping' sets $\eta=1$, `no Gibbs score' trains on the
endpoint's own token, once with tempered weights and once at full strength
(`no smoothing'), and `no importance weights' weights every endpoint equally.
Away from the ordered phase all variants converge (\cref{fig:abl,tab:abl}).
In the ordered phase only the full method is both mode-complete and accurate:
without damping \AblNoDampSingleMode{} of \AblNSeeds{} seeds collapse to one
mode, without the Gibbs score all seeds collapse at full weights and the run
never orders at tempered ones, and without importance weights both modes
survive but $|\Delta\ln Z|$ grows from \AblBaseOrdZ{} to \AblNoIWOrdZ{}
$\times10^{-3}$ nats.

\begin{figure}[h]
\centering
\includegraphics{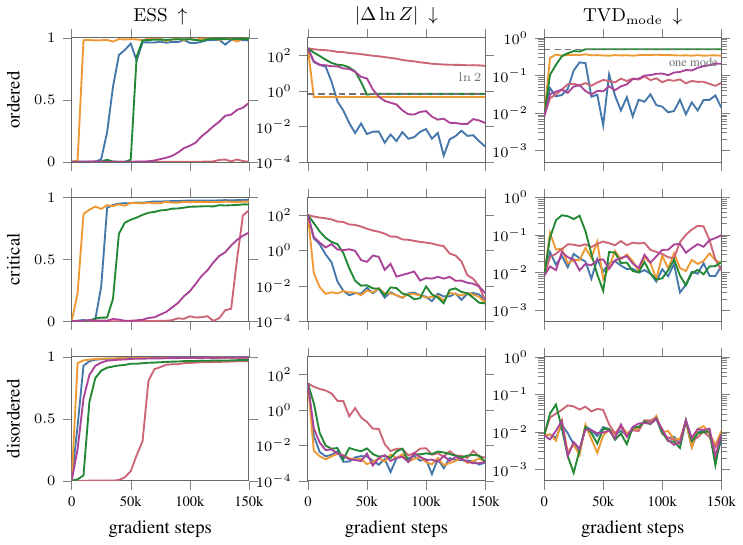}\\[0.3em]
\begin{tikzpicture}
\begin{axis}[hide axis, xmin=0, xmax=1, ymin=0, ymax=1, width=3cm, height=2cm,
    legend columns=3, legend style={font=\scriptsize, draw=none, column sep=0.7em}]
  \addlegendimage{ablline, cDSI}\addlegendentry{\method{} (baseline)}
  \addlegendimage{ablline, cNoDamp}\addlegendentry{no damping}
  \addlegendimage{ablline, cNoGibbsT}\addlegendentry{no Gibbs score}
  \addlegendimage{ablline, cNoGibbsP}\addlegendentry{no Gibbs score, no smoothing}
  \addlegendimage{ablline, cNoIW}\addlegendentry{no importance weights}
\end{axis}
\end{tikzpicture}
\caption{\textbf{Component ablation on Ising $25\times25$.}  $\ESS$,
$|\Delta\ln Z|$ and $\mTVD$ against training steps for the ordered (top),
critical (middle) and disordered phase (bottom), mean over \AblNSeeds{}
seeds.  Dashed: $\ln 2$ and $\mTVD=0.5$, the values of a sampler that holds
only one of the two ordered modes.}
\label{fig:abl}
\end{figure}

\begin{table}[h]
\caption{\textbf{Component ablation, final evaluation.}  Mean${\pm}$standard
deviation over \AblNSeeds{} seeds, $|\Delta\ln Z|$ in $10^{-3}$ nats.  PDNS
and MDNS are repeated from \cref{tab:lattice}.}
\label{tab:abl}
\centering
\scriptsize
\setlength{\tabcolsep}{1.5pt}
\begin{tabular}{@{}l ccc@{\hspace{2pt}} ccc@{\hspace{2pt}} ccc@{}}
\toprule
& \multicolumn{3}{c}{disordered, $\beta=0.30$} & \multicolumn{3}{c}{critical, $\beta=\beta_c=0.4407$} & \multicolumn{3}{c}{ordered, $\beta=0.60$} \\
\cmidrule(lr){2-4}\cmidrule(lr){5-7}\cmidrule(lr){8-10}
& $|\Delta\ln Z|$ & $\ESS$ & $\mTVD$ & $|\Delta\ln Z|$ & $\ESS$ & $\mTVD$ & $|\Delta\ln Z|$ & $\ESS$ & $\mTVD$ \\
variant & $\times10^{3}\,{\downarrow}$ & ${\uparrow}$ & ${\downarrow}$ & $\times10^{3}\,{\downarrow}$ & ${\uparrow}$ & ${\downarrow}$ & $\times10^{3}\,{\downarrow}$ & ${\uparrow}$ & ${\downarrow}$ \\
\midrule
\method{} (baseline)              & \AblBaseDisZ         & \AblBaseDisESS       & \AblBaseDisMTV       & \AblBaseCritZ        & \AblBaseCritESS      & \AblBaseCritMTV      & \AblBaseOrdZ         & \AblBaseOrdESS       & \AblBaseOrdMTV \\
no damping                  & \AblNoDampDisZ       & \AblNoDampDisESS     & \AblNoDampDisMTV     & \AblNoDampCritZ      & \AblNoDampCritESS    & \AblNoDampCritMTV    & \AblNoDampOrdZ       & \AblNoDampOrdESS     & \AblNoDampOrdMTV \\
no Gibbs score              & \AblNoGibbsTDisZ     & \AblNoGibbsTDisESS   & \AblNoGibbsTDisMTV   & \AblNoGibbsTCritZ    & \AblNoGibbsTCritESS  & \AblNoGibbsTCritMTV  & \AblNoGibbsTOrdZ     & \AblNoGibbsTOrdESS   & \AblNoGibbsTOrdMTV \\
no Gibbs, no smoothing  & \AblNoGibbsPDisZ     & \AblNoGibbsPDisESS   & \AblNoGibbsPDisMTV   & \AblNoGibbsPCritZ    & \AblNoGibbsPCritESS  & \AblNoGibbsPCritMTV  & \AblNoGibbsPOrdZ     & \AblNoGibbsPOrdESS   & \AblNoGibbsPOrdMTV \\
no importance weights       & \AblNoIWDisZ         & \AblNoIWDisESS       & \AblNoIWDisMTV       & \AblNoIWCritZ        & \AblNoIWCritESS      & \AblNoIWCritMTV      & \AblNoIWOrdZ         & \AblNoIWOrdESS       & \AblNoIWOrdMTV \\
\addlinespace
\multicolumn{10}{@{}l}{\emph{reference, from \cref{tab:lattice}}} \\
PDNS                        & \AblPDNSDisZ         & \AblPDNSDisESS       & \AblPDNSDisMTV       & \AblPDNSCritZ        & \AblPDNSCritESS      & \AblPDNSCritMTV      & \AblPDNSOrdZ         & \AblPDNSOrdESS       & \AblPDNSOrdMTV \\
MDNS                        & \AblMDNSDisZ         & \AblMDNSDisESS       & \AblMDNSDisMTV       & \AblMDNSCritZ        & \AblMDNSCritESS      & \AblMDNSCritMTV      & \AblMDNSOrdZ         & \AblMDNSOrdESS       & \AblMDNSOrdMTV \\
\bottomrule
\end{tabular}
\end{table}

\subsection{Bridging two Boltzmann distributions}\label{app:bridge}
The interpolant does not need a tractable reference: here both ends are Ising
distributions $\pi_\beta(s)\propto e^{\beta\sum_{\langle ij\rangle}s_is_j}$ on
a $\BrL\times\BrL$ torus at different $\beta$, and neither can be sampled
exactly.  The sampler is given draws from $\pi_{\beta_0}$, produced by MCMC,
and the energy of $\pi_{\beta_1}$, and learns a CTMC that carries one into the
other; path weights give an unbiased estimate of
$Z(\beta_1)/Z(\beta_0)$, which is known exactly \citep{kaufman1949crystal}.
We consider two transports: from $\beta_0=\BrCrossBetaZero$ to
$\beta_1=\BrCrossBetaOne$, across the phase transition, which splits one mode
into two, and from $\beta_0=\BrOrdBetaZero$ to $\beta_1=\BrOrdBetaOne$, within
the ordered phase, which moves two modes apart.  Both recover the free-energy
difference to within \BrCrossErrRel{} and \BrOrdErrRel{} $\times10^{-4}$ of
its value, and after reweighting both split their draws between the two
modes close to evenly (\cref{fig:bridge}).  The ordered-to-ordered transport has the
lower $\ESS$: the interpolant pairs opposite modes of the two ends half of the
time, and those paths lose their order midway.

\begin{figure}[t]
\centering
\begin{minipage}[c]{0.61\textwidth}
\centering
\includegraphics{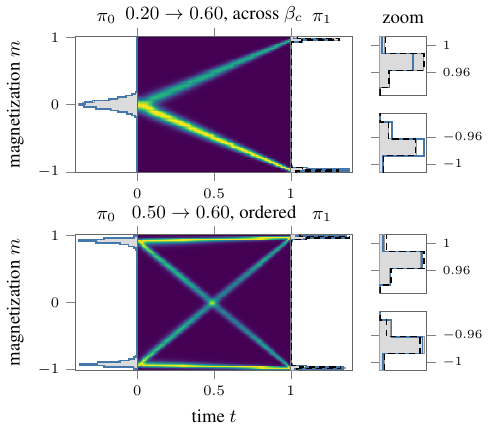}
\end{minipage}\hfill
\begin{minipage}[c]{0.37\textwidth}
\centering
\includegraphics{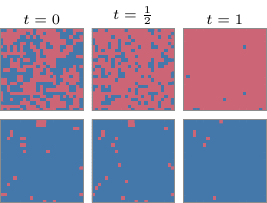}\\[0.4em]
\footnotesize
\setlength{\tabcolsep}{2pt}
\begin{tabular}{@{}lcc@{}}
\toprule
& $\BrCrossBetaZero\to\BrCrossBetaOne$ & $\BrOrdBetaZero\to\BrOrdBetaOne$ \\
\midrule
exact $\Delta\ln Z$           & \BrCrossDlnZExact & \BrOrdDlnZExact \\
$|\Delta\ln Z|\downarrow$     & \BrCrossErr       & \BrOrdErr \\
$\ESS\uparrow$                & \BrCrossESS       & \BrOrdESS \\
$\mTVD\downarrow$             & \BrCrossMTVRW     & \BrOrdMTVRW \\
\bottomrule
\end{tabular}
\end{minipage}
\caption{\textbf{Bridging two Ising temperatures.}  \emph{Left:} one row per
transport, time running left to right.  The map shows $P(m)$ along the path;
beside it, the magnetization at $\beta_0$ and $\beta_1$ (grey: Swendsen--Wang
reference, blue: the model's draws, dashed: reweighted), and the two target
modes magnified.  \emph{Right, top:} one draw carried along each transport, at
$t=0$, $\tfrac12$ and $1$ (upper row: across $\beta_c$, lower row: ordered).  \emph{Right, bottom:} $\ln Z(\beta_1)-\ln Z(\beta_0)$ error, $\ESS$ and reweighted $\mTVD$, mean $\pm$ standard
deviation over \BrNSeeds{} seeds.}
\label{fig:bridge}
\end{figure}

\subsection{Additional results}\label{app:further-results}

\paragraph{Amortized lattice models.}\label{app:fp-maps}
The $\ESS$ of the amortized samplers stays high across the $(\beta,h)$ grid
and is lowest at and just below the zero-field critical point, where the
fluctuations are largest: at worst \FPIsingESSmin{} for Ising and
\FPPottsESSmin{} for Potts (\cref{fig:fp-ess}).

\begin{figure}[h]
\centering
\includegraphics{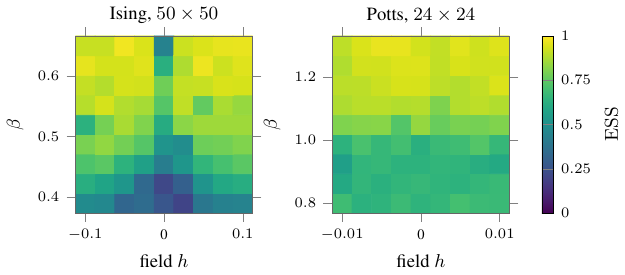}
\caption{\textbf{Effective sample size of the amortized samplers across $(\beta,h)$.}
Fraction of the \FPNDraws{} draws per cell, mean over seeds.}
\label{fig:fp-ess}
\end{figure}

\paragraph{Alloys.}\label{app:further-alloy-ess}
\method{} keeps a high $\ESS$ over nearly the whole $(T,\DMu)$ sweep of both
alloys and loses it only at a small fraction of state points, mostly on the
coldest isotherms, whereas the $\ESS$ of FrOGS is low almost everywhere (\cref{fig:alloy-ess}).

\begin{figure}[t]
\centering
\includegraphics{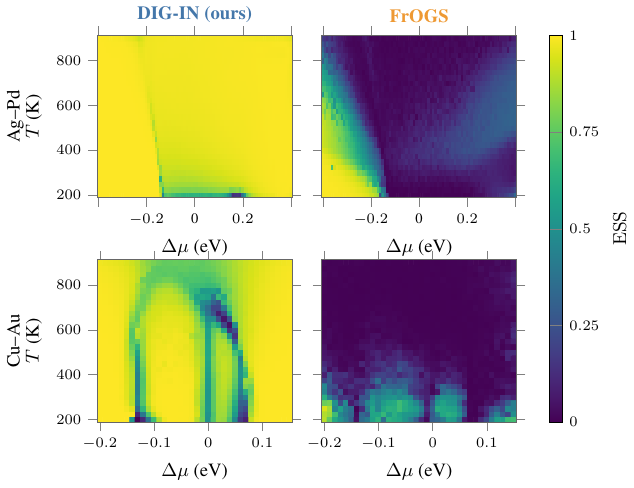}
\caption{\textbf{Effective sample size across the alloy sweep.}  Fraction of the
\AlloyNDraws{} draws per state point, mean over \AlloyNSeeds{} seeds, for
Ag--Pd (top) and Cu--Au (bottom).\placeholdernote}
\label{fig:alloy-ess}
\end{figure}

\end{document}